\theoremstyle{plain}
\newtheorem{theorem}{Theorem}
\newtheorem{lemma}{Lemma}
\theoremstyle{definition}
\theoremstyle{remark}
\newtheorem{remark}{Remark}
\renewcommand{\t}{\text}
\newcommand{\op}[1]{\operatorname{#1}}
\newcommand{\C}[1]{{\mathcal{#1}}} \newcommand{\B}[1]{{\mathbb{#1}}} \newcommand{\BF}[1]{{\mathbf{#1}}}
\begin{document}

\twocolumn[
\aistatstitle{Multi-Armed Sampling Problem and the End of Exploration}

\aistatsauthor{ 
  Mohammad Pedramfar \And Siamak Ravanbakhsh 
}

\aistatsaddress{
  Mila - Quebec AI Institute, McGill University \\
  \texttt{\{mohammad.pedramfar, siamak.ravanbakhsh\}@mila.quebec}
}
]

\begin{abstract}
This paper introduces the framework of multi-armed sampling, which serves as the sampling counterpart to the optimization problem of multi-armed bandits. Our primary motivation is to rigorously examine the exploration-exploitation trade-off in the context of sampling. We systematically define plausible notions of regret for this framework and establish corresponding lower bounds. We then propose a simple algorithm that achieves near-optimal regret bounds. Our theoretical results suggest that, in contrast to optimization, \emph{sampling barely requires any exploration.} To further connect our findings with those of multi-armed bandits, we define a continuous family of problems and associated regret measures that smoothly interpolate and unify multi-armed sampling and multi-armed bandit problems using a temperature parameter. We believe that the multi-armed sampling framework and our findings in this setting can play a foundational role in the study of sampling, including recent neural samplers, much like the role of multi-armed bandits in reinforcement learning. In particular, our work sheds light on the role of exploration (or lack thereof) and the convergence properties of algorithms for entropy-regularized reinforcement learning, fine-tuning of pretrained models and reinforcement learning with human feedback (RLHF).
\end{abstract}

\section{INTRODUCTION}

Efficiently sampling from a computationally expensive target density is a key challenge in sciences with applications ranging from drug and material discovery to ecology and climate science. 
The problem involves sampling from a target distribution with density $p$ which is described by an unnormalized energy model $p(x) = \exp(-\C{E}(x))/Z$ with normalizing constant $Z = \int \exp(-\C{E}(x)) dx$.
To make the notation similar to reinforcement learning, we refer to $r(x) = -\C{E}(x)$ as the reward and assume that we have access to an unbiased estimate of $r$, but not the normalizing constant $Z$ or samples from $p$.
Obtaining estimates of the reward function often involves expensive and potentially noisy simulations or experiments, which motivates the development of efficient sampling techniques.
Examples in high-impact application areas include energy estimates based on approximate quantum mechanical calculations for new drug and material discovery, or large-scale hydrodynamical simulations in cosmology and astrophysics.
This problem is ubiquitous in Bayesian statistics and machine learning and has been an object of study for decades, with Monte-Carlo methods \citep{duane1987hybrid,roberts1996exponential,hoffman2014no,leimkuhler2014longtime,lemos2023ggns} recently being complemented by deep generative models \citep{albergo2019flow,noe2019boltzmann,gabrie2021efficient,midgley2022flow,akhoundsadegh2024iterated}.
Models such as GFlowNets \citep{bengio21_flow_networ_gener_model_non,malkin22_trajec,jain22_biolog_sequen_desig_gflow,jain23_multi_objec_gflow,kim24_genet_gflow_sampl_effic_molec_optim,jain23_gflow_ai}
also directly address this problem.

It is important to note that there are many problems that are not commonly formulated as sampling problems but are equivalent to one.
An important example is entropy-regularized RL.
It is a well-known fact that the optimization objective of entropy regularized RL could be formulated as the minimization of
KL-divergence between the policy and a certain form of target distribution function.
In particular, in the bandit case where there is only a single state and the Q-function is the same as the reward function, the target density becomes $\exp(r(x))/Z$, which is the sampling target that is the focus of our work.
(See~\cite{ziebart10_model_purpos_adapt_behav_princ,haarnoja2017reinforcement} and Theorem~\ref{thm:MAS_to_MAB}.)
In fact, it has been shown that, in many cases, the task of learning a GFlowNet may be formulated as entropy-regularized RL~\citep{tiapkin2024generative}.

An important application of entropy-regularized RL is in RLHF and fine-tuning of large pretrained models~\cite{jaques19_way_off_polic_batch_deep,stiennon20_learn,bai22_train_helpf_harml_assis_reinf,ouyang22_train,azar24_gener_theor_parad_under_learn_human_prefer}, such as LLMs (e.g.~\cite{rafailov23_direc_prefer_optim}) and diffusion models (e.g.~\cite{fan23_dpok}).
In LLMs, the equivalence of the DPO objective with a minimization of reverse-KL divergence could be seen 
in~\cite{rafailov23_direc_prefer_optim}.
At each time step, the learner generates a sample and observes the associated reward.
The DPO objective is equivalent to minimizing $D_{KL}(p_\theta || p)$, where $p_\theta$ is the model that is being tuned, and $p$ is the target density defined by
\begin{align*}
p(y | x) := \frac{1}{Z} p_{\op{ref}} (y | x) \exp(r(y | x)),
\end{align*}
where $x$ is the context, e.g., prompt, $y$ is a sample, $p(y | x)$ is the likelihood of generating $y$ given $x$ in the optimal target model, $p_{\op{ref}}$ is the pretrained model and $Z$ is the normalization constant.
This formulation is adopted in nearly all RLHF methods for LLM alignment \citep{bai22_train_helpf_harml_assis_reinf,ouyang22_train,touvron23_llama}.
\emph{It is important to note that, unlike the classical RL objective, the goal here is not to find ``the best'' sample.
Instead, the goal is to learn a policy that generates samples according to the target distribution.}

Despite all the progress in this field, the exploration-exploitation trade-off between sampling from identified modes and exploration to find new modes has not been addressed so far. The optimization counterpart of the sampling problem, which aims to minimize the energy, or maximize the reward corresponds to established areas of machine learning, namely Bayesian Optimization \citep[BO;][]{snoek2012practical} and Reinforcement Learning \citep{sutton2018reinforcement}. 
While the importance of exploration-exploitation trade-off and experimental design in the optimization paradigms is well-studied and understood, this trade-off in the context of sampling has gone unnoticed. 

Motivated by the defining role of multi-arm bandit algorithms~\citep{berry1985bandit,lattimore2020bandit}
in addressing this trade-off in the optimization setting, we set out to investigate an analogous problem of multi-arm sampling.
In this problem, at each timestep, the learner generates a sample and observes the reward, just as in a classical multi-armed bandit.
In order to rigorously address the exploration-exploitation tradeoff in this setting, we first need a uniform way to measure the performance of an algorithm.
In classical RL theory, this is done using the notion of regret.
However, as we mentioned earlier, the goal of sampling is not to maximize a scalar, but to match a target distribution.
Thus, we start with a systematic study of what ``regret'' could plausibly mean and how different notions of regret relate to each other.
Once the notions of regret are established, we investigate the greedy algorithm and an algorithm with minor exploration, to show that, at least in the multi-armed case, unlike RL, sampling barely requires any exploration.

While previous works consider similar problems, they are limited to special cases and either obtain suboptimal regret bounds or require significant exploration.
In particular, \citep{lu2018exploration} considered this problem with a notion of regret that we refer to as forward-KL policy-level regret and obtained $\tilde{O}(T^{1/2})$ regret bound using of UCB-type exploration.
Similarly, \citep{zhao25_logar_regret_onlin_kl_regul_reinf_learn} also considered UCB-type exploration, but obtained logarithmic reverse-KL policy-level regret.
Our work is the first to show that the performance of the greedy algorithm is nearly optimal, and the nature of the exploration-exploitation tradeoff is different from that of RL.
Among prior works \citep{xiong24_iterat_prefer_learn_human_feedb,zhao25_sharp_analy_kl_regul_contex_bandit_rlhf,zhao25_logar_regret_onlin_kl_regul_reinf_learn} consider a more general setting where the number of arms is not finite. In contrast, our focus on the more basic multi-armed setting enables us to provide a more comprehensive framework and analysis that also shows the redundancy of exploration for a range of plausible settings. For a comprehensive literature review, please see Appendix~\ref{sec:related}.

In addition to its potential role for understanding the exploration-exploitation trade-off in sampling, the multi-armed sampling problem can have direct applications of its own, in areas where multi-armed bandits are found useful; examples include online advertising, recommender systems, and adaptive routing. The benefit lies in avoiding repetition and providing diverse actions, choices, or recommendations. For example, suggesting different ads to the same visitor based on their click probability can be preferred to recommending the same "optimal" choice in every visit that ends with no interaction.

\subsection{Outline and Contributions}

The main contributions of our work is listed below.
\begin{enumerate}
\item We introduce the multi-armed sampling framework as a theoretical counterpart to multi-armed bandits, tailored for sampling tasks.
\item We systematically define various potential notions of regret specific to multi-armed sampling and discuss their relations with each other.
\item We establish lower bounds for all notions of regret discussed earlier.
\item We propose a simple algorithm, called Greedy Active Sampling with Warm-up, that achieves near-optimal regret bounds with respect to all notions of regret with little to no exploration. This is a surprising result, given that sampling reduces to optimization at zero temperature limit, and optimization does require explicit exploration.
\item We then generalize our regret bound to all algorithms that behave within the confidence bound (See Theorem~\ref{thm:AS2}), thus showing that a similar regret bound holds algorithms using optimism, Thompson sampling, or even pessimism.
\item We present a continuous family of problems that interpolates between multi-armed sampling and multi-armed bandit problems, offering a unified theoretical perspective.
\end{enumerate}

\begin{table*}[t] \small
\caption{Comparison of MAS algorithms} 
\label{tbl:main}
{\centering

\begin{tabular}{ | c | c | c | c | c | c | c | c | }
\hline
\multicolumn{3}{|c|}{Regret type} & Reference & Regret & Lower bound \\
\hline
\multirow{4}{*}{\rotatebox{90}{TV}}
& \multirow{2}*{$\C{P}$} & $\C{S}$
    & Theorem~\ref{thm:AS}
      & $\tilde{O}(T^{-1/2})$
        & $\Omega(T^{-1/2})$ \\
  \cline{3-6}
& & $\C{C}$
    & Theorem~\ref{thm:AS}
      & $\tilde{O}(T^{1/2})$
        & $\Omega(T^{1/2})$ \\
  \cline{2-6}
& \multirow{2}*{$\C{A}$}
  & $\C{S}$
    & Theorem~\ref{thm:AS}
      & $\tilde{O}(T^{-1/2})$
        & $\Omega(T^{-1/2})$ \\
  \cline{3-6}
& & $\C{C}$
    & Theorem~\ref{thm:AS}
      & $\tilde{O}(T^{1/2})$
        & $\Omega(T^{1/2})$ \\
\cline{1-6}
\multirow{8}{*}{\rotatebox{90}{r-KL}}
& \multirow{6}*{$\C{P}$} & \multirow{4}*{$\C{S}$}
    & Online Iterative GSHF~\cite{xiong24_iterat_prefer_learn_human_feedb}
      & $\tilde{O}(T^{-1/2})$
        & \multirow{4}*{$\Omega(T^{-1})$} \\
    \cline{4-5}
& & & Two-Stage Mixed-Policy Sampling~\cite{zhao25_sharp_analy_kl_regul_contex_bandit_rlhf}
      & $\tilde{O}(T^{-1})$
        & \\
    \cline{4-5}
& & & KL-UCB~\cite{zhao25_logar_regret_onlin_kl_regul_reinf_learn}
      & $\tilde{O}(T^{-1})$
        & \\
    \cline{4-5}
& & & Theorem~\ref{thm:AS}
      & $\tilde{O}(T^{-1})$
        & \\
  \cline{3-6}
& & \multirow{2}*{$\C{C}$}
    & KL-UCB~\cite{zhao25_logar_regret_onlin_kl_regul_reinf_learn}
      & $\tilde{O}(1)$
        & \multirow{2}*{$\Omega(1)$} \\
    \cline{4-5}
& & & Theorem~\ref{thm:AS}
      & $\tilde{O}(1)$
        & \\
  \cline{2-6}
& \multirow{2}*{$\C{A}$}
  & $\C{S}$
    & Theorem~\ref{thm:AS}
      & $\tilde{O}(T^{-1})$
        & $\Omega(T^{-1})$ \\
  \cline{3-6}
& & $\C{C}$
    & Theorem~\ref{thm:AS}
      & $\tilde{O}(1)$
        & $\Omega(1)$ \\
\cline{1-6}
\multirow{5}{*}{\rotatebox{90}{f-KL $\dagger$}}
& \multirow{3}*{$\C{P}$}
  & $\C{S}$
    & Theorem~\ref{thm:AS}
      & $\tilde{O}(T^{-1})$
        & $\Omega(T^{-1})$ \\
  \cline{3-6}
& & \multirow{2}*{$\C{C}$}
    & DAISEE~\cite{lu2018exploration}
      & $\tilde{O}(T^{1/2})$
        & \multirow{2}*{$\Omega(1)$} \\
  \cline{4-5}
& & & Theorem~\ref{thm:AS}
      & $\tilde{O}(1)$
        & \\
  \cline{2-6}
& \multirow{2}*{$\C{A}$}
  & $\C{S}$
    & Theorem~\ref{thm:AS}
      & $\tilde{O}(T^{-1})$
        & $\Omega(T^{-1})$ \\
  \cline{3-6}
& & $\C{C}$
    & Theorem~\ref{thm:AS}
      & $\tilde{O}(1)$
        & $\Omega(1)$ \\
\hline
\end{tabular}

}
{~\\
Lower-bounds that are matched by the regret bound of our simple algorithm that forgoes exploration.
Here, regret types vary along three axes: 
1) choice statistical distance; 
2) policy-level ($\C{P}$) vs. action-level ($\C{A}$), and 
3) simple ($\C{S}$) vs. cumulative ($\C{C}$) regret. \\
Note that the problem considered in~\cite{xiong24_iterat_prefer_learn_human_feedb,zhao25_sharp_analy_kl_regul_contex_bandit_rlhf,zhao25_logar_regret_onlin_kl_regul_reinf_learn} is KL-regularized bandit, in contextual setting with continuous arms. 
Moreover, the setting of~\cite{xiong24_iterat_prefer_learn_human_feedb} uses preference feedback model, which is slightly different than the ground-truth reward model used in other two papers. As we further discuss in Section~\ref{sec:MAS_to_MAB}, KL-regularized bandit is equivalent to sampling with policy level reverse-KL regret. 
Even though the setting we consider is finite-armed setting and therefore more limited, it allows us to provide a more comprehensive formulation and analysis.
As we see, the results of~\cite{xiong24_iterat_prefer_learn_human_feedb,lu2018exploration} are sub-optimal.
The algorithm proposed in~\cite{zhao25_sharp_analy_kl_regul_contex_bandit_rlhf} uses no exploration but the proposed algorithm is not a simple greedy algorithm and the analysis is limited to simple policy-level reverse-KL regret.
The algorithm in~\cite{zhao25_logar_regret_onlin_kl_regul_reinf_learn} use UCB-style exploration to obtain near optimal bounds for simple and cumulative policy-level reverse-KL regret.
We note that~\cite{lu2018exploration} also uses a form of UCB-style exploration, but the regret analysis provides sub-optimal bounds for cumulative policy-level forward-KL regret.
In our result, we consider two algorithms, one pure greedy and another with a small amount of exploration (much lower than that of optimism-based approaches), and show that the all forms of regret considered are near-optimal. 
In fact, as we show in Theorem~\ref{thm:AS2}, our main result also directly applies to KL-UCB and show that it also enjoys near-optimal regret bound for all forms of regret.
\\
$\dagger$: In the case of forward-KL distance, we assume that environment has bounded noise.
Also see Remark~\ref{rem:policy-fkl} about the notion of regret in this case.
}
\vspace{-.2in}
\end{table*}

\section{PROBLEM SETUP}\label{sec:problem_setup}

A \emph{multi-armed bandit sampling} problem, or simply \emph{multi-armed sampling} problem, is specified with a choice of regret (as we will discuss in Section~\ref{sec:regret}) and a multi-armed bandit environment.
The interaction between the policy and the environment is identical to that of multi-armed bandits; meaning that, at each time-step, the policy selects an action based on the history so far and observes a noisy sample of the reward. The core difference is the goal of the policy, encoded within the notion of regret. 

In this section, we focus on the environment.
Formally, the environment is a collection of distributions $\nu = (P_i)_{1 \leq i \leq k}$ where $k > 1$ is the number of actions.
The agent and the environment interact sequentially over $T$ time-steps.
In each time-step $1 \leq t \leq T$, the agent chooses an arm $\BF{a}_t \in [k] := \{1, \cdots, k\}$.
Then the environment samples a return value $\BF{x}_t$ from the distribution $P_{\BF{a}_t}$ and reveals $\BF{x}_t$ to the agent.
We define the \emph{unnormalized log-probability}, \emph{negative energy}, or \emph{reward} as $r := (r_i)_{i = 1}^k$ where $r_i$ is the mean of the distribution $P_i$, for $1 \leq i \leq k$.
The target distribution of $\nu$ is defined as $p := \op{softmax}(r)$, i.e.,
$p_i = e^{r_i}/\left( \sum_{j = 1}^k e^{r_j} \right)$,
for $1 \leq i \leq k$.
We also define
$\BF{n}_{t, i} := \sum_{t' = 1}^t \BF{1}_{\BF{a}_{t'} = i}$
and
$\BF{q}_{t, i} := \BF{n}_{t, i}/t$.
In other words, $\BF{q}_t = (\BF{q}_{t, 1}, \cdots, \BF{q}_{t, k})$ is the empirical probability distribution of the actions selected up to and including time-step $t$.
For $1 \leq t \leq T+1$, let $\BF{h}_t = (\BF{a}_1, \BF{x}_1, \cdots, \BF{a}_{t-1}, \BF{x}_{t-1})$ denote the history of actions and observation up to, but not including, time-step $t$ and let $\C{H}_t := ([k] \times \B{R})^t$ denote the space of all possible histories.
A policy $\pi$ is collection of functions $(\pi_t)_{t = 1}^T$ and a probability space $\Omega^\pi$.
Here the probability space $\Omega^\pi$ captures all the randomness in the policy.
Before taking the first action, random variable $\omega^\pi$ is sampled from $\Omega^\pi$ and remains constant throughout the ``game''.
Then, at time-step $t$, the function $\pi_t : \Omega^\pi \times \C{H}_t \to [k]$ maps $(\omega^\pi, \BF{h}_t) \in \Omega^\pi \times \C{H}_t$ to the next action $\BF{a}_t$.
We say a policy is \emph{deterministic} if $\Omega^\pi$ only contains a single point, or equivalently, if $\BF{a}_t$ is a deterministic function of the history.
We will use the notation $\hat{\pi}_t := (\B{P}(\BF{a}_t = 1 | \BF{h}_t), \cdots, \B{P}(\BF{a}_t = k | \BF{h}_t)) \in \B{R}^k$ to denote the distribution of $\BF{a}_t$ conditioned on the history.
We will also use $\omega^\nu \in \Omega^\nu := \B{R}^{T \times k}$ to denote a random matrix that captures all of the randomness of the environment $\nu$.
This matrix, which we call a \emph{realization of $\nu$}, specifies the reward at each arm and each time-step, i.e., if at time-step $t$ the arm $i$ is sampled, then the observed reward will be $(\omega^\nu)_{t, i}$.

For $\sigma > 0$ we define $\C{E}_{\op{SG}}^k(\sigma^2)$ as the class of environments $\nu = (P_i)_{1 \leq i \leq k}$ where each $P_i$ is $\sigma$-subgaussian.
Similarly, we define $\C{E}_{\op{B}}^k(\sigma)$ to be the class of environments where the absolute value of the noise of each arm is at most $\sigma$.
It follows from Hoeffding's lemma that $\C{E}_{\op{B}}^k(\sigma) \subseteq \C{E}_{\op{SG}}^k(\sigma^2)$.

We refer to Appendix~\ref{apx:noise} for a discussion on the necessity of noise and Appendix~\ref{apx:softmax} for the role of softmax in the problem formulation.

\section{NOTIONS OF REGRET}\label{sec:regret}

The goal of the agent is the multi-armed sampling problem is to match the target distribution $p$.
To formalize this idea, we need to decide exactly what distribution should be close to $p$ and how we measure this closeness.
We assume a \emph{performance distribution},  $f(\nu, \pi, \omega^\nu, \omega^\pi)$, representing some notion of agent's sampling distribution, which is a deterministic function of $(\nu, \pi, \omega^\nu, \omega^\pi)$, and a statistical distance $d$ between this performance distribution and the target distribution. The \emph{$(f, d)$-regret} is then defined as 
\begin{align*}
\C{R}^{f, d}(\pi, \nu, \omega^\pi, \omega^\nu) 
:= d\left( f(\pi, \nu, \omega^\pi, \omega^\nu), p \right).
\end{align*}
Hence, an algorithm that tries to minimize $(f, d)$-regret is attempting to make the distribution $f$ close to $p$ with respect to the statistical distance $d$.
We will simply use the notation $\C{R}^{f, d}(\pi, \nu)$ or $\C{R}^{f, d}$ when there is no ambiguity.
Since $f$ is allowed to depend on $\omega^\nu$ and $\omega^\pi$, it is a random variable and therefore $(f, d)$-regret is also a random variable.
The \emph{expected $(f, d)$-regret} of policy $\pi$ over environment $\nu$ is defined as
\begin{align*}
\B{E}\left[ \C{R}^{f, d} \right]
:= \B{E}_{\omega^\nu, \omega^\pi}\left[ d\left( f(\nu, \pi, \omega^\nu, \omega^\pi), p \right) \right],
\end{align*}
where the expectation is taken with respect to the randomness of $\omega^\nu$ and $\omega^\pi$.
More generally, if $l \geq 1$ and $f = (f_1, \cdots, f_l)$ is a family of performance distributions as above, we define $(f, d)$-regret and \emph{expected $(f, d)$-regret} as their corresponding summation
\begin{align*}
\C{R}^{f, d}
&:= \sum_{s = 1}^l d\left( f_s(\nu, \pi, \omega^\nu, \omega^\pi), p \right), \\\B{E}\left[ \C{R}^{f, d} \right]
&:= \B{E}\left[ \sum_{s = 1}^l d\left( f_s(\nu, \pi, \omega^\nu, \omega^\pi), p \right) \right].
\end{align*}
This is a general notion meant to capture many plausible formalizations of the idea of regret in multi-armed sampling problems.
In the remainder of this section, we consider some possible options for $f$ and $d$.

\subsection{ The statistical distance }
We need a notion of distance in order to compare distributions.
In this work, we mostly focus on three notions: total variation distance, reverse KL distance, and forward KL distance.
Specifically, given a distribution $\hat{p}$ and the target distribution $p$, we define
$d^{\op{r-KL}}(\hat{p}, p) 
:= D_{\op{KL}}(\hat{p} || p) = \sum_{i = 1}^k \hat{p}_i \log\left(\hat{p}_i/p_i\right)$
and
$d^{\op{f-KL}}(\hat{p}, p) 
:= D_{\op{KL}}(p || \hat{p}) = \sum_{i = 1}^k p_i \log\left( p_i/\hat{p}_i\right)$,
where we use $\log$ to denote the natural logarithm and
$d^{\op{TV}}(\hat{p}, p) 
:= \sup_{A \subseteq [k]} | \hat{p}(A) - p(A) | 
= \frac{1}{2} \sum_{i = 1}^k | \hat{p}_i - p_i |$.
We simply use $d$ and drop the superscript when the type of distance is either irrelevant or clear from the context.

\subsection{Action-level vs. policy-level regret}\label{sec:action-vs-policy-regret}

Next we need to decide the performance function, i.e., the distribution that is being compared to the target distribution.

When the performance function is $\hat{\pi}_t$, we refer to the regret as the \emph{policy-level regret} denoted by $\C{PR} := d\left( \hat{\pi}_t, p \right)$.
Minimizing this notion of regret means that we require $\hat{\pi}_t$ to remain close to $p$ in every time-step.

This is a strong requirement and there are useful algorithms for which this regret is not minimized.
For example, one could use Metropolis-Hasting algorithm to sample from a target $p$ using a proposal distribution for local moves, which means $\hat{\pi}_t$ will also have a local support.
In fact, the chain distribution in Markov Chain Monte Carlo methods is the average of $\hat{\pi}_t$ over the randomness of the policy, i.e., $\B{E}_{\omega^\pi}[\hat{\pi}_t]$.
Thus, in order to handle such scenarios, we define \emph{environment policy-level regret} by $\C{PR}^{\op{env}} := d\left( \B{E}_{\omega^\pi}[\hat{\pi}_t], p \right)$.
In this case, the notion of regret is no longer defined for any specific run of the algorithm, but for a all runs of the algorithm in a given realization of the  environment, i.e., we average over all values of $\omega^\pi$ while keeping $\omega^\nu$ fixed.
Thus, the two types of expected policy-level regrets are given by
\begin{align*}
\B{E}\left[ \C{PR} \right] 
&= \B{E}_{\omega^\nu, \omega^\pi}\left[ d\left( \hat{\pi}_t, p \right) \right], \\
\B{E}\left[ \C{PR}^{\op{env}} \right] 
&= \B{E}_{\omega^\nu}\left[ d\left( \B{E}_{\omega^\pi}\left[ \hat{\pi}_t \right] , p \right) \right].
\end{align*}

One problem with policy-level regret is that it is not always meaningful.
For example, assume the environment has two arms and consider the policy that chooses arm 1 in odd time steps and arm 2 in even time steps.
If the target distribution is $p = (1/2, 1/2)$, then the policy is behaving well with respect to the target, but its policy-level regret is the statistical distance between $p$ and $(1,0)$ or $(0,1)$, which is high.

When the performance function is $\BF{q}_t$, we refer to the regret $\C{AR}(\pi) := d\left( \BF{q}_t, p \right)$ as the \emph{action-level regret}.
Note that $\BF{q}_t$ is a deterministic function of $\BF{h}_{t+1}$ and therefore it is a random variable.
Thus the \emph{expected action-level regret} is given by $\B{E}_{\omega^\nu, \omega^\pi}\left[ d\left( \BF{q}_t, p \right) \right]$.
Similar to above, we define \emph{environment action-level regret} by $\C{AR}^{\op{env}} := d\left( \B{E}_{\omega^\pi}[\BF{q}_t], p \right)$.
Thus, the two types of expected action-level regrets are given by
\begin{align*}
\B{E}\left[ \C{AR} \right] 
&= \B{E}_{\omega^\nu, \omega^\pi}\left[ d\left( \BF{q}_t, p \right) \right],\\
\B{E}\left[ \C{AR}^{\op{env}} \right] 
&= \B{E}_{\omega^\nu}\left[ d\left( \B{E}_{\omega^\pi}\left[ \BF{q}_t \right] , p \right) \right].
\end{align*}

\subsection{Simple vs. cumulative regret}

In bandit literature, simple regret refers to the cost of the action at the last time-step relative to the optimal action.
Inspired by this idea, we define the \emph{simple action-level regret} and \emph{simple policy-level regret} of policy $\pi$, up to time-step $T$ as
\begin{align*}
\C{SAR}_T(\pi) := d(\BF{q}_T, p)
,\quad
\C{SPR}_T(\pi) := d(\hat{\pi}_T, p).
\end{align*}
We use superscript to specify the type of distance, e.g., we use $\C{SAR}^{\op{TV}}_T$ to denote simple action-level total variation regret.

In the bandit setting, selecting suboptimal arms at any time-step other than the last does not contribute to simple regret.
Hence algorithms that are designed to minimize simple regret are free to explore as much as possible and do not need to consider the exploration-exploitation trade-off.
This is why optimizing for simple regret in bandit settings are sometimes referred to as pure exploration problems.
This is in contrast with the problem of minimizing cumulative regret which takes the entire history of actions into account.

In our setting, minimizing simple policy-level regret could be considered a pure exploration problem.
For example, consider the algorithm that samples each arm $(T-1)/k$ times deterministically and then samples the last arm based on the sample mean of the observations.
Such an algorithm would have low simple policy-level regret at time $T$ but not earlier.

On the other hand, minimizing simple action-level regret is not pure exploration problem as it takes the history of actions into account.
However, it is still mostly focused on the end result and not the process.
For example, if there are only two arms, $p = (1/2, 1/2)$ and $T = 1000$, the algorithm that select the first arm for the first 500 steps and the second arm for the remaining steps achieves zero simple action-level regret.

In order to formalize and measure the performance of algorithms its history, we define \emph{cumulative regret} as the sum of simple regret over all time-steps.
\begin{align*}
\C{CAR}_T(\pi) := \sum_{t = 1}^T d(\BF{q}_t, p)
,\quad
\C{CPR}_T(\pi) := \sum_{t = 1}^T d(\hat{\pi}_t, p).
\end{align*}

\begin{remark}\label{rem:policy-fkl}
When considering forward-KL regret, one issue is that if the output of the performance function does not have full support, then the regret will be infinity.
This is particularly problematic if we consider policy-level forward-KL regret.
If at any time-step, the action taken by the policy is deterministic or it is stochastic but not supported on all arms, then the simple regret at that time-step and the cumulative regret for any of the following time-steps is infinite.
Thus, in Algorithm~\ref{alg:AS}, we only consider simple policy-level forward-KL regret at times $t > Mk$ and we define 
$\C{CPR}_T^{\op{f-KL}}(\pi) := \sum_{t = Mk + 1}^T d^{\op{f-KL}}(\hat{\pi}_t, p)$.
Note that~\cite{lu2018exploration} which considered this notion of regret also skips the first $k$ time-steps in their algorithm, i.e., DAISEE, when computing their regret.
\end{remark}

Given all these possible notions of regret, it is natural to ask if one is preferable to others.
The above remark suggests not using policy-level forward-KL regret.
We postpone a complete answer to this question until Section~\ref{sec:MAS_to_MAB}.

\section{RELATION BETWEEN DIFFERENT NOTIONS OF REGRET}\label{sec:relation}

\paragraph{Statistical distance}
We start by comparing the effect of statistical distance over different notions of regret.
The core idea here is to use Pinsker's inequality and inverse Pinsker's inequality to relate total variation and KL-divergences.

\begin{theorem}\label{thm:KL-vs-TV}
For any performance distribution $f$, we have
\begin{align*}
2 \left( \C{R}^{f, d^{\op{TV}}} \right)^2
&\leq \C{R}^{f, d^{\op{r-KL}}}
\leq \frac{2}{\alpha} \left( \C{R}^{f, d^{\op{TV}}} \right)^2,\\
2 \left( \C{R}^{f, d^{\op{TV}}} \right)^2
&\leq \C{R}^{f, d^{\op{f-KL}}}
\leq \frac{2}{\alpha'} \left( \C{R}^{f, d^{\op{TV}}} \right)^2,
\end{align*}
where $\alpha = \min_i p_i$ and $\alpha' = \min_i f_i$.
Moreover, we also have
\begin{align*}
2 \B{E}\left[ \C{R}^{f, d^{\op{TV}}} \right]^2
&\leq \min \left\{ 
  \B{E}\left[ \C{R}^{f, d^{\op{r-KL}}} \right],
  \B{E}\left[ \C{R}^{f, d^{\op{f-KL}}} \right]
\right\}
\end{align*}
\end{theorem}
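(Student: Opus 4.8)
The plan is to reduce everything to a pointwise, information-theoretic statement and then lift it to expectations. For a fixed realization $(\omega^\nu, \omega^\pi)$ the performance distribution $\hat p := f(\nu, \pi, \omega^\nu, \omega^\pi)$ is a deterministic element of the simplex on $[k]$, so the first display is nothing but the chain
\[
2\, d^{\op{TV}}(\hat p, p)^2 \;\le\; D_{\op{KL}}(\hat p \,\|\, p) \;\le\; \frac{2}{\min_i p_i}\, d^{\op{TV}}(\hat p, p)^2
\]
for the reverse-KL term, together with the mirror chain in which $p$ and $\hat p$ swap roles for the forward-KL term. The core of the argument is therefore to establish these two two-sided inequalities between $d^{\op{TV}}$ and $D_{\op{KL}}$ for arbitrary distributions.

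For the lower bounds I would apply Pinsker's inequality directly. With the half-sum convention used in the paper, Pinsker states $d^{\op{TV}}(\hat p, p)^2 \le \tfrac12 D_{\op{KL}}(\hat p \,\|\, p)$, which is exactly $2(\C{R}^{f, d^{\op{TV}}})^2 \le \C{R}^{f, d^{\op{r-KL}}}$. For the forward-KL term the only additional ingredient is the symmetry $d^{\op{TV}}(\hat p, p) = d^{\op{TV}}(p, \hat p)$, so applying Pinsker to $D_{\op{KL}}(p \,\|\, \hat p)$ gives the matching bound $2(\C{R}^{f, d^{\op{TV}}})^2 \le \C{R}^{f, d^{\op{f-KL}}}$.

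For the upper bounds I would prove the inverse Pinsker inequality $D_{\op{KL}}(P \,\|\, Q) \le \tfrac{2}{\min_i Q_i}\, d^{\op{TV}}(P, Q)^2$ in three steps, instantiated with $Q = p$ (giving $\alpha = \min_i p_i$) and with $Q = \hat p$ (giving $\alpha' = \min_i \hat p_i$). First, the elementary inequality $\ln x \le x - 1$ yields $D_{\op{KL}}(P \,\|\, Q) \le \chi^2(P \,\|\, Q) = \sum_i (P_i - Q_i)^2 / Q_i$. Second, bounding every denominator below by $\min_i Q_i$ gives $\chi^2(P \,\|\, Q) \le \tfrac{1}{\min_i Q_i} \sum_i (P_i - Q_i)^2$. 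Third — the step that controls the constant — I would partition $[k]$ into the indices with $P_i \ge Q_i$ and those with $P_i < Q_i$; on each block the differences have a fixed sign, so $\sum (P_i - Q_i)^2 \le \big(\sum |P_i - Q_i|\big)^2$ there, and since $\sum_i (P_i - Q_i) = 0$ each block carries exactly $\ell_1$-mass $d^{\op{TV}}(P, Q)$, whence $\sum_i (P_i - Q_i)^2 \le 2\, d^{\op{TV}}(P, Q)^2$. Combining the three steps gives the claimed factors $2/\alpha$ and $2/\alpha'$.

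Finally, the expectation statements follow from the pointwise lower bounds by one application of Jensen's inequality. Taking expectations of $2(\C{R}^{f, d^{\op{TV}}})^2 \le \C{R}^{f, d^{\op{r-KL}}}$ and using $\B{E}[(\C{R}^{f, d^{\op{TV}}})^2] \ge \B{E}[\C{R}^{f, d^{\op{TV}}}]^2$ (Jensen for the convex map $x \mapsto x^2$) yields $2\,\B{E}[\C{R}^{f, d^{\op{TV}}}]^2 \le \B{E}[\C{R}^{f, d^{\op{r-KL}}}]$, and identically for the forward-KL term. The one subtlety to record is well-definedness: $p = \op{softmax}(r)$ always has full support, so $\alpha > 0$ and the reverse-KL quantities are finite; for the forward-KL upper bound, if $\hat p$ fails to have full support then $\alpha' = 0$ and both sides are $+\infty$, so the inequality holds vacuously. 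I expect the only genuinely delicate point to be the third step above — obtaining the sharp constant $2$ rather than the cruder $4$ that a direct $\ell_1$-to-$\ell_2$ bound would give — which is precisely what the sign-partition of $[k]$ secures.
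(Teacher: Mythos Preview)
Your proof is correct and follows the same approach as the paper: Pinsker's inequality for the lower bounds, inverse Pinsker for the upper bounds, and Jensen for the expectation statements. The only difference is that the paper simply cites the inverse Pinsker inequality (Remark~1 in \cite{sason15_upper_rényi}) as a black box, whereas you supply a self-contained elementary proof via the chain $D_{\op{KL}} \le \chi^2 \le \tfrac{1}{\min_i Q_i}\|P-Q\|_2^2 \le \tfrac{2}{\min_i Q_i}\, d^{\op{TV}}^2$; your sign-partition argument for the last step is correct and recovers the sharp constant~$2$.
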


See Appendix~\ref{apx:thm:KL-vs-TV} for the proof.

\paragraph{Regret vs environment regret}

In Section~\ref{sec:action-vs-policy-regret}, we mentioned that environment regret is weaker than regret.
This claim is formalized in the following theorem.

\begin{theorem}
If the map $q \mapsto d(q, p)$ is convex, then we have
\begin{align*}
\B{E}\left[ \C{PR} \right] 
&\geq
\B{E}\left[ \C{PR}^{\op{env}} \right],
\quad
\B{E}\left[ \C{AR} \right] 
\geq
\B{E}\left[ \C{AR}^{\op{env}} \right],
\end{align*}
for both simple and cumulative regret.
In particular, this statement holds for $d \in \{ d^{\op{TV}}, d^{\op{r-KL}}, d^{\op{f-KL}} \}$.
\end{theorem}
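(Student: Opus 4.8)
The plan is to reduce everything to a single application of Jensen's inequality, exploiting the hypothesis that $q \mapsto d(q, p)$ is convex on the probability simplex. First I would observe that for a fixed realization $\omega^\nu$ of the environment, both $\hat{\pi}_t$ and $\BF{q}_t$ are random variables taking values in the simplex, with the remaining randomness carried by $\omega^\pi$; in particular $\B{E}_{\omega^\pi}[\hat{\pi}_t]$ and $\B{E}_{\omega^\pi}[\BF{q}_t]$ are themselves valid probability distributions, being convex combinations of simplex points, so the environment regrets are well-defined.

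The main step is then, for each fixed $\omega^\nu$ and each time-step $t$, to apply Jensen's inequality to the convex map $q \mapsto d(q, p)$, giving
\begin{align*}
\B{E}_{\omega^\pi}\left[ d(\hat{\pi}_t, p) \right] \geq d\left( \B{E}_{\omega^\pi}[\hat{\pi}_t], p \right).
\end{align*}
Taking $\B{E}_{\omega^\nu}$ of both sides and using the independence of $\omega^\nu$ and $\omega^\pi$, so that $\B{E}_{\omega^\nu, \omega^\pi} = \B{E}_{\omega^\nu}\B{E}_{\omega^\pi}$, yields $\B{E}[\C{PR}] \geq \B{E}[\C{PR}^{\op{env}}]$ in the simple case. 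The action-level inequality $\B{E}[\C{AR}] \geq \B{E}[\C{AR}^{\op{env}}]$ follows identically with $\BF{q}_t$ in place of $\hat{\pi}_t$. For the cumulative versions I would simply sum the per-time-step inequalities over $t$ and invoke linearity of expectation, which commutes with the finite sum; nothing further is needed.

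It remains to verify the final claim, namely that $q \mapsto d(q, p)$ is convex for each of the three distances. For $d^{\op{TV}}$ this is immediate, since $q \mapsto \frac{1}{2}\sum_i |q_i - p_i|$ is a sum of compositions of the convex absolute value with affine maps. For $d^{\op{r-KL}}$ I would write $d^{\op{r-KL}}(q, p) = \sum_i q_i \log q_i - \sum_i q_i \log p_i$, where the second term is linear and the first is the convex negative entropy. For $d^{\op{f-KL}}$ I would write $d^{\op{f-KL}}(q, p) = \sum_i p_i \log p_i - \sum_i p_i \log q_i$, a constant plus $\sum_i (-p_i)\log q_i$, which is convex because each summand is a nonnegative multiple of the convex function $-\log$; here one adopts the convention $-\log 0 = +\infty$ so that the map is convex as an extended-real-valued function, which is exactly what Jensen requires.

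I expect the only genuinely delicate point to be the measure-theoretic bookkeeping around the conditional distribution $\hat{\pi}_t$: one must confirm that, with $\omega^\nu$ held fixed, $\hat{\pi}_t$ is a bona fide simplex-valued random variable of $\omega^\pi$ to which Jensen applies, and that the iterated expectation factors correctly given the independence of the policy and environment randomness. The convexity verifications and the passage from simple to cumulative regret are otherwise entirely routine.
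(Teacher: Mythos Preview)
Your proposal is correct and follows essentially the same route as the paper: apply Jensen's inequality to the convex map $q \mapsto d(q,p)$ with respect to the policy randomness $\omega^\pi$, then take expectation over $\omega^\nu$, and replace $\hat{\pi}_t$ by $\BF{q}_t$ for the action-level case. You add the explicit convexity verifications for the three distances and some measure-theoretic remarks, which the paper omits, but the argument is the same.
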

\begin{proof}
Since $q \mapsto d(q, p)$ is convex, we may use Jensen's inequality to see that
\begin{align*}
\B{E}_{\omega^\nu, \omega^\pi}\left[ d\left( \hat{\pi}_t , p \right) \right]
&\geq
\B{E}_{\omega^\nu}\left[ d\left( \B{E}_{\omega^\pi}\left[ \hat{\pi}_t \right] , p \right) \right],
\end{align*}
for any $1 \leq t \leq T$.
The claim follows from the definition of the notions of regret mentioned.
The proof for action-level regret is obtained similarly by replacing $\hat{\pi}_t$ with $\BF{q}_t$.
\end{proof}

\paragraph{Action-level vs policy-level regret} 

Finally, we consider the relation between action-level vs policy-level regrets.
As mentioned in Section~\ref{sec:action-vs-policy-regret}, a deterministic algorithm may have low action-level regret but high policy-level regret.
Thus, in general, we can not bound policy-level regret using action-level regret.
Instead, we try to bound action-level regret by policy-level regret.
Specifically, we show that that simple action-level regret is bounded by cumulative policy-level regret up to an additive term.
We will use this idea in the proof of Theorem~\ref{thm:AS} to bound the action-level regret of Algorithm~\ref{alg:AS}.

\begin{theorem}\label{thm:action-level-bounded-by-policy-level}
If the map $q \mapsto d(q, p)$ is convex, then we have
\begin{align*}
\C{SAR}_T - \frac{1}{T} \C{CPR}_T
&\leq d\left( \BF{q}_T, p \right) 
  - d\left( \frac{1}{T} \left( \sum_{t=1}^{T} \hat{\pi}_t \right), p \right).
\end{align*}
\end{theorem}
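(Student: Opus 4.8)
The plan is to expand both sides using the definitions of $\C{SAR}_T$ and $\C{CPR}_T$ and observe that the claim collapses to a single application of Jensen's inequality. First I would substitute $\C{SAR}_T = d(\BF{q}_T, p)$ and $\C{CPR}_T = \sum_{t=1}^T d(\hat{\pi}_t, p)$, so that the left-hand side becomes $d(\BF{q}_T, p) - \frac{1}{T}\sum_{t=1}^T d(\hat{\pi}_t, p)$. Since the right-hand side of the claim also contains the term $d(\BF{q}_T, p)$, these cancel, and the inequality to be shown is equivalent to
\begin{align*}
d\left(\frac{1}{T}\sum_{t=1}^T \hat{\pi}_t, p\right) \le \frac{1}{T}\sum_{t=1}^T d(\hat{\pi}_t, p).
\end{align*}

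Next I would invoke the hypothesis directly. By assumption the map $q \mapsto d(q,p)$ is convex, and $\frac{1}{T}\sum_{t=1}^T \hat{\pi}_t$ is the uniform convex combination of the distributions $\hat{\pi}_1, \ldots, \hat{\pi}_T$; each $\hat{\pi}_t$ is a point in the probability simplex, so their average remains in the simplex where $d(\cdot, p)$ is defined. Jensen's inequality applied to this convex combination with weights $1/T$ yields precisely the displayed inequality, which completes the argument. Because the statement is a pointwise inequality between random variables rather than an inequality in expectation, I would carry all of this out for a fixed realization $(\omega^\nu, \omega^\pi)$, for which the $\hat{\pi}_t$ take fixed values in the simplex, so Jensen applies verbatim and the bound holds for every realization.

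There is essentially no analytic obstacle here; the only thing to get right is the bookkeeping, namely recognizing that the $d(\BF{q}_T, p)$ terms on the two sides are identical and cancel, leaving a pure Jensen statement about the averaged conditional action distribution. The conceptual content lives in the statement rather than the proof: the additive slack $d(\BF{q}_T, p) - d\!\left(\frac{1}{T}\sum_{t=1}^T \hat{\pi}_t, p\right)$ measures the gap between the realized empirical action frequency $\BF{q}_T$ and the time-averaged conditional action distribution $\frac{1}{T}\sum_{t=1}^T \hat{\pi}_t$, and it is exactly this reformulation that later allows transferring a cumulative policy-level bound into a simple action-level bound in the proof of Theorem~\ref{thm:AS}.
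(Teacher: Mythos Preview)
Your proposal is correct and follows essentially the same approach as the paper: expand $\C{SAR}_T$ and $\C{CPR}_T$ by definition, then apply Jensen's inequality to the convex map $q \mapsto d(q,p)$ at the uniform average $\frac{1}{T}\sum_{t=1}^T \hat{\pi}_t$. The paper presents this as a single displayed line without the explicit cancellation step, but the argument is identical.
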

\begin{proof}
Since $d$ is convex in its first argument, we have
\begin{align*}
\C{SAR}_T &- \frac{1}{T} \C{CPR}_T
= d\left( \BF{q}_T, p \right) - \frac{1}{T} \sum_{t=1}^{T} d\left( \hat{\pi}_t , p \right)  \\
&\leq d\left( \BF{q}_T, p \right) - d\left( \frac{1}{T} \left( \sum_{t=1}^{T} \hat{\pi}_t \right), p \right).
\qedhere
\end{align*}
\end{proof}

\begin{remark}
Note that the expression in the right-hand side is not necessarily negligable.
For example, in the case of a two-armed sampling problem with target distribution $(1/2, 1/2)$ and a policy that selects uniformly randomly at each time step, all types of policy-level regret are zero.
However, simple action-level total-variation regret would be the difference between the average number of times the first arm is selected and $1/2$, which has a variance of $O(T^{-1/2})$, thus contributing $O(T^{-1/2})$ to the expected regret.
\end{remark}

\section{LOWER BOUNDS}\label{sec:lowerbounds}
In this section, we discuss lower bounds for regret bounds of multi-armed sampling problems.

First, we define a class of performance functions that do not have an unreasonable amount of access to the environment.
Specifically, we say a performance function is \emph{environment agnostic} if it can be written as a deterministic function of $(\pi, \omega^\pi, \omega^\nu)$.
Such a performance function has no direct access to the distribution of the environment and instead only depends on the realization of it, i.e., $\omega^\nu$.
As such, if two environments are sufficiently similar so that they produce similar realizations with high probability, then such a performance function would have a difficult time distinguishing between the environments.
This is the core idea we use to obtain lower bounds. 
Note that \emph{all performance functions considered in this paper are environment-agnostic.}

Next, we state the main lemma which provides a lower bound on the sum of the regret of a single policy in two different environments based on the difference between the environments.
To put it simply, if the environments are sufficiently different, then a policy that performs well in one environment can not perform well in the other.
See Appendix~\ref{apx:lem:lower-bound} for the proof.

\begin{lemma}\label{lem:lower-bound}
Let $\nu$ and $\nu'$ denote multi-armed sampling problems with reward functions $r : [k] \to \B{R}$ and $r' : [k] \to \B{R}$ and unit normal noise and let $\epsilon := \max_i | r_i - r'_i |$, $p := \op{softmax}(r)$ and $p' := \op{softmax}(r')$.
Then, for any policy $\pi$, statistical distance $d$, and environment agnostic performance function $f$ as defined above, we have
\begin{align*}
\B{E}\left[ \C{R}^{f, d}(\pi, \nu) \right]
&+ \B{E}\left[ \C{R}^{f, d}(\pi, \nu') \right] \\
&\geq \frac{1}{2} \exp \left( - \frac{1}{2} T \epsilon^2 - \epsilon \sqrt{2 T} \right) d'(p, p'),
\end{align*}
where $d'(p, p') := \inf_q d(q, p) + d(q, p')$.
\end{lemma}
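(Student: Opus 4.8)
The plan is to use a standard information-theoretic (change-of-measure) argument adapted to this sampling setting. The key insight, which the excerpt already foreshadows, is that an environment-agnostic performance function $f$ depends on the environment only through the realization $\omega^\nu$, so if two environments $\nu$ and $\nu'$ produce nearly the same distribution over realizations, then $f$ must produce nearly the same output distribution in both, and hence cannot simultaneously be close to two different targets $p$ and $p'$. First I would set up the two environments $\nu$ and $\nu'$ with reward vectors $r$ and $r'$ and unit-normal noise, so that the realization $\omega^\nu \in \B{R}^{T \times k}$ is a matrix of independent Gaussians centered at $r_i$ (respectively $r'_i$) in column $i$. Because the policy $\pi$ is fixed and the performance function is environment-agnostic, the only source of the discrepancy between $\B{E}[\C{R}^{f,d}(\pi,\nu)]$ and $\B{E}[\C{R}^{f,d}(\pi,\nu')]$ is the change in the law of $(\omega^\nu, \omega^\pi)$; since $\omega^\pi$ is shared, everything reduces to comparing the two Gaussian laws on $\omega^\nu$.

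The main chain of inequalities I would assemble is as follows. For any fixed value $q$ in the simplex, the triangle-like bound $d(q,p) + d(q,p') \geq d'(p,p')$ holds by definition of $d'$ as the infimum. Summing the two expected regrets and applying this pointwise gives a lower bound in terms of $\B{E}[\mathbf{1}\{f \approx q\}]$-type quantities; more precisely, I would lower bound the sum of expectations by $d'(p,p')$ times a probability that the two experiments can be coupled to agree. The cleanest route is to write
\begin{align*}
\B{E}_{\nu}\left[ d(f,p) \right] + \B{E}_{\nu'}\left[ d(f,p') \right]
\geq \B{E}_{\nu}\left[ d(f,p) \right] + \B{E}_{\nu}\left[ d(f,p') \right] - \left| \B{E}_{\nu'}\left[ d(f,p') \right] - \B{E}_{\nu}\left[ d(f,p') \right] \right|,
\end{align*}
but a tighter and more standard approach is to pass directly to a high-probability coupling: on the event where the likelihood ratio between the two Gaussian product measures is close to $1$, the output of $f$ has nearly the same law under both, so the two regret terms add up to at least $d'(p,p')$ on that event.

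Concretely, I would invoke the Bretagnolle–Huber inequality, or directly bound the total variation distance between the two realization laws $\B{P}_\nu^{\omega^\nu}$ and $\B{P}_{\nu'}^{\omega^\nu}$ via their KL divergence. For product Gaussians with means differing by at most $\epsilon$ in each of $T$ columns (and $k$ rows, though only the sampled entries matter), the relevant KL divergence is controlled by $\tfrac{1}{2} T \epsilon^2$, and the factor $\exp(-\tfrac{1}{2} T \epsilon^2 - \epsilon\sqrt{2T})$ in the statement strongly suggests a bound of the form $1 - d_{\op{TV}} \geq \tfrac{1}{2}\exp(-\op{KL})$ together with a correction term $\epsilon\sqrt{2T}$ coming from a tail/concentration estimate on the Gaussian likelihood ratio (a union or Chernoff bound on how far the log-likelihood ratio deviates from its mean). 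I would then combine: the probability that $f$'s output lies in any fixed measurable set changes by at most the appropriate change-of-measure factor, so
\begin{align*}
\B{E}\left[ \C{R}^{f,d}(\pi,\nu) \right] + \B{E}\left[ \C{R}^{f,d}(\pi,\nu') \right]
\geq \tfrac{1}{2}\exp\left( -\tfrac{1}{2} T \epsilon^2 - \epsilon\sqrt{2T} \right) d'(p,p').
\end{align*}

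The step I expect to be the main obstacle is making the change-of-measure rigorous while keeping the explicit constant $\exp(-\tfrac{1}{2}T\epsilon^2 - \epsilon\sqrt{2T})$. The subtlety is that $\omega^\nu$ is a full $T\times k$ matrix but only the entries actually queried by the policy enter the observed history, so the effective KL must be handled carefully — either by working with the full product measure (which overcounts to $kT$ terms but is cleaner) or by a martingale/chain-rule decomposition over the realized actions. I would opt for the full-matrix product-measure comparison to avoid conditioning subtleties, since the per-column Gaussian KL is $\tfrac{1}{2}\sum_i (r_i - r_i')^2 / 1 \le \tfrac{1}{2} k \epsilon^2$ per time-step; reconciling this with the stated $\tfrac{1}{2}T\epsilon^2$ exponent indicates the authors track only the sampled entry at each step, so I would need a careful argument that per time-step exactly one entry is revealed, giving KL contribution $\tfrac{1}{2}\epsilon^2$ per step and $\tfrac{1}{2}T\epsilon^2$ in total, with the extra $\epsilon\sqrt{2T}$ absorbing the fluctuation of the log-likelihood ratio around its mean via a sub-Gaussian tail bound.
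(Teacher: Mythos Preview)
Your high-level plan is correct and aligns with the paper: a change-of-measure argument on the full $T\times k$ realization matrix, exploiting that $f$ is environment-agnostic. Your confusion in the last paragraph stems from what appears to be a typo in the lemma statement --- the paper's own proof and its downstream use in Theorem~\ref{thm:lower-bound} both carry $Tk$, not $T$, in the exponent; the bound actually established is $\tfrac{1}{2}\exp(-\tfrac{1}{2}Tk\epsilon^2 - \epsilon\sqrt{2Tk})\,d'(p,p')$. So the full-matrix product-measure comparison you lean toward is exactly what is done, and there is no per-step argument tracking only sampled entries.

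On the specific mechanism: the paper does not invoke Bretagnolle--Huber but uses the explicit coupling you gesture at. If $\varepsilon$ is the $T\times k$ noise matrix under $\nu$ (so $\omega^\nu_{t,i} = r_i + \varepsilon_{t,i}$), then setting $\varepsilon'_{t,i} = \varepsilon_{t,i} + r_i - r'_i$ makes the observed histories under $(\nu,\varepsilon)$ and $(\nu',\varepsilon')$ literally identical, so $f$ takes the same value in both. One then lower-bounds the Gaussian density ratio $\Phi(\varepsilon)/\Phi(\varepsilon')$ on the event $\{|\sum_{t,i}\varepsilon'_{t,i}| < \sqrt{2Tk}\}$, which has probability $>1/2$ by Hoeffding, and integrates; the $\epsilon\sqrt{2Tk}$ cross-term in the exponent is exactly the log-likelihood-ratio fluctuation you anticipated. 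Your Bretagnolle--Huber alternative would also go through (via the event $\{d(f,p) < d'(p,p')/2\}$ and Markov's inequality) and would yield $\tfrac{1}{4}\exp(-\tfrac{1}{2}Tk\epsilon^2)\,d'(p,p')$, avoiding the $\epsilon\sqrt{2Tk}$ correction entirely at the cost of a factor of $2$ in the leading constant.
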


Using the above lemma, we obtain lower bounds for all forms of regret discussed earlier.

\begin{theorem}\label{thm:lower-bound}
Let $k > 1$ and $T > 4$ be the number of arms and the horizon respectively, and let $f$ be any environment-agnostic performance function as defined above.
Then, for any policy $\pi$, we have
\begin{align*}
\sup_{\nu \in \C{E}_{\op{SG}}^k(1)} \B{E}\left[ \C{R}^{f, d^{\op{TV}}}(\pi, \nu) \right]
&\geq \frac{1}{126 \sqrt{T k}},\\
\sup_{\nu \in \C{E}_{\op{SG}}^k(1)} \B{E}\left[ \C{R}^{f, d}(\pi, \nu) \right]
&\geq \frac{1}{567 T k},
\end{align*}
for $d \in \{ d^{\op{r-KL}}, d^{\op{f-KL}} \}$.
Thus, these lower bounds apply to all notions of simple regret considered in this work.
Moreover, by multiplying the bounds by $T$, we obtain corresponding lower bounds for cumulative regret.
\end{theorem}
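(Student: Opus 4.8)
The plan is to derive everything from Lemma~\ref{lem:lower-bound} by instantiating it on a carefully chosen pair of environments, and then to obtain the cumulative bounds almost for free by summing the simple bound over the $T$-member family of performance functions. The starting point is that for any pair $\nu, \nu' \in \C{E}_{\op{SG}}^k(1)$ and any policy $\pi$,
\[
\sup_{\nu'' \in \C{E}_{\op{SG}}^k(1)} \B{E}\left[ \C{R}^{f, d}(\pi, \nu'') \right]
\geq \tfrac{1}{2}\left( \B{E}\left[ \C{R}^{f, d}(\pi, \nu) \right] + \B{E}\left[ \C{R}^{f, d}(\pi, \nu') \right] \right)
\geq \tfrac{1}{4}\exp\left( -\tfrac{1}{2} T \epsilon^2 - \epsilon\sqrt{2T} \right) d'(p, p'),
\]
so it suffices to exhibit one pair for which the last quantity is as large as claimed. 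Both environments will use unit normal noise, hence lie in $\C{E}_{\op{SG}}^k(1)$, and will differ only in their reward vectors. The tension is the familiar information-theoretic one: a small $\epsilon = \max_i |r_i - r'_i|$ keeps the exponential prefactor bounded away from zero, whereas a large $d'(p,p') = \inf_q d(q,p)+d(q,p')$ requires the targets $p, p'$ to be far apart. Throughout I would fix $\epsilon = \Theta(1/\sqrt{T})$, which (using $T > 4$) pins the prefactor to an absolute constant, so the only remaining work per distance is to lower bound $d'(p,p')$.

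For total variation I would first note that $d'(p,p') = d^{\op{TV}}(p,p')$ exactly, since the triangle inequality gives $\inf_q d^{\op{TV}}(q,p) + d^{\op{TV}}(q,p') \geq d^{\op{TV}}(p,p')$ with equality attained at $q = p$. Starting from a uniform base $r \equiv 0$ and raising the rewards of a set $S$ of arms by $\epsilon$, a direct computation gives $d^{\op{TV}}(p,p') = \tfrac{|S|(k-|S|)(e^\epsilon - 1)}{k\,(|S|e^\epsilon + k - |S|)}$. The key design choice is $|S| \asymp \sqrt{k}$, which makes $|S|(k-|S|) \asymp k^{3/2}$ and hence $d^{\op{TV}}(p,p') \asymp \epsilon/\sqrt{k} \asymp 1/\sqrt{Tk}$; a single perturbed arm would only give $\epsilon/k$ and would \emph{not} suffice for the stated $\sqrt{k}$ scaling. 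Tracking the constant then yields the $\tfrac{1}{126\sqrt{Tk}}$ bound.

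For the KL distances I would evaluate the infimum defining $d'$ explicitly rather than route through Theorem~\ref{thm:KL-vs-TV}. For reverse KL the minimizer is $q_i \propto \sqrt{p_i p'_i}$, giving $d'(p,p') = -2\log \sum_i \sqrt{p_i p'_i}$ (twice the negative log Bhattacharyya coefficient); for forward KL the minimizer is the arithmetic mean $q = \tfrac12(p+p')$. Using a single perturbed arm, both evaluate to a Hellinger/$\chi^2$-type quantity of order $\epsilon^2/k \asymp 1/(Tk)$, producing the $\tfrac{1}{567\,Tk}$ bound; this direct evaluation is what buys the stated constant, since merely squaring the TV bound via Theorem~\ref{thm:KL-vs-TV} would give the strictly weaker $\tfrac{1}{7938\,Tk}$. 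Finally, the cumulative bounds follow by observing that $\C{CAR}_T$ and $\C{CPR}_T$ are exactly the $(f,d)$-regret of the \emph{family} $f = (f_1,\dots,f_T)$ with each $f_t \in \{\BF{q}_t, \hat{\pi}_t\}$ environment-agnostic; applying Lemma~\ref{lem:lower-bound} to each $f_t$ separately (the horizon-$T$ prefactor is a valid, if pessimistic, lower bound for every $t \le T$) and summing over $t$ reproduces the simple bound multiplied by $T$.

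The main obstacle is not the structure of the argument but the non-asymptotic constant bookkeeping: the leading-order estimates of $d^{\op{TV}}(p,p')$ and of the Bhattacharyya/arithmetic-mean infima must be upgraded to exact two-sided inequalities valid for \emph{all} $k > 1$ and $T > 4$, and the hidden constants in $\epsilon = \Theta(1/\sqrt{T})$ and in $|S|$ must be tuned so that the product with the exponential prefactor clears $\tfrac{1}{126}$ and $\tfrac{1}{567}$. A secondary point to verify is that the forward-KL infimum stays finite and of the right order despite the support pathology flagged in Remark~\ref{rem:policy-fkl}: because the minimizing $q = \tfrac12(p+p')$ inherits full support from the softmax targets, this is harmless for the lower bound, even though it is precisely the support issue that restricts the matching upper bound to the bounded-noise class.
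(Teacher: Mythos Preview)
Your plan has a genuine gap that stems from taking Lemma~\ref{lem:lower-bound} at face value. The prefactor in the \emph{statement} of that lemma is $\exp(-\tfrac12 T\epsilon^2 - \epsilon\sqrt{2T})$, and you exploit this by setting $\epsilon = \Theta(1/\sqrt{T})$ so that the exponent is an absolute constant. But the \emph{proof} of the lemma (Appendix~\ref{apx:lem:lower-bound}) actually delivers $\exp(-\tfrac12 Tk\epsilon^2 - \epsilon\sqrt{2Tk})$: the noise matrix is $T\times k$, and bounding $\sum_i (r_i-r'_i)^2\le k\epsilon^2$ is where the factor $k$ enters. This is also the version the paper uses when proving Theorem~\ref{thm:lower-bound}. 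With the corrected prefactor your choice $\epsilon=\Theta(1/\sqrt{T})$ forces the exponential to decay like $e^{-\Theta(k)}$, destroying the bound. Even if you refine the lemma to $\exp(-\tfrac12 T\|r-r'\|_2^2 - \cdots)$, your $|S|\asymp\sqrt{k}$ construction for TV gives $\|r-r'\|_2^2=\sqrt{k}\,\epsilon^2$ and hence $e^{-\Theta(\sqrt{k})}$, which still fails; so the $\sqrt{k}$-arm idea cannot be rescued.

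The paper's route is different on both axes. It sets $\epsilon = 1/\sqrt{Tk}$ and perturbs \emph{half} the arms by $\pm\epsilon$ around the uniform base. This keeps $Tk\epsilon^2 = 1$ (so the prefactor is the absolute constant $\tfrac12 e^{-1/2-\sqrt{2}}>1/14$) while a direct computation gives $d^{\op{TV}}(p,p')\ge \tfrac{2}{9\sqrt{Tk}}$, yielding the $\tfrac{1}{126\sqrt{Tk}}$ bound. For the KL distances the paper does \emph{not} evaluate $d'$ via Bhattacharyya or the arithmetic-mean minimizer; it reuses the same pair of environments and invokes Lemma~\ref{lem:d-prime}(ii), i.e.\ Pinsker applied \emph{inside} the two-point argument to get $d'(p,p')\ge d^{\op{TV}}(p,p')^2\ge \bigl(\tfrac{2}{9\sqrt{Tk}}\bigr)^2$, which combined with the $1/28$ prefactor gives exactly $\tfrac{1}{567\,Tk}$. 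Your remark that ``routing through Theorem~\ref{thm:KL-vs-TV} would give the strictly weaker $\tfrac{1}{7938\,Tk}$'' conflates squaring the \emph{final} TV regret bound (which does lose) with applying Pinsker at the level of $d'$ before the supremum is taken (which the paper does, and which does not lose). Your single-arm Bhattacharyya idea for KL could in principle work, but only after first proving the sharper $\|r-r'\|_2^2$-version of Lemma~\ref{lem:lower-bound}; the constants would then need to be re-tracked from scratch.
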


See Appendix~\ref{apx:thm:lower-bound} for the proof.

\section{AN OPTIMAL ALGORITHM}\label{sec:AS}

In this section, we introduce the Greedy Active Sampling with Warm-up (GASW) algorithm, which is a sampling version of the classical Explore-Then-Commit algorithm in multi-armed bandit problems.
In the case where there is no exploration, i.e. exploration factor $M$ is equal to 1, \emph{the algorithm becomes greedy} and we simply refer to it as Greedy Active Sampling (GAS).
As seen in Algorithm~\ref{alg:AS}, GAS performs random exploration for $MK$ steps, and then it samples each arm according to its reward estimate. The algorithm then continuously updates these estimates during sampling.

\begin{algorithm2e} \SetKwInOut{Input}{Input}\DontPrintSemicolon
\caption{Greedy Active Sampling (with Warm-up) (GAS / GASW)}
\label{alg:AS}
\small
\Input{Horizon $T$, number of arms $k$, and exploration factor $1 \leq M \leq T/k$}
\For{$t = 1$ to $T$}{
  \eIf{$t \leq Mk$}{
    Play $\BF{a}_t \gets (t \mod k) + 1$ \;
  }{
    Sample $1 \leq \BF{a}_t \leq k$ according to $\hat{\BF{p}}_t$ and play $\BF{a}_t$ \;
  }
  \For{$1 \leq i \leq k$}{
    $\hat{\BF{r}}_{t, i} \gets \frac{1}{\BF{n}_{t, i}} \sum_{t'=1}^t \BF{x}_{t'} \BF{1}_{\BF{a}_{t'} = i}$ \;
  }
  $\hat{\BF{p}}_{t+1} \gets \op{softmax}(\hat{\BF{r}}_t)$ \;
}
\end{algorithm2e}

The following theorem demonstrates that, unlike bandits, there is barely any need for exploration in multi-armed sampling problems.
Specifically, we see that with a negligible exploration of $O(\log(T))$, we obtain near-optimal regret bounds that differ from our lower bounds up to logarithmic terms.
Moreover, if there is no exploration at all, then the regret bounds would be multiplied by a factor that grows faster than $\log(T)$, but slower than $T^{\varepsilon}$, for any $\varepsilon > 0$.
Note that with the same amount of exploration, the Explore-Then-Commit algorithms suffer a linear regret in multi-armed bandit problems.

In the following theorem, we use the notation $\hat{\C{O}}^{1/2}$ to hide $\frac{1}{2}$-quasi-polynomial terms.
\footnote{
Specifically, we say $f(T) = \hat{\C{O}}^{1/2}(g(T))$ if $f(T) = \C{O}(g(T) h(T))$ for some $h$ that satisfies $h(T) = \exp\left( \C{O}\left(\log(T)^{1/2}\right) \right)$, which implies that $h(T) = \C{O}(T^\varepsilon)$ for all $\varepsilon > 0$.
}

\begin{theorem}\label{thm:AS}
In Algorithm~\ref{alg:AS}, if $\nu \in \C{E}_{\op{SG}}^k(\sigma^2)$ and $M \geq 36 \sigma^2 \log(T)$ and $d = d^{\op{TV}}$, then the regrets
$\C{SAR}_T^{d}$,
$\C{SAR}_T^{d, \op{env}}$,
$\C{SPR}_T^{d}$,
and $\C{SPR}_T^{d, \op{env}}$
are all bounded by $\tilde{\C{O}}(T^{-1/2})$ while the corresponding cumulative regrets are bounded by $\tilde{\C{O}}(T^{1/2})$.
On the other hand, if $d = d^{\op{r-KL}}$ or $d = d^{\op{f-KL}}$ and we also have $\nu \in \C{E}_{\op{B}}^k(\sigma)$, then the simple regrets are bounded by $\tilde{\C{O}}(T^{-1})$ while the corresponding cumulative regrets are $\tilde{\C{O}}(1)$. \\
Moreover, if we have no exploration, i.e., $M = 1$, then the same bounds hold if we replace $\tilde{\C{O}}$ with $\hat{\C{O}}^{1/2}$.
\end{theorem}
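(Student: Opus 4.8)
The plan is to reduce every one of the regret quantities to a single analytic object: the sup-norm estimation error $\|\hat{\BF{r}}_t - r\|_\infty$ together with a lower bound on the pull counts $\BF{n}_{t,i}$. First I would fix a horizon-wide \emph{good event} $E$ on which, for every arm $i$ and every $t$, the empirical mean concentrates as $|\hat{\BF{r}}_{t,i} - r_i| \le \sqrt{2\sigma^2 \log(2kT/\delta)/\BF{n}_{t,i}}$; since the per-arm samples are i.i.d.\ $\sigma$-subgaussian and $\BF{n}_{t,i}$ takes at most $T$ values, a uniform (peeling) Hoeffding bound together with a union bound over the $k$ arms gives $\B{P}(E^c) \le \delta$, and I would take $\delta = 1/T$. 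The exploration phase is what makes this useful: after the first $Mk$ rounds each arm has been pulled exactly $M$ times, and since pull counts only increase we have $\BF{n}_{t,i} \ge M$ for all $t \ge Mk$ and all $i$. With $M \ge 36\sigma^2\log T$ this forces $\|\hat{\BF{r}}_t - r\|_\infty \le \delta_0$ for a constant $\delta_0 < 1/2$ on $E$.

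The second step converts this into a regret bound. A short softmax-stability computation shows that $\|a-b\|_\infty \le \epsilon$ implies $\op{softmax}(a)_i/\op{softmax}(b)_i \in [e^{-2\epsilon},e^{2\epsilon}]$, hence $d^{\op{TV}}(\op{softmax}(a),\op{softmax}(b)) \le \tfrac12(e^{2\epsilon}-1) = \C{O}(\epsilon)$. Two consequences follow. First, the constant bound $\|\hat{\BF{r}}_t - r\|_\infty \le \delta_0$ yields $\hat{\BF{p}}_{t,i} \ge c_0 p_i$ for $t > Mk$ with $c_0 = e^{-2\delta_0}$ a constant; the conditional mean of $\BF{n}_{t,i}$ is then at least $c_0 p_i (t-Mk)$, so a Freedman/Azuma bound for the Bernoulli sum gives $\BF{n}_{t,i} \ge c_1 p_i t$ for $t \ge 2Mk$ on a further high-probability event. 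Second, feeding this back into the concentration bound gives $\|\hat{\BF{r}}_t - r\|_\infty = \C{O}(\sqrt{\log(T)/t}) =: \beta_t$, whence the policy-level simple TV regret $d^{\op{TV}}(\hat{\BF{p}}_{T+1},p) = \tilde{\C{O}}(T^{-1/2})$ and the cumulative version $\sum_t \beta_t = \tilde{\C{O}}(T^{1/2})$ (the $\C{O}(Mk)=\tilde{\C{O}}(1)$ exploration rounds are absorbed). The bad event $E^c$ contributes at most $\delta \cdot 1 = \tilde{\C{O}}(T^{-1})$ because TV is bounded by $1$. For the KL distances I would not redo the analysis but instead invoke the inverse-Pinsker half of Theorem~\ref{thm:KL-vs-TV}: $\C{R}^{f,d^{\op{r-KL}}} \le \tfrac{2}{\alpha}(\C{R}^{f,d^{\op{TV}}})^2$ and likewise for f-KL with $\alpha' = \min_i f_i$, which on $E$ is also bounded below by a constant. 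This squares the TV rates into $\beta_t^2 = \tilde{\C{O}}(\log(T)/t)$, giving $\tilde{\C{O}}(T^{-1})$ simple and $\tilde{\C{O}}(1)$ cumulative KL regret. The role of the bounded-noise assumption $\C{E}_{\op{B}}^k(\sigma)$ is exactly to control KL on $E^c$, where it is unbounded under mere subgaussianity: bounded noise gives $\|\hat{\BF{r}}_t-r\|_\infty \le \sigma$ deterministically, so KL $\le \C{O}(\sigma)$ always and the bad-event contribution is $\tilde{\C{O}}(\sigma T^{-1})$.

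Next I would obtain the action-level and environment-level statements essentially for free. For action-level regret I would combine Theorem~\ref{thm:action-level-bounded-by-policy-level}, which bounds $\C{SAR}_T - \tfrac1T\C{CPR}_T$ by $d(\BF{q}_T,p) - d(\tfrac1T\sum_t\hat{\pi}_t,p)$, with a martingale-concentration bound showing $\|\BF{q}_T - \tfrac1T\sum_t\hat{\pi}_t\|_1 = \tilde{\C{O}}(T^{-1/2})$ (the action counts concentrate around their conditional means); Lipschitzness of the distances then controls the remainder, and $\tfrac1T\C{CPR}_T = \tilde{\C{O}}(T^{-1/2})$ from the previous step. The cumulative action-level bound follows by summing $d(\BF{q}_t,p)\le d(\BF{q}_t,\bar{\pi}_t)+d(\bar{\pi}_t,p)$ with $\bar{\pi}_t = \tfrac1t\sum_{s\le t}\hat{\pi}_s$ and using convexity $d(\bar{\pi}_t,p)\le\tfrac1t\C{CPR}_t$. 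Finally, the environment-level quantities are dominated by their ordinary counterparts through the Jensen inequality $\B{E}[\C{PR}]\ge\B{E}[\C{PR}^{\op{env}}]$ (and likewise for $\C{AR}$) established earlier, so no separate work is needed.

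The main obstacle, and the place where the argument has to be designed carefully, is the apparent circularity between estimation accuracy and pull counts: accurate estimates require many pulls, but many pulls require accurate estimates so that $\op{softmax}(\hat{\BF{r}})$ does not starve an arm. The exploration phase is precisely what breaks this loop, by guaranteeing $\BF{n}_{t,i}\ge M$ for all $t\ge Mk$ and thus a constant a-priori error bound independent of the policy's own dynamics. This same mechanism also explains the $M=1$ regime. With no exploration we only have $\BF{n}_{t,i}\ge 1$, so on $E$ the uniform bound degrades to $\|\hat{\BF{r}}_t-r\|_\infty\le\C{O}(\sqrt{\log T})$; since softmax is always strictly positive, this still yields $\hat{\BF{p}}_{t,i}\ge p_i\, e^{-\C{O}(\sqrt{\log T})}$ at every step, i.e.\ the constant $c_0$ is replaced by the quasi-polynomially small factor $e^{-\C{O}(\sqrt{\log T})}$. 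Propagating this factor through the pull-count and estimation bounds multiplies every rate by $e^{\C{O}(\sqrt{\log T})}$, which is exactly the replacement of $\tilde{\C{O}}$ by $\hat{\C{O}}^{1/2}$ claimed in the theorem. I expect the bookkeeping for the Freedman bound on $\BF{n}_{t,i}$ (handling the adaptivity of $\hat{\pi}_t$) and the careful treatment of the first $\C{O}(Mk)$ rounds for the forward-KL cumulative regret (per Remark~\ref{rem:policy-fkl}) to be the most delicate of the routine parts.
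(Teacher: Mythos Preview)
Your proposal is correct and mirrors the paper's proof: both bootstrap from the exploration phase via the softmax ratio bound $\hat{\BF{p}}_{t,i}\ge e^{-2\epsilon}p_i$ to obtain linear pull growth on a Hoeffding-plus-Azuma good event, convert to TV via softmax stability (the paper's Lemma~\ref{lem:bound-on-softmax}), to KL via inverse Pinsker (Theorem~\ref{thm:KL-vs-TV}), to action-level via Theorem~\ref{thm:action-level-bounded-by-policy-level} together with the martingale $\BF{n}_{t,i}-\sum_{t'\le t}\hat{\pi}_{t',i}$, and handle $M=1$ by tracking the $e^{\C{O}(\sqrt{\log T})}$ degradation of the constants. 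One nitpick: the bounded-noise assumption is actually only needed for \emph{forward} KL on the bad event (reverse KL satisfies $D_{\op{KL}}(\hat{\pi}_t\|p)\le 2/\alpha$ unconditionally via inverse Pinsker with $d^{\op{TV}}\le 1$), though the theorem statement imposes it for both.
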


In fact, a more general result holds.

\begin{theorem}\label{thm:AS2}
Let $C \geq 0$ be a hyperparameter and let $\C{A}$ be an algorithm that behaves similar to Algorithm~\ref{alg:AS}, with the only difference being that $\C{A}$ selects $\hat{\BF{r}}_{t, i}$ such that
$\left| \hat{\BF{r}}_{t, i} - \bar{\BF{x}}_{t, i} \right| \leq 2 C \sigma \sqrt{\frac{\log(T) }{\BF{n}_{t, i}}}$,
where $\bar{\BF{x}}_{t, i} := \frac{1}{\BF{n}_{t, i}} \sum_{t'=1}^t \BF{x}_{t'} \BF{1}_{\BF{a}_{t'} = i}$.
Then regret bounds similar to those of Theorem~\ref{thm:AS} hold for $\C{A}$ as well.
\end{theorem}

See Appendix~\ref{apx:thm:AS} for the exact expressions of the regret bounds and the proof for both theorems.

\begin{figure*}
\centering
\hbox{
\includegraphics[width=\textwidth/3]{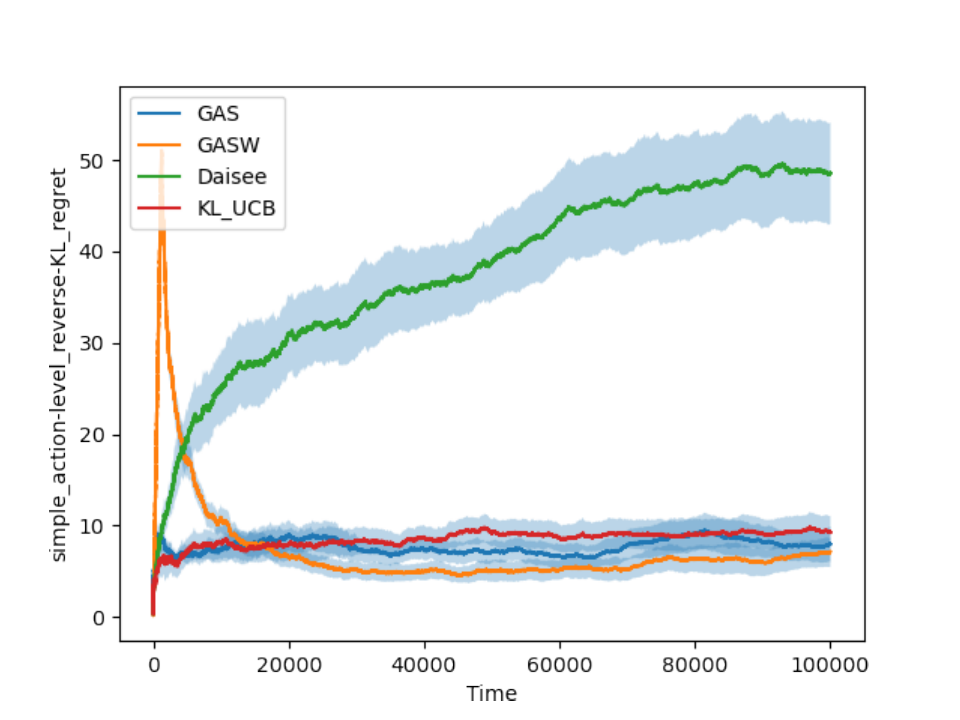}

\includegraphics[width=\textwidth/3]{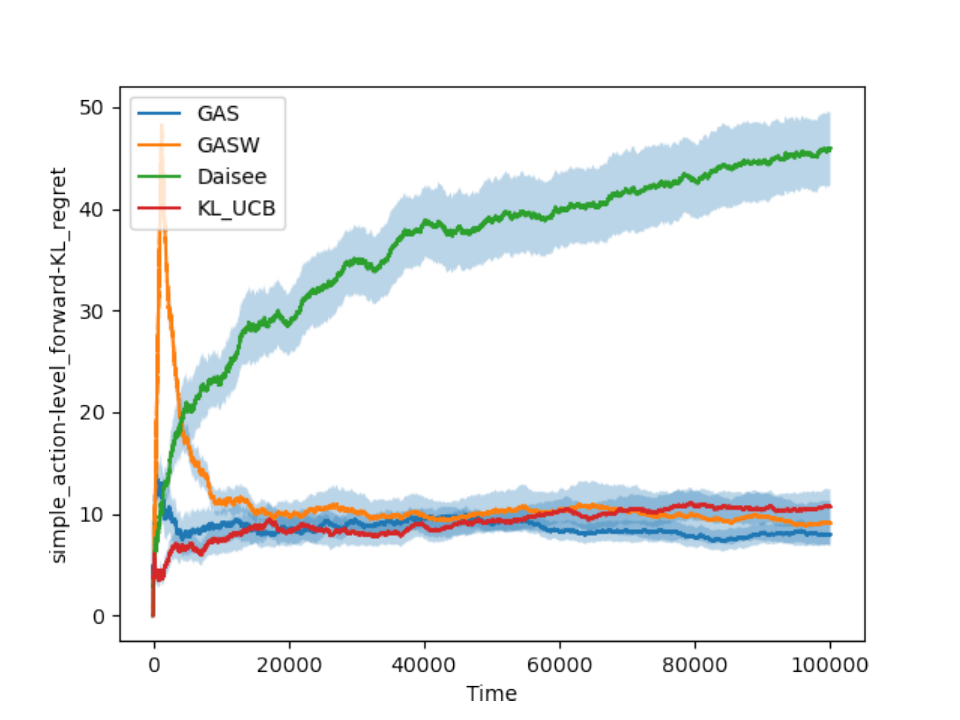}

\includegraphics[width=\textwidth/3]{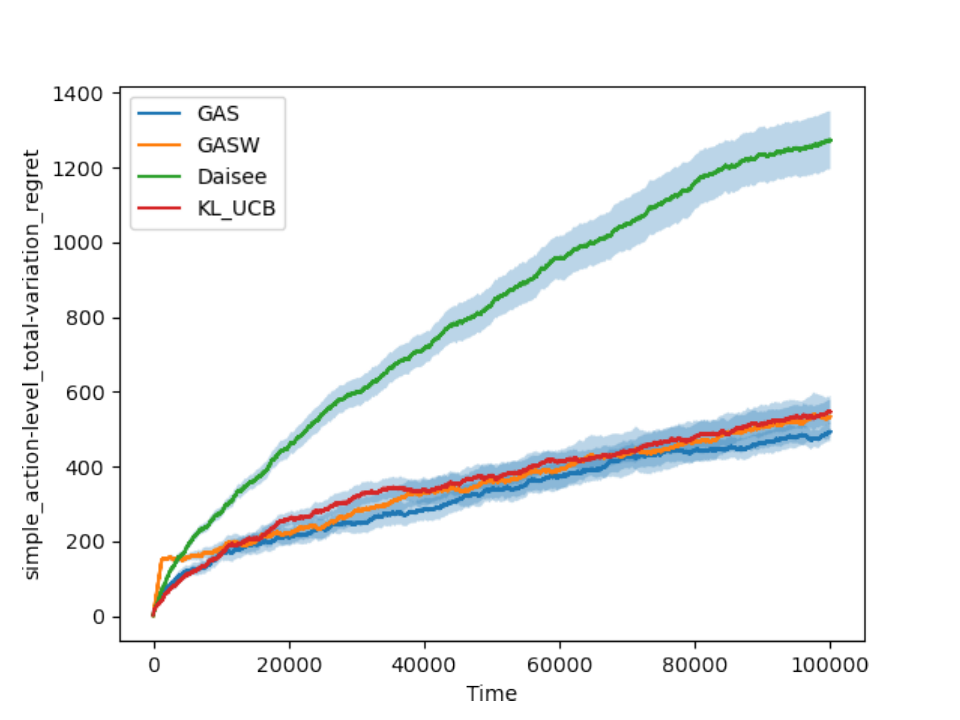}
}

\caption{
Comparison between regret, specifically $T.\C{SAR}_T$ 
, of GAS, GSAW with logarithmic exploration, DAISEE, and KL-UCB with different statistical distances.
From left to right, the distances are reverse-KL, forward-KL and total variation.
Experiments for other notions are regret are included in the appendix.
The initial increase in regret in the orange lines corresponds to the exploration phase (and its effect on action history).
Environments have unit normal noise and the reward are sampled from $[0, 1]$.
The average over 10 runs are shown.
See Appendix~\ref{apx:experiments} for details.
}
\label{fig:main}
\end{figure*}

Note that, when $C = 0$, the algorithm becomes greedy (with warm-up) and Theorem~\ref{thm:AS2} reduces to Theorem~\ref{thm:AS}.
On the other hand, Theorem~\ref{thm:AS2} covers UCB-type exploration (as in KL-UCB~\cite{zhao25_logar_regret_onlin_kl_regul_reinf_learn}) by choosing 
$\hat{\BF{r}}_{t, i} = \bar{\BF{x}}_{t, i} + 2 C \sigma \sqrt{\frac{\log(T) }{\BF{n}_{t, i}}}$.
Similarly, this result also covers pessimism, when we choose
$\hat{\BF{r}}_{t, i} = \bar{\BF{x}}_{t, i} - 2 C \sigma \sqrt{\frac{\log(T) }{\BF{n}_{t, i}}}$.
One approach to obtain regret bounds for Thompson Sampling (TS) is to use the well-known connection between TS and UCB to convert the regret bounds of UCB into regret bounds for TS. \citep{russo14_learn_optim_poster_sampl,russo18_tutor_thomp_sampl}.
Thus, a similar bound also applies to many forms of Thompson sampling algorithms.

\emph{
These regret bounds clearly demonstrate that the nature of exploration-exploitation tradeoff is different in MAB and MAS.
Specifically, when using optimism, Thompson sampling, or even pessimism, we still obtain near optimal regret bounds.
}

\section{FROM SAMPLING TO OPTIMIZATION}\label{sec:MAS_to_MAB}

Consider the multi-armed sampling problem with arm distribution $\nu$ as defined in Section~\ref{sec:problem_setup}.
We define $\C{R}_T^{\op{MAB}} := T \max_{1 \leq i \leq k} r_i - \sum_{i = 1}^k \BF{n}_{T, i} r_i$, i.e., the regret if the reward are distributed according to $\nu$

Next we let $\beta \in [0, \infty]$ and define 
$p^\beta := \op{softmax}(\beta r)$
for $\beta < \infty$ and 
$p^\infty := \lim_{\beta \to \infty} p^\beta$ 
where the limit is taken as vectors in $\B{R}^k$. $\beta$ is often referred to as the inverse-temperature.
It follows that $p^\infty$ is simply the distribution that assigns the same weight to all optimal arms $\op{argmax}_{1 \leq i \leq k} r_i$ while assigning zero weight to any other arm.
In other words, sampling according to $p^\infty$ is the same as maximizing reward.
Thus, it is natural to expect that, at the limit of $\beta \to \infty$, the multi-armed sampling problem becomes similar to the multi-armed bandit problem.
The following theorem makes this connection precise for the notion of reverse-KL regret.

First we define the \textit{multi-armed sampling action-level $\beta$-regret, simple policy level $\beta$-regret and policy level $\beta$-regret}, as
\footnote{Here we do not consider the cumulative version of action-level regret, since the simple version already captures the main ideas.}
\begin{align}\label{eqn:beta-regret}
\C{AR}_T^\beta &:= 
\begin{cases}
\frac{T}{\beta} d^{\op{r-KL}}(\BF{q}_T, p^\beta)  & \t{if } \beta < \infty, \\
\C{R}_T^{\op{MAB}}  & \t{if } \beta = \infty.
\end{cases},\\
\C{SPR}_T^\beta &:= 
\begin{cases}
\frac{T}{\beta} d^{\op{r-KL}}(\hat{\pi}_T, p^\beta)  & \t{if } \beta < \infty, \\
\C{R}_T^{\op{MAB}}  & \t{if } \beta = \infty.
\end{cases}
\end{align}
and $\C{PR}_T^\beta := \sum_{t = 1}^T \C{SPR}_t^\beta$, respectively.

\begin{theorem}\label{thm:MAS_to_MAB}
We have
\begin{align*}
\C{AR}_T^\beta
&= \C{R}_T^{\op{MAB}} - \frac{T}{\beta}H(\BF{q}_T) - \frac{T}{\beta}\log(p^\beta_*),\\
\C{PR}_T^\beta
&= \sum_{t = 1}^T \B{E} \left[ \C{SR}_t^{\op{MAB}} - \frac{1}{\beta} H(\hat{\pi}_t) \right] - \frac{T}{\beta} \log(p^\beta_*), \\
\C{SPR}_T^\beta
&= \B{E} \left[ \C{SR}_t^{\op{MAB}} - \frac{1}{\beta} H(\hat{\pi}_t) \right] - \frac{1}{\beta} \log(p^\beta_*).
\end{align*}
In particular, as $\beta \to \infty$, both $\B{E} \left[ \C{AR}_T^\beta \right]$ and $\B{E} \left[ \C{PR}_T^\beta \right]$ tend to $\B{E} \left[ \C{R}_T^{\op{MAB}} \right]$ and $\B{E} \left[ \C{SPR}_T^\beta \right]$ tends to $\B{E} \left[ \C{SR}_t^{\op{MAB}} \right]$.
\end{theorem}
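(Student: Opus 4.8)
The plan is to reduce all three identities to a single algebraic fact about the reverse-KL divergence between an arbitrary distribution $q$ on $[k]$ and the Gibbs distribution $p^\beta = \op{softmax}(\beta r)$. Writing $Z_\beta := \sum_{j=1}^k e^{\beta r_j}$ for the partition function, I would first expand
\begin{align*}
d^{\op{r-KL}}(q, p^\beta)
= -H(q) - \sum_{i=1}^k q_i \log p^\beta_i
= -H(q) - \beta \sum_{i=1}^k q_i r_i + \log Z_\beta,
\end{align*}
using $\log p^\beta_i = \beta r_i - \log Z_\beta$ and $\sum_i q_i = 1$. The second ingredient is the companion identity $\log(p^\beta_*) = \beta \max_i r_i - \log Z_\beta$, which holds because the softmax is order-preserving, so $p^\beta_*$ is attained at an optimal arm $i^\star \in \op{argmax}_i r_i$ with $p^\beta_{i^\star} = e^{\beta \max_i r_i}/Z_\beta$. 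Substituting $\frac{1}{\beta}\log Z_\beta = \max_i r_i - \frac{1}{\beta}\log(p^\beta_*)$ yields the master identity
\begin{align*}
\frac{1}{\beta} d^{\op{r-KL}}(q, p^\beta)
= \Big(\max_i r_i - \sum_{i=1}^k q_i r_i\Big) - \frac{1}{\beta} H(q) - \frac{1}{\beta}\log(p^\beta_*).
\end{align*}

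For the action-level identity I would set $q = \BF{q}_T$ and multiply by $T$; since $T \BF{q}_{T,i} = \BF{n}_{T,i}$, the energy term becomes $T\max_i r_i - \sum_i \BF{n}_{T,i} r_i = \C{R}_T^{\op{MAB}}$ by definition, giving the first claim verbatim. For the simple policy-level identity I would instead set $q = \hat\pi_T$ with the $\frac{1}{\beta}$ scaling, take expectations, and identify the energy term with the simple bandit regret through the tower rule: conditioning on the history, $\B{E}[r_{\BF{a}_t}\mid \BF{h}_t] = \sum_i (\hat\pi_t)_i r_i$, so $\B{E}[\C{SR}_t^{\op{MAB}}] = \B{E}[\max_i r_i - \sum_i (\hat\pi_t)_i r_i]$. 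Summing the simple policy-level identity over $t = 1, \dots, T$ and noting that $\frac{1}{\beta}\log(p^\beta_*)$ is constant in $t$ delivers the cumulative identity.

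For the limiting statement I would take expectations in the action-level and summed policy-level identities and let $\beta \to \infty$. Both correction terms vanish because each is bounded uniformly in $\beta$: the entropy obeys $0 \le H(\cdot) \le \log k$, and since $p^\beta_*$ is a maximum of a probability vector it lies in $[1/k, 1]$, giving $|\log(p^\beta_*)| \le \log k$; hence $\frac{T}{\beta} \B{E}[H(\BF{q}_T)]$ and $\frac{T}{\beta}\log(p^\beta_*)$ are each $O(T \log k / \beta) \to 0$, and likewise after summation. It then remains to observe that $\sum_{t=1}^T \B{E}[\C{SR}_t^{\op{MAB}}] = \B{E}[\C{R}_T^{\op{MAB}}]$, again by the tower rule applied to $\C{R}_T^{\op{MAB}} = \sum_{t=1}^T(\max_i r_i - r_{\BF{a}_t})$.

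The computations are routine once the master identity is established, so the only step needing genuine care is the limit: the subtlety is that $p^\beta_*$ need not tend to $1$ when the maximizer of $r$ is non-unique (with $m$ optimal arms it tends to $1/m$, so $\log(p^\beta_*) \to -\log m$), which means one cannot argue $\log(p^\beta_*) \to 0$ pointwise and must instead invoke the uniform bound $|\log(p^\beta_*)| \le \log k$ together with the $\frac{1}{\beta}$ prefactor. I would also flag that, for the right-hand sides to be consistent with their $\frac{1}{\beta}$ scaling, the simple policy-level quantity should be read as $\C{SPR}_T^\beta = \frac{1}{\beta}\,\B{E}[d^{\op{r-KL}}(\hat\pi_T, p^\beta)]$ rather than the action-level expression written in the definition.
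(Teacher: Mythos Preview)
Your proof is correct and follows essentially the same approach as the paper: both expand $D_{\op{KL}}(q\|p^\beta)$ in terms of the energy $\sum_i q_i r_i$, the entropy $H(q)$, and the log-partition constant, then replace the latter via $\log p^\beta_* = \beta r_* - \log Z_\beta$ and specialize to $q=\BF{q}_T$ and $q=\hat\pi_t$; the limit argument via the uniform bounds $0\le H(\cdot)\le\log k$ and $-\log k\le \log p^\beta_*\le 0$ is also identical. Your observation about the definition of $\C{SPR}_T^\beta$ is well-taken: the displayed definition in \eqref{eqn:beta-regret} is indeed a typo (it repeats the action-level expression), and the intended reading is $\C{SPR}_T^\beta = \frac{1}{\beta}\,\B{E}[d^{\op{r-KL}}(\hat\pi_T,p^\beta)]$, exactly as you inferred from the statement and as the paper's own proof computes.
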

See Appendix~\ref{apx:MAS_to_MAB} for the proof.

Note that, for any fixed problem and $\beta$, the value of $p_*^\beta$ is constant.
Therefore, this theorem demonstrates that the notion of policy-level $\beta$-regret defined here is equivalent to entropy-regularized bandits, in the sense that is commonly considered in the literature.
Also note that, in the case of $\beta=1$, we have $\C{AR}_T^\beta = T . \C{SAR}_T^{\op{r-KL}}$ and $\C{SPR}_T^\beta = \C{SPR}_T^{\op{r-KL}}$.
Thus the above theorem also demonstrates that, as $\beta \to \infty$, these notions of regret in sampling converge and the limit is the multi-armed bandit regret.
It is not surprising that, for any finite $\beta$, minimizing $\beta$-regret is not fundamentally different from minimizing $1$-regret.
As we will discuss in Appendix~\ref{apx:alg-beta}, for Algorithm~\ref{alg:AS}, replacing the step $\hat{p}_{t+1} \gets \op{softmax}(\hat{r}_t)$ with $\hat{p}_{t+1} \gets \op{softmax}(\beta \hat{r}_t)$ results in an algorithm that achieves low $\beta$-regret.
We refer to Appendix~\ref{apx:alg-beta} for more details.

\section*{Acknowledgments}
This research is in part supported by CIFAR AI Chairs program, NSERC Discovery and IVADO IAR3.

\clearpage
\bibliographystyle{apalike}
\bibliography{references}

\clearpage
\onecolumn
\appendix

\section{RELATED WORKS}\label{sec:related}

We discussed some of the related works, specifically those related to the fine-tuning of pretrained models in the introduction.
In the following, we discuss other related works.

\subsection{Multi-armed bandits}\label{sec:related:MAB}
Originating from the work of \citep{robbins52_some}, the multi-armed bandit problem captures the fundamental trade-off between exploration—gathering information about the potential rewards of each arm—and exploitation—leveraging the current knowledge to maximize immediate rewards.
Over the decades, numerous algorithms have been developed to effectively address this problem.
Notable among these are the \emph{Upper Confidence Bound (UCB)} algorithms introduced by \citep{Auer2003UsingCB}, which select arms based on optimistic estimates of their potential rewards, and \emph{Thompson Sampling} \citep{thompson33,pmlr-v23-agrawal12}, a Bayesian approach that balances exploration and exploitation by sampling from the posterior distributions of each arm's rewards.
Another classical algorithm is \emph{Explore-Then-Commit (ETC)} algorithm, which is based on ‘certainty equivalence with forcing’ algorithm of~\citep{robbins52_some}.
ETC initially explores all arms for a predetermined number of rounds before committing to the arm with the highest estimated reward, and it has been widely studied for its simplicity and effectiveness~\citep{lattimore2020bandit}.
The distinct separation of exploration and exploitation makes ETC an excellent candidate for us to adapt to the multi-armed sampling problem, as it enables the study of the exploration-exploitation trade-off in one of the most simple scenarios.
Note that ETC is a suboptimal algorithm that requires $\Omega(T^{2/3})$ exploration to guarantee regret of $O(T^{2/3})$ while the optimal minmax regret bound is $O(T^{1/2})$.

\subsection{Markov Chain Monte Carlo}\label{sec:related:MCMC}

\emph{Markov Chain Monte Carlo (MCMC)} methods are a class of algorithms widely employed for sampling from complex probability distributions
\citep{metropolis53_equat_state_calcul_fast_comput_machin,hastings70_monte_carlo_markov,geman84_stoch_relax_gibbs_distr_bayes_restor_images}.
MCMC approaches construct a Markov chain whose stationary distribution matches the target distribution, enabling the generation of representative samples through iterative stochastic processes. 

Convergence of Markov chains is typically measured based on their mixing time.
The fundamental theorem of Markov chains states that, under suitable conditions, the total variation distance of the chain distribution to the target distribution converges exponentially; see~\citet{levin2017markov}.
As discussed in Section~\ref{sec:action-vs-policy-regret}, the chain distribution corresponds to $\B{E}_{\omega^\pi}[\hat{\pi}_t]$ in our notation.
Therefore the fundamental theorem of Markov provides bounds for $\C{SPR}^{\op{env}, \op{TV}}$.
At first glance, this seems to contradict the lower bound in Theorem~\ref{thm:lower-bound}.
However, Theorem~\ref{thm:lower-bound} can only be applied when the environment is stochastic.
In a deterministic case, typically assumed by MCMC methods, we only need to sample each arm once to find the exact target distribution. 
Thus, after sampling each of the $k$ arms once, we may simply sample according to the exact target distribution which results in $\C{SPR}^{\op{env}, d}_t = 0$, for any statistical distance $d$ and all $t \geq k+1$.

\subsection{Adaptive importance sampling and DAISEE}\label{sec:related:AIS}

Adaptive Importance Sampling is a Monte Carlo technique in which proposal distributions used for sampling are iteratively updated based on the result of the evaluation of previously sampled points \cite{cappe2008adaptive}.

Closely related to our work is~\citep{lu2018exploration} which frames AIS as an online learning algorithm.
Their formulation could be seen as a form of multi-armed sampling problem with cumulative policy-level forward-KL regret.
They propose an algorithm, namely DAISEE, which uses the \emph{optimism principle} as an explicit exploration mechanism, similar to that of UCB in multi-armed bandit problems.
They obtain a regret bound of $\tilde{O}(T^{1/2})$.
In contrast, for the same notion of regret, we obtain a regret bound of $\tilde{O}(1)$ with minor exploration and $\hat{O}^{1/2}(1)$ with no exploration.
It is important to note that the exploration in DAISEE is done by adding the confidence bound to unnormalized probabilities.
This is in contrast with our algorithm and also KL-UCB~\citep{zhao25_logar_regret_onlin_kl_regul_reinf_learn} where the the optimism term is added to unnormalized \emph{log}-probabilities.
\cref{fig:main} compares GAS, GASW with logarithmic exploration, and KL-UCB  with DAISEE, where other algorithms seem to outperform DAISEE.

\subsection{Neural Samplers}
Neural networks are increasingly used in the context of sampling through diverse means. Some are inspired by ideas from deep generative models, such as diffusion \citep[e.g.,][]{zhang2021path,vargas2023denoising} and normalizing flows \citep[e.g.,][]{noe2019boltzmann,midgley2022flow}, and some 
are inspired by ideas from RL, including generative flow networks \citep{bengio2021flow,bengio2023gflownet} and entropy-regularized RL \citep{fox2015taming,haarnoja2017reinforcement} (and its applications in fine-tuning of pretrained models as we have discussed in the introduction); see also \citep{tiapkin2024generative}. The fact that exploration does not seem to become an issue in these sampling counterparts to RL is consistent with our findings. An exception is  
 \cite{rector2023thompson} which uses Thompson sampling to encourage exploration for sampling in generative flow networks.

\section{THE NECESSITY OF NOISE}\label{apx:noise}

Given that we are working in the finite-arm setting, it is necessary to assume that observations are noisy. 
Indeed, in both MAS and MAB, if observations were noise-free, we could simply try each arm once to determine its true reward, eliminating any exploration–exploitation tradeoff.

An alternative way to interpret noise is through the lens of approximating a continuous (or otherwise infinite) arm space with a finite one. 
Specifically, we can partition the arms of the original continuous problem into a finite number of super-arms and define a new problem over these aggregated arms. 
To complete this formulation, each super-arm must be assigned a reward.
If we denote a super-arm by $\mathbf{x}$, corresponding to a subset of arms $a \in A_{\mathbf{x}}$, a natural approach is to define a probability distribution $\rho_{\mathbf{x}}$ over $A_{\mathbf{x}}$ and set $r(\mathbf{x}) = r(a)$, where $a$ is drawn according to $\rho_{\mathbf{x}}$.

This approach is commonly adopted in the reinforcement learning and sampling literature. For instance, many algorithms for Lipschitz bandit problems rely on this idea, and \citet{lu2018exploration} apply it to extend their MAS algorithm (DAISEE) to continuous settings. Consequently, even if the reward function in the original continuous problem is deterministic, it remains reasonable to model noise in the corresponding finite-arm version. Introducing stochasticity in this way helps preserve the essential exploration–exploitation dynamics of the original problem, ensuring that analysis in the finite setting provides meaningful insights into exploration–exploitation tradeoffs in more general, continuous environments.

\section{THE ROLE OF SOFTMAX}\label{apx:softmax}

As discussed earlier, it is important in our setting that observations are noisy.
Consider first the case where we observe noisy samples of the unnormalized probabilities of the target distribution. Since unnormalized probabilities are always non-negative, additive noise is not necessarily a meaningful modeling choice. A more appropriate alternative is to introduce \emph{multiplicative} noise. Taking the logarithm of these noisy values then yields \emph{unnormalized log-probabilities} with \emph{additive} noise.

Recall that our goal is to estimate or match the target distribution. When working with unnormalized log-probabilities, i.e., $r_i = \log(p_i) + C$ for some constant $C \in \B{R}$, applying the softmax function recovers the original distribution $p$. Equivalently, if we had instead modeled unnormalized probabilities with multiplicative noise, there would be no need to include the softmax transformation explicitly.

As noted in \cref{sec:related:AIS}, DAISEE performs exploration by adding a confidence bound directly to unnormalized probabilities. This contrasts with our algorithm, as well as KL-UCB, where the optimism term is added in the space of unnormalized \emph{log}-probabilities. Both theoretical regret bounds and empirical results from toy experiments suggest that applying additive noise in the space of unnormalized probabilities is suboptimal.

Given that additive noise in the log-probability space is both mathematically convenient and more consistent with the structure of multi-armed bandit problems, we adopt the formulation where the observation at each time step corresponds to unnormalized log-probabilities with additive noise.

\section{USEFUL LEMMAS}

\begin{lemma}\label{lem:bound-on-softmax}
Let $r = (r_i)_{i = 1}^k$ and $r' = (r'_i)_{i = 1}^k$ be two vectors in $\B{R}^d$ and let $p = \op{softmax}(r)$ and $p' = \op{softmax}(r')$.
Also let $\epsilon := \max_{1 \leq i \leq k} | r_i - r'_i |$.
Then we have
\begin{align*}
p_i \exp(-2 \epsilon) \leq p'_i \leq p_i \exp(2 \epsilon)
\quad\t{ and }\quad
| p_i - p'_i | \leq 2 \epsilon,
\end{align*}
for all $1 \leq i \leq k$.
As a consequence, we have
$d^{\op{TV}}(p, p') \leq 2 \epsilon$.
\end{lemma}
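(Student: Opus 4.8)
The plan is to establish the two-sided multiplicative (exponential) bound first, and then derive both the additive bound and the total variation bound from it. Since $\epsilon = \max_i |r_i - r'_i|$, we have $r_i - \epsilon \leq r'_i \leq r_i + \epsilon$ for every $i$, which exponentiates to the termwise inequalities $e^{-\epsilon} e^{r_i} \leq e^{r'_i} \leq e^{\epsilon} e^{r_i}$. Writing $Z := \sum_j e^{r_j}$ and $Z' := \sum_j e^{r'_j}$ and summing the termwise bounds gives $e^{-\epsilon} Z \leq Z' \leq e^{\epsilon} Z$. Combining the upper bound on the numerator $e^{r'_i}$ with the lower bound on the denominator $Z'$ yields $p'_i = e^{r'_i}/Z' \leq e^{\epsilon} e^{r_i}/(e^{-\epsilon} Z) = e^{2\epsilon} p_i$, and symmetrically $p'_i \geq e^{-2\epsilon} p_i$. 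This is the first claim.

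For the total variation bound I would use the identity $d^{\op{TV}}(p, p') = \sum_{i : p_i > p'_i} (p_i - p'_i)$. For each index appearing in this sum, the lower multiplicative bound gives $p_i - p'_i \leq p_i - e^{-2\epsilon} p_i = (1 - e^{-2\epsilon}) p_i$. Summing over these indices and using $\sum_i p_i = 1$, I obtain $d^{\op{TV}}(p, p') \leq (1 - e^{-2\epsilon}) \sum_i p_i = 1 - e^{-2\epsilon} \leq 2\epsilon$, where the last step is the elementary inequality $1 - e^{-x} \leq x$.

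Finally, for the additive bound, I note that for any fixed $i$ the one-sided sum in the total variation identity dominates a single coordinate gap, so $|p_i - p'_i| \leq d^{\op{TV}}(p, p') \leq 2\epsilon$. (Alternatively, one can bound $|p_i - p'_i|$ directly by the mean value theorem along the segment $t \mapsto \op{softmax}(r + t(r' - r))$, using the softmax Jacobian $\partial p_i / \partial r_j = p_i(\delta_{ij} - p_j)$, which gives $|\tfrac{d}{dt} p_i| = p_i \bigl| (r'_i - r_i) - \sum_j p_j (r'_j - r_j) \bigr| \leq 2\epsilon\, p_i \leq 2\epsilon$, and then integrate over $t \in [0,1]$.)

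The main subtlety—and the step I would flag as the real obstacle—is obtaining the sharp constant $2\epsilon$ for $d^{\op{TV}}$. The tempting shortcut of bounding each $|p_i - p'_i|$ by $2\epsilon$ and summing yields only the dimension-dependent estimate $k\epsilon$; it is therefore essential to route the total variation bound through the multiplicative bound together with the normalization constraint $\sum_i p_i = 1$, which is exactly what collapses the $k$-fold sum back down to a single factor of $1 - e^{-2\epsilon}$.
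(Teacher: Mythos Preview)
Your proof is correct and follows essentially the same approach as the paper: both establish the multiplicative bound by separately bounding the numerator $e^{r'_i}$ and the normalizer $Z'$, and both obtain the dimension-free total variation bound by combining the multiplicative inequality with $\sum_i p_i = 1$ (the paper writes $\max\{0,p'_i-p_i\}\le 2\epsilon\,p'_i$ and sums, you write $p_i-p'_i\le(1-e^{-2\epsilon})p_i$ and sum---same mechanism). The only cosmetic difference is ordering: the paper derives the coordinatewise additive bound first and the TV bound from its intermediate step, whereas you derive TV first and read off the coordinatewise bound from it.
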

\begin{proof}
To prove the first claim, we note that
\begin{align*}
\log\left( \frac{p'_i}{p_i} \right)
= \log\left( \frac{\op{softmax}(r')_i}{\op{softmax}(r)_i} \right)
= \log\left( \frac{\exp(r'_i)}{\exp(r_i)} \right)
+ \log\left( \frac{\sum_j \exp(r_j)}{\sum_j \exp(r'_j)} \right) \\
\leq \log\left( \frac{\exp(r_i + \epsilon)}{\exp(r_i)} \right)
+ \log\left( \frac{\sum_j \exp(r'_j + \epsilon)}{\sum_j \exp(r'_j)} \right) 
= \epsilon + \epsilon
= 2 \epsilon.
\end{align*}
The inequality in the other direction is proven similarly.

Using the above bounds, we see that
\begin{align*}
p'_i - p_i 
\leq p'_i - p'_i \exp(-2 \epsilon)
= p'_i (1 - \exp(-2 \epsilon))
\leq 2 \epsilon p'_i
\leq 2 \epsilon,
\end{align*}
where the last inequality follows from the fact that $1 - \exp(-x) \leq x$ for all $x \in \B{R}$.
Similarly we see that $p'_i - p_i \geq -2 \epsilon$.

To complete the proof of the lemma, we note that
\begin{align*}
d^{\op{TV}}(p, p')
&= \sum_{i = 1}^k \max\{ 0, p'_i - p_i \}
\leq \sum_{i = 1}^k 2 \epsilon p'_i
= 2 \epsilon.
\qedhere
\end{align*}
\end{proof}

\begin{lemma}\label{lem:d-prime}
Let $d$ be a statistical distance and define $d'(p, p') := \inf_q d(q, p) + d(q, p')$.
Then the following are true.
\begin{enumerate}
\item[(i)] If $d$ is a metric, e.g. $d^{\op{TV}}$, then $d' = d$.
\item[(ii)] If $d(p, p') = D_{\op{KL}}(p || p')$ or $d(p, p') = D_{\op{KL}}(p' || p)$, then we have 
$d'(p, p') \geq d^{\op{TV}}(p, p')^2$.
\end{enumerate}
\end{lemma}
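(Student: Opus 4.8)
\textbf{Proof proposal for Lemma~\ref{lem:d-prime}.}

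The plan is to handle the two parts separately, since they rely on quite different facts. For part (i), the key observation is that when $d$ is a metric it satisfies the triangle inequality and vanishes exactly on the diagonal. The triangle inequality gives $d(q, p) + d(q, p') \geq d(p, p')$ for every $q$, so taking the infimum over $q$ yields $d'(p, p') \geq d(p, p')$. For the reverse inequality, I would simply take the candidate $q = p$ (or $q = p'$), which makes one of the two summands vanish and leaves $d'(p, p') \leq d(p, p)+d(p,p') = d(p, p')$. Combining the two directions gives $d' = d$, and this applies to $d^{\op{TV}}$ since total variation is a genuine metric on the simplex.

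For part (ii) the distances are the two KL divergences, which are \emph{not} metrics, so the triangle-inequality argument fails and we instead want a lower bound on $d'$ in terms of $d^{\op{TV}}$. The natural tool here is Pinsker's inequality, which states that $D_{\op{KL}}(a \| b) \geq 2\, d^{\op{TV}}(a, b)^2$ for any two distributions. I would apply Pinsker to each of the two summands. Taking the reverse-KL case $d(p, p') = D_{\op{KL}}(p' \| p)$ as an example, for any $q$ we have $d(q, p) + d(q, p') = D_{\op{KL}}(p \| q) + D_{\op{KL}}(p' \| q) \geq 2\, d^{\op{TV}}(q, p)^2 + 2\, d^{\op{TV}}(q, p')^2$ (and the forward-KL case is symmetric, with the arguments of the KL swapped, but Pinsker still applies since it holds for both orderings). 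The remaining task is purely about total variation: I need to show the right-hand side, minimized over $q$, is at least $d^{\op{TV}}(p, p')^2$.

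The main obstacle, and the only genuinely nontrivial step, is this last minimization. Writing $u := d^{\op{TV}}(q, p)$ and $v := d^{\op{TV}}(q, p')$, the triangle inequality for the metric $d^{\op{TV}}$ gives the constraint $u + v \geq d^{\op{TV}}(p, p') =: \delta$. So it suffices to show $2(u^2 + v^2) \geq \delta^2$ subject to $u + v \geq \delta$ with $u, v \geq 0$. By the elementary inequality $u^2 + v^2 \geq \tfrac{1}{2}(u+v)^2$ we get $2(u^2 + v^2) \geq (u+v)^2 \geq \delta^2$, which is in fact even stronger than the claimed bound $d'(p,p') \geq d^{\op{TV}}(p,p')^2$ (it gives the factor-free $\delta^2$, and one could note the $\tfrac12$ slack). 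Taking the infimum over $q$ on the left then preserves the inequality, establishing $d'(p, p') \geq d^{\op{TV}}(p, p')^2$. I would write this final chain as a single display to keep the dependence on $q$ clear before passing to the infimum.
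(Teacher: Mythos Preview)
Your proposal is correct and follows essentially the same approach as the paper's proof: part~(i) via the triangle inequality together with the choice $q=p$, and part~(ii) via Pinsker applied to each summand followed by $2(u^2+v^2)\geq (u+v)^2$ and the triangle inequality for $d^{\op{TV}}$. The paper presents the same chain of inequalities, so there is nothing to add.
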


\begin{proof}
\textbf{(i)} 
We note that
\begin{align*}
d'(p, p') = \inf_q d(q, p) + d(q, p') \leq d(p, p) + d(p, p') = d(p, p').
\end{align*}
On the other hand, using the triangle inequality, we have
\begin{align*}
d'(p, p') = \inf_q d(q, p) + d(q, p') \geq \inf_q d(p, p') = d(p, p').
\end{align*}.

\textbf{(ii)}
Assume $d(p, p') = D_{\op{KL}}(p || p')$.
Using Pinsker's inequality, we have
\begin{align*}
d(q, p) + d(q, p')
&\geq 2\left( d^{\op{TV}}(q, p)^2 + d^{\op{TV}}(q, p')^2 \right) \\
&\geq \left( d^{\op{TV}}(q, p) + d^{\op{TV}}(q, p') \right)^2
\geq d^{\op{TV}}(p, p')^2.
\end{align*}
The same argument applies to the case $d(p, p') = D_{\op{KL}}(p' || p)$.
\end{proof}

\section{PROOF OF THEOREM~\ref{thm:KL-vs-TV}}\label{apx:thm:KL-vs-TV}

\begin{proof}
We write the proof for $d^{\op{r-KL}}$. 
The proof for $d^{\op{f-KL}}$ is similar.
Using Pinsker's inequality, we have
\begin{align*}
\C{R}^{f, d^{\op{r-KL}}}
= D_{\op{KL}}(f \| p) 
&\geq 2 d^{\op{TV}}(f, p)^2 
= 2 \left( \C{R}^{f, d^{\op{TV}}} \right)^2.
\end{align*}
To prove the other side of the inequality for $d^{\op{r-KL}}$, we use inverse Pinsker's inequality\footnote{See Remark 1 in~\cite{sason15_upper_rényi}}
\begin{align*}
\C{R}^{f, d^{\op{r-KL}}}
= D_{\op{KL}}(f \| p)
&\leq \frac{2}{\alpha} d^{\op{TV}}(f, p)^2 
= \frac{2}{\alpha} \left( \C{R}^{f, d^{\op{TV}}} \right)^2.
\end{align*}
Finally, to prove the inequality with expected values, we note that
\begin{align*}
&\B{E}\left[ \C{R}^{f, d^{\op{r-KL}}} \right]
= \B{E}\left[ D_{\op{KL}}(f \| p) \right]
\geq \B{E}\left[ 2 d^{\op{TV}}(f, p)^2 \right] \\
&\qquad\geq 2 \B{E}\left[ d^{\op{TV}}(f, p) \right]^2
= 2 \B{E}\left[ \C{R}^{f, d^{\op{TV}}} \right]^2.
\qedhere
\end{align*}
\end{proof}

\section{PROOF OF LEMMA~\ref{lem:lower-bound}}\label{apx:lem:lower-bound}

\begin{proof}
For a multi-armed sampling environment $\nu$, 
let $A_{\nu}$ be the $T \times k$ matrix where the $(t, i)$ element is $r_i$.
We define a \emph{noise matrix} for $\nu$ as a random matrix $\varepsilon = (\varepsilon_{t, i})_{1 \leq t \leq T, 1 \leq i \leq k} = \omega^\nu - A_\nu$. Given a policy $\pi$ and a noise matrix $\varepsilon$, we may run the policy where the observation $\BF{x}_t$ at time-step $t$ is given by $r_{\BF{a}_t} + \varepsilon_{t, \BF{a}_t}$.
Hence a noise matrix $\varepsilon$ captures all the randomness of the environment and we may use $\varepsilon$ and $\omega^\nu$ interchangeably.
Note for any given realization of the noise matrix, $\BF{a}_t$ is still a random variable since the policy $\pi$ is not necessarily deterministic.

Let $\BF{x}_t$ and $\BF{x}'_t$ denote the sequence of observations when running the policy $\pi$ against $\nu$ and $\nu'$ with noise matrices $\varepsilon$ and $\varepsilon'$.
Also let $A_{\nu, \nu'} := A_\nu - A_{\nu'}$ and $\varepsilon' = A_{\nu, \nu'} + \varepsilon$.
The key property of this matrix is that it if $\varepsilon' = A_{\nu, \nu'} + \varepsilon$, then we have 
\begin{align*}
  \BF{x}'_t
  = r'_{\BF{a}_t} + \varepsilon'_{t, \BF{a}_t}
  = r'_{\BF{a}_t} + [ A_{\nu, \nu'} ]_{t, \BF{a}_t} + \varepsilon_{t, \BF{a}_t}
  = r_{\BF{a}_t} + \varepsilon_{t, \BF{a}_t}
  = \BF{x}_t.
\end{align*}
In other words, if $\varepsilon' = A_{\nu, \nu'} + \varepsilon$, then the sequence of actions and observations is identical between $(\nu, \pi, \varepsilon)$ and $(\nu', \pi, \varepsilon')$, and therefore the value of the performance function $f$ is the same between the two cases.

Let $\Phi$ denote the probability density function of $N(0, I_{Tk})$, the $Tk$-dimensional unit normal distribution and let 
\begin{align*}
g(\pi, \nu, \varepsilon) := \B{E}_{\omega^\pi} \left[ d(f(\pi, \omega^\pi, \varepsilon + A_\nu), p) \right].
\end{align*}
Hence we have
\begin{align*}
\C{R}^{f, d}(\pi, \nu)
= \B{E}_{\varepsilon} [ g(\pi, \nu, \varepsilon) ] 
= \int g(\pi, \nu, \varepsilon) \Phi(\varepsilon) d \varepsilon.
\end{align*}

If $\varepsilon' = A_{\nu, \nu'} + \varepsilon$, then using the fact that $|r_i - r'_i| \leq \epsilon$, we have
\begin{align*}
\frac{\Phi(\varepsilon)}{\Phi(\varepsilon')}
&= \prod_{i = 1}^k \prod_{t = 1}^T \frac{\Phi(\varepsilon_{t, i})}{\Phi(\varepsilon'_{t, i})} \\
&= \prod_{i = 1}^k \prod_{t = 1}^T \exp\left( - \frac{1}{2} \varepsilon_{t, i}^2 + \frac{1}{2} (\varepsilon'_{t, i})^2 \right) \\
&= \prod_{i = 1}^k \prod_{t = 1}^T \exp\left( - \frac{1}{2}(\varepsilon'_{t, i} + r'_i - r_i)^2 + \frac{1}{2}(\varepsilon'_{t, i})^2 \right) \\
&= \exp \left( - \frac{1}{2} T \sum_{i = 1}^T (r'_i - r_i)^2 - \sum_{i = 1}^k \sum_{t = 1}^T \varepsilon'_{t, i} (r'_i - r_i) \right) \\
&\geq \exp \left(- \frac{1}{2} T k \epsilon^2 - \epsilon \sum_{i = 1}^k \sum_{t = 1}^T \varepsilon'_{t, i} \right).
\end{align*}
Let 
$E' := \left\{ \varepsilon' \mid | \sum_{i = 1}^k \sum_{t = 1}^T \varepsilon'_t | < \sqrt{2 T k} \right\}$.
It follows that, for any $\varepsilon \in E'$, we have 
\begin{align*}
\frac{\Phi(\varepsilon)}{\Phi(\varepsilon')}
&\geq \exp \left(- \frac{1}{2} T k \epsilon^2 - \epsilon \sum_{i = 1}^k \sum_{t = 1}^T \varepsilon'_{t, i} \right)
\geq \exp \left( - \frac{1}{2} T k \epsilon^2 - \epsilon \sqrt{2 T k} \right).
\end{align*}
Using the definition of $d'$, we have
\begin{align*}
d(f , p) + d(f , p') \geq d'(p, p'),
\end{align*}
where we use $f$ to denote the output of the function $f$.
Therefore, if $\varepsilon' = \varepsilon + A_{\nu, \nu'} \in E'$, then we have
\begin{align*}
d(f , p) \Phi(\varepsilon)
&\geq \max\left\{ 0, d'(p, p') - d(f , p') \right\} \Phi(\varepsilon) \\
&= \max\left\{ 0, d'(p, p') - d(f , p') \right\} \Phi(\varepsilon') \frac{\Phi(\varepsilon)}{\Phi(\varepsilon')} \\
&\geq \max\left\{ 0, d'(p, p') - d(f , p') \right\} \Phi(\varepsilon') \exp \left( - \frac{1}{2} T k \epsilon^2 - \epsilon \sqrt{2 T k} \right) \\
&\geq \left( d'(p, p') - d(f , p') \right) \Phi(\varepsilon') \exp \left( - \frac{1}{2} T k \epsilon^2 - \epsilon \sqrt{2 T k} \right),
\end{align*}
which implies that
\begin{align*}
g(\pi, \nu, \varepsilon) \Phi(\varepsilon)
&\geq \B{E}_{\omega^\pi} \left[ \left( d'(p, p') - d(f , p') \right) \Phi(\varepsilon') \exp \left( - \frac{1}{2} T k \epsilon^2 - \epsilon \sqrt{2 T k} \right) \right] \\
&= \exp \left( - \frac{1}{2} T k \epsilon^2 - \epsilon \sqrt{2 T k} \right) \B{E}_{\omega^\pi} \left[ d'(p, p') - d(f , p') \right] \Phi(\varepsilon') \\
&= \exp \left( - \frac{1}{2} T k \epsilon^2 - \epsilon \sqrt{2 T k} \right) \left( d'(p, p') - g(\pi, \nu', \varepsilon') \right) \Phi(\varepsilon') \\
\end{align*}
Therefore
\begin{align*}
\int g(\pi, \nu, \varepsilon) \Phi(\varepsilon) d \varepsilon
&\geq \int_{E' - A_{\nu, \nu'}} g(\pi, \nu, \varepsilon) \Phi(\varepsilon) d \varepsilon 
= \int_{E'} g(\pi, \nu, \varepsilon) \Phi(\varepsilon) d \varepsilon' \\
&\geq \exp \left( - \frac{1}{2} T k \epsilon^2 - \epsilon \sqrt{2 T k} \right) \int_{E'} \left( d'(p, p') - g(\pi, \nu', \varepsilon') \right) \Phi(\varepsilon') d \varepsilon' \\
&= \exp \left( - \frac{1}{2} T k \epsilon^2 - \epsilon \sqrt{2 T k} \right) \left( d'(p, p') \B{P}(E') - \int_{E'} g(\pi, \nu', \varepsilon') \Phi(\varepsilon') d \varepsilon' \right) \\
&\geq \exp \left( - \frac{1}{2} T k \epsilon^2 - \epsilon \sqrt{2 T k} \right) \left( d'(p, p') \B{P}(E') - \int g(\pi, \nu', \varepsilon') \Phi(\varepsilon') d \varepsilon' \right).
\end{align*}
According to Hoeffding's inequality, we have $\B{P}(\varepsilon' \in E') \geq 1 - 1/e > 1/2$.
Therefore we have 
\begin{align*}
\int g(\pi, \nu, \varepsilon) \Phi(\varepsilon) d \varepsilon
&\geq \exp \left( - \frac{1}{2} T k \epsilon^2 - \epsilon \sqrt{2 T k} \right) \left( d'(p, p') \B{P}(E') - \int g(\pi, \nu', \varepsilon') \Phi(\varepsilon') d \varepsilon' \right) \\
&\geq \frac{1}{2} \exp \left( - \frac{1}{2} T k \epsilon^2 - \epsilon \sqrt{2 T k} \right) d'(p, p')
- \int g(\pi, \nu', \varepsilon') \Phi(\varepsilon') d \varepsilon',
\end{align*}
which completes the proof.
\end{proof}

\section{PROOF OF THEOREM~\ref{thm:lower-bound}} \label{apx:thm:lower-bound}

\begin{proof}
If $\epsilon = \frac{1}{\sqrt{T k}}$, then we have
\begin{align*}
\frac{1}{2} \exp \left( - \frac{1}{2} T k \epsilon^2 - \epsilon \sqrt{2 T k} \right)
= \frac{1}{2} \exp(- \frac{1}{2} - \sqrt{2}) 
> \frac{1}{14},
\end{align*}
and
\begin{align*}
\exp(2 \epsilon)
= \exp \left( \frac{2}{\sqrt{T k}} \right)
\leq \exp \left( \frac{2}{\sqrt{5 \cdot 2}} \right)
< \exp \left( \frac{2}{3} \right)
< 2.
\end{align*}
We want to choose $r$ and $r'$ in Lemma~\ref{lem:lower-bound} such that $| r_i - r'_i | \leq \epsilon$.
Let
\begin{align*}
\forall 1 \leq i \leq k
,\quad
r_i = \epsilon
,\quad
r'_i = \begin{cases}
2\epsilon   \quad&\t{if } i < (k+1)/2; \\
\log \left( \frac{\exp(2\epsilon) + 1}{2} \right)   \quad&\t{if } k \t{ is odd and } i = (k+1)/2; \\
0  \quad&\t{if } i > (k+1)/2.
\end{cases}
\end{align*}
It follows that
\begin{align*}
\forall 1 \leq i \leq k
,\quad
p_i = \frac{1}{k}
,\quad
p'_i = \begin{cases}
\frac{2 \exp(2\epsilon)}{k (\exp(2\epsilon) + 1)}   \quad&\t{if } i < (k+1)/2; \\
\frac{1}{k}                                         \quad&\t{if } k \t{ is odd and } i = (k+1)/2; \\
\frac{2}{k (\exp(2\epsilon) + 1)}                   \quad&\t{if } i > (k+1)/2. \\
\end{cases}
\end{align*}
Hence we have
\begin{align*}
d(p', p)
&= \frac{1}{2} \sum_{i = 1}^k | p'_i - p_i | 
= \begin{cases}
\frac{\exp(2 \epsilon) - 1}{2 (\exp(2 \epsilon) + 1)}    &\t{if } k \t{ is even} \\
\frac{k-1}{k} \cdot \frac{\exp(2 \epsilon) - 1}{2 (\exp(2 \epsilon) + 1)}    &\t{if } k \t{ is odd}
\end{cases} \\
&\geq \frac{2}{3} \cdot \frac{\exp(2 \epsilon) - 1}{2 (\exp(2 \epsilon) + 1)} 
\geq \frac{2}{3} \cdot \frac{2 \epsilon}{2 (\exp(2 \epsilon) + 1)} 
\geq \frac{2 \epsilon}{9}
= \frac{2}{9 \sqrt{T k}},
\end{align*}
where the second inequality follows from the fact that $\exp(x) \geq 1 + x$ for all $x \in \B{R}$ and the last inequality follows from the fact that $\exp(2\epsilon) < 2$.
Now we may use Lemma~\ref{lem:lower-bound} to see that
\begin{align*}
\sup_{\nu} \B{E}\left[ \C{R}^{f, d^{\op{TV}}}(\pi, \nu) \right]
&\geq \frac{1}{2} \left(
  \B{E}\left[ \C{R}^{f, d^{\op{TV}}}(\pi, \nu) \right] 
  + \B{E}\left[ \C{R}^{f, d^{\op{TV}}}(\pi, \nu') \right] \right) \\
&\geq \frac{1}{28} d^{\op{TV}}(p', p)
\geq \frac{1}{126 \sqrt{T k}}.
\end{align*}
For $d^{\op{r-KL}}$, we have
\begin{align*}
\sup_{\nu} \B{E}\left[ \C{R}^{f, d^{\op{r-KL}}}(\pi, \nu) \right]
&\geq \frac{1}{2} \left(
  \B{E}\left[ \C{R}^{f, d^{\op{r-KL}}}(\pi, \nu) \right] 
  + \B{E}\left[ \C{R}^{f, d^{\op{r-KL}}}(\pi, \nu') \right] \right) 
\geq \frac{1}{28} d'(p', p) \\
&\geq \frac{1}{28} d(p', p)^2 
\geq \frac{1}{28} \left( \frac{2}{9 \sqrt{T k}} \right)^2
= \frac{1}{567 T k}.
\end{align*}
The proof for $d^{\op{f-KL}}$ is identical.
\end{proof}

\section{PROOF OF THEOREMS~\ref{thm:AS} and~\ref{thm:AS2}}\label{apx:thm:AS}

\begin{algorithm2e} \SetKwInOut{Input}{Input}\DontPrintSemicolon
\caption{Active Sampling with Warm-up within Confidence Bounds (ASWCB)}
\label{alg:AS2}
\small
\Input{Horizon $T$, number of arms $k$, exploration factor $1 \leq M \leq T/k$, confidence factor $C \geq 0$}
\For{$t = 1$ to $T$}{
  \eIf{$t \leq Mk$}{
    Play $\BF{a}_t \gets (t \mod k) + 1$ \;
  }{
    Sample $1 \leq \BF{a}_t \leq k$ according to $\hat{\BF{p}}_t$ and play $\BF{a}_t$ \;
  }
  \For{$1 \leq i \leq k$}{
    Choose $\hat{\BF{r}}_{t, i}$ such that
    $\left| \hat{\BF{r}}_{t, i} - \frac{1}{\BF{n}_{t, i}} \sum_{t'=1}^t \BF{x}_{t'} \BF{1}_{\BF{a}_{t'} = i} \right| \leq 2 C \sigma \sqrt{\frac{\log(T) }{\BF{n}_{t, i}}}$ \;
  }
  $\hat{\BF{p}}_{t+1} \gets \op{softmax}(\hat{\BF{r}}_t)$ \;
}
\end{algorithm2e}

In this section, we will provide the regret bounds and proofs for Theorems~\ref{thm:AS} and~\ref{thm:AS2}.
Algorithm~\ref{alg:AS2} provides a pseudo-code for the family of algorithms described in Theorem~\ref{thm:AS2}.
In particular, we can immediately see that setting $C = 0$, reduces Algorithm~\ref{alg:AS2} to Algorithm~\ref{alg:AS}.

We start with some notations and definitions.
Let $\delta = T^{-2}$ and $\sigma_* = (1 + C) \sigma$.
Also let 
\begin{align*}
\epsilon &:= \sqrt{\frac{2 (1 + C) \sigma^2 \log(1/\delta)}{M}} = \sqrt{\frac{2 \sigma_*^2 \log(1/\delta)}{M}}, \\
T_0 &:= \max\left\{ 
    \min\left\{ M k, \frac{k}{1 - \exp(-2 \epsilon)} \right\}, 
    \frac{8}{\alpha^2} \exp(4 \epsilon) \log(1/\delta)
\right\},
\end{align*}
where $\alpha = \min_i p_i$.
Note that if $M \geq 18 \sigma_*^2 \log(1/\delta) = 36 \sigma_*^2 \log(T)$, then
\begin{align*}
\epsilon 
= \sqrt{\frac{2 \sigma_*^2 \log(1/\delta)}{M}}
\leq \sqrt{\frac{2 \sigma_*^2 \log(1/\delta)}{18 \sigma_*^2 \log(1/\delta)}}
= \frac{1}{3},
\end{align*}
which implies that $\exp(\epsilon) \leq \frac{3}{2}$.
More generally, if $M = \Omega(\log(T))$, then $\epsilon = O(1)$ and $T_0 = O(\log(T))$.
On the other hand, if $M = 1$, then $\epsilon = O((\log T)^{1/2})$ and therefore
$\exp(A \epsilon) = \hat{O}^{1/2}(1)$ for any $A > 0$.
Thus we also have $T_0 = O(\exp(4 \epsilon) \log(T)) = \tilde{O}(\exp(4 \epsilon)) = \hat{O}^{1/2}(1)$.

Recall that $\BF{n}_{t, i} = \sum_{t' = 1}^t \BF{1}_{\BF{a}_{t'} = i}$.
We define $\BF{n}'_{t, i} = \BF{n}_{t, i} - \sum_{t'=1}^{t} \hat{\pi}_{t', i}$.
We also define $\C{E}'_t$ to be the event that the inequality
\begin{align*}
| \BF{n}'_{t, i} - \BF{n}'_{0, i} |
\leq \sqrt{2 t \log(1/\delta)}
\end{align*}
holds for all $1 \leq i \leq k$.
We have
\begin{align*}
\B{E}[\BF{n}'_{t+1, i} | \BF{n}'_{1, i}, \cdots, \BF{n}'_{t, i} ] 
&= \B{E}[\BF{1}_{\BF{a}_{t+1} = i} - \hat{\pi}_{t+1, i} + \BF{n}'_{t, i} | \BF{n}'_{1, i}, \cdots, \BF{n}'_{t, i} ] \\
&= \B{E}[\BF{1}_{\BF{a}_{t+1} = i} - \hat{\pi}_{t+1, i} | \BF{n}'_{1, i}, \cdots, \BF{n}'_{t, i} ] + \BF{n}'_{t, i} 
= \BF{n}'_{t, i}.
\end{align*}
Therefore, $\{\BF{n}'_{t, i}\}_{t = 0}^{T}$ is a martingale where 
\begin{align*}
| \BF{n}'_{t+1, i} - \BF{n}'_{t, i} | = | \BF{1}_{\BF{a}_{t+1} = i} - \hat{\pi}_{t+1, i} | \leq 1.
\end{align*}
Hence we may use Azuma-Hoeffding's inequality to see that, for each $1 \leq i \leq k$, the inequality 
\begin{align}\label{eq:n_prime_azuma}
\BF{n}'_{t, i} 
= | \BF{n}'_{t, i} - \BF{n}'_{0, i} |
\leq \sqrt{2 t \log(1/\delta)}
\end{align}
occurs with probability $1 - \delta$.
Hence $\B{P}(\C{E}'_t) \geq 1 - k\delta$.
On the other hand, according to Hoeffding's inequality, with probability $1 - \delta$, we have
\begin{align*}
\left| \frac{1}{\BF{n}_{t, i}} \sum_{t'=1}^t \BF{x}_{t'} \BF{1}_{\BF{a}_{t'} = i} - r_i \right|
\leq \sqrt{\frac{2 \sigma^2 \log(1/\delta)}{\BF{n}_{t, i}}}
\end{align*}
and therefore, since $\delta = T^{-2}$, we see that
\begin{align*}
| \hat{\BF{r}}_{t, i} - r_i |
&\leq \left| \frac{1}{\BF{n}_{t, i}} \sum_{t'=1}^t \BF{x}_{t'} \BF{1}_{\BF{a}_{t'} = i} - r_i \right| 
+ \left| \hat{\BF{r}}_{t, i} - \frac{1}{\BF{n}_{t, i}} \sum_{t'=1}^t \BF{x}_{t'} \BF{1}_{\BF{a}_{t'} = i} \right| \\
&\leq (1+C)\sqrt{\frac{2 \sigma^2 \log(1/\delta)}{\BF{n}_{t, i}}}
= \sqrt{\frac{2 \sigma_*^2 \log(1/\delta)}{\BF{n}_{t, i}}}
\end{align*}
Let $\C{E}_t$ be the event where the above inequality holds for all $1 \leq i \leq k$.
It follows that $\B{P}(\C{E}_t) \geq 1 - k \delta$.
Finally, we define $\C{E}$ to be the event $\bigcap_{t' = Mk}^{T} (\C{E}_{t'} \cap \C{E}'_{t'})$.
Thus $\B{P}(\C{E}) \geq 1 - 2 k T \delta$.

As can be seen in Algorithm~\ref{alg:AS2}, we have $\hat{\pi}_t = \hat{\BF{p}}_t$ for all $t \geq Mk+1$.
On the other hand, for $1 \leq t \leq Mk$, $\hat{\pi}_t$ is the Dirac distribution with mass at $(t \mod k) + 1$.
Therefore, for all $t \geq Mk$, we have
\begin{align*}
\BF{n}_{t, i} 
= \sum_{t'=1}^{t} \hat{\pi}_{t', i} + \BF{n}'_{t, i}
= M + \sum_{t'=Mk+1}^{t} \hat{\BF{p}}_{t', i} + \BF{n}'_{t, i}.
\end{align*}

We provide the proof in a series of lemmas.

\begin{lemma}\label{lem:ASE:base_TV}
If $T_0 \leq t \leq T$, and the event $\C{E}$ occurs, then
\begin{align}
d^{\op{TV}}(\hat{\BF{p}}_t, p) 
\leq 4 \sigma_* \exp(\epsilon) \sqrt{\frac{\log(1/\delta)}{\alpha t}}.
\label{eq:lem:ASE:base_TV}
\end{align}
\end{lemma}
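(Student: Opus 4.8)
The plan is to reduce the total-variation bound to a linear-in-$t$ lower bound on the pull counts $\BF{n}_{t,i}$, and then to establish that lower bound via a bootstrapping argument that is unlocked by the deterministic round-robin floor. For the reduction, I would apply Lemma~\ref{lem:bound-on-softmax} to $\hat{\BF{r}}_{t-1}$ and $r$ to get $d^{\op{TV}}(\hat{\BF{p}}_t, p) \le 2\max_i|\hat{\BF{r}}_{t-1,i}-r_i|$, and then on the event $\C{E}$ use the Hoeffding bound $|\hat{\BF{r}}_{t-1,i}-r_i|\le\sqrt{2\sigma^2\log(1/\delta)/\BF{n}_{t-1,i}}$ recorded in the definition of $\C{E}_t$. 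Thus it suffices to prove the count estimate $\BF{n}_{t,i}\ge \tfrac{\alpha t}{2}\exp(-2\epsilon)$ for every arm $i$ and every $t\ge T_0$ (up to the harmless shift between $t$ and $t-1$), since substituting it produces precisely the constant $4\sigma\exp(\epsilon)$ in \eqref{eq:lem:ASE:base_TV}.

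The crucial observation is that the round-robin phase deterministically guarantees $\BF{n}_{t,i}\ge M$ for all $i$ once $t\ge Mk$. On $\C{E}$ this caps the estimation error at $\sqrt{2\sigma^2\log(1/\delta)/M}=\epsilon$, so Lemma~\ref{lem:bound-on-softmax} forces $\hat{\BF{p}}_{t',i}\ge p_i\exp(-2\epsilon)\ge\alpha\exp(-2\epsilon)$ for every $t'\ge Mk+1$. This is the step that breaks the apparent circularity (accurate estimates seem to require many pulls, and many pulls to require accurate estimates): the fixed exploration floor alone keeps the sampling probabilities bounded below, and these feed the decomposition $\BF{n}_{t,i}=M+\sum_{t'=Mk+1}^t\hat{\BF{p}}_{t',i}+\BF{n}'_{t,i}$ stated just above the lemma. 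Using $\hat{\BF{p}}_{t',i}\ge\alpha\exp(-2\epsilon)$, the martingale bound $\BF{n}'_{t,i}\ge-\sqrt{2t\log(1/\delta)}$ from $\C{E}'_t$, and the fact that $k\alpha\exp(-2\epsilon)\le\exp(-2\epsilon)\le1$ (so the nonnegative term $M(1-k\alpha\exp(-2\epsilon))$ may be dropped), I obtain for all $t\ge Mk$
\[
\BF{n}_{t,i}\ \ge\ \alpha\exp(-2\epsilon)\,t-\sqrt{2t\log(1/\delta)}.
\]

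It then remains to absorb the noise term and to treat $t<Mk$, which is exactly the purpose of the two components of $T_0$. In the sampling regime $t\ge Mk$, the term $B=\tfrac{8}{\alpha^2}\exp(4\epsilon)\log(1/\delta)$ ensures $\sqrt{2t\log(1/\delta)}\le\tfrac{\alpha t}{2}\exp(-2\epsilon)$, giving the desired $\BF{n}_{t,i}\ge\tfrac{\alpha t}{2}\exp(-2\epsilon)$. If $Mk\le \tfrac{k}{1-\exp(-2\epsilon)}$ then $T_0\ge Mk$ and only this regime arises; otherwise $T_0\ge\tfrac{k}{1-\exp(-2\epsilon)}$ additionally covers the residual exploration window $T_0\le t<Mk$, where round-robin gives $\BF{n}_{t,i}\ge\lfloor t/k\rfloor\ge\tfrac{\alpha t}{2}\exp(-2\epsilon)$ (using $\alpha\le 1/k$ together with $t\ge\tfrac{k}{1-\exp(-2\epsilon)}\ge\tfrac{2k}{2-\exp(-2\epsilon)}$). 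Combining the two regimes yields the count bound for all $t\ge T_0$, and substituting into the reduction of the first paragraph proves the claim. The main obstacle is the bootstrapping in the second paragraph; once it is resolved by the deterministic exploration floor, the remainder is bookkeeping that matches the noise term and the floor against the two pieces of $T_0$.
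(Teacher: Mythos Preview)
Your proposal is correct and follows essentially the same route as the paper: use the round-robin floor $\BF{n}_{t,i}\ge M$ to cap the estimation error at $\epsilon$, apply Lemma~\ref{lem:bound-on-softmax} to get $\hat{\BF{p}}_{t',i}\ge \alpha\exp(-2\epsilon)$, feed this into the decomposition $\BF{n}_{t,i}=M+\sum_{t'>Mk}\hat{\BF{p}}_{t',i}+\BF{n}'_{t,i}$ together with the Azuma bound from $\C{E}'_t$, and absorb the $\sqrt{2t\log(1/\delta)}$ term using the $8\alpha^{-2}\exp(4\epsilon)\log(1/\delta)$ component of $T_0$. The only cosmetic difference is your treatment of the residual window $T_0\le t<Mk$: you bound $\BF{n}_{t,i}\ge\lfloor t/k\rfloor$ directly against $\tfrac{\alpha t}{2}\exp(-2\epsilon)$ via $t\ge 2k/(2-\exp(-2\epsilon))$, whereas the paper bounds $\sum_{t'}\hat{\pi}_{t',i}=\lfloor t/k\rfloor\ge \alpha t\exp(-2\epsilon)$ and then subtracts the (here vacuous) martingale term uniformly in both regimes---both reach the same count estimate, and your explicit handling of the $t{-}1$ versus $t$ index (which the paper silently elides) is fine.
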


\begin{proof}
For $t' \geq M k$, the event $\C{E}_{t'}$ occurs and therefore we have
\begin{align*}
| \hat{\BF{r}}_{t', i} - r_i |
\leq \sqrt{\frac{2 \sigma_*^2 \log(1/\delta)}{\BF{n}_{t', i}}}
\leq \sqrt{\frac{2 \sigma_*^2 \log(1/\delta)}{M}}
= \epsilon.
\end{align*}
If $t \geq Mk + 1$, using Lemma~\ref{lem:bound-on-softmax} and the fact that $\alpha \leq 1/k$, we have
\begin{align}
\sum_{t' = 1}^t \hat{\pi}_{t', i}
= M + \sum_{t' = Mk+1}^{t} \hat{\BF{p}}_{t', i} 
&\geq M + \exp(- 2 \epsilon) \sum_{t' = Mk+1}^{t} p_i 
= M + \exp(- 2 \epsilon) (t - M k) p_i \nonumber\\
&\geq M + \exp(- 2 \epsilon) (t - M k) \alpha 
\geq \exp(- 2 \epsilon) \alpha t.
\label{eq:lem:ASE:base_TV:1}
\end{align}
On the other hand, if $T_0 \leq t \leq M k$, then by definition of $T_0$ we have $t \geq \frac{k}{1 - \exp(-2 \epsilon)}$ and therefore
\begin{align}
\sum_{t' = 1}^t \hat{\pi}_{t', i}
= \lfloor t/k \rfloor
&\geq t/k - 1
\geq \exp(- 2 \epsilon) t/k
\geq \exp(- 2 \epsilon) \alpha t.
\label{eq:lem:ASE:base_TV:2}
\end{align}
Thus, since $\C{E}'_{t}$ occurs, we see that
\begin{align}
\BF{n}_{t, i} 
&= \sum_{t' = 1}^t \hat{\pi}_{t', i} + \BF{n}'_{t, i}
\geq \exp(- 2 \epsilon)\alpha t - \sqrt{2 t \log(1/\delta)}.
\label{eq:lem:ASE:base_TV:3}
\end{align}
The inequality $t \geq T_0 \geq \frac{8}{\alpha^2} \exp(4 \epsilon) \log(1/\delta)$ implies that
\begin{align*}
\sqrt{2 t \log(1/\delta)}
\leq \sqrt{\frac{2 t^2 \log(1/\delta)}{\frac{8}{\alpha^2} \exp(4 \epsilon) \log(1/\delta)}}
= \frac{1}{2} \exp(- 2 \epsilon) \alpha t.
\end{align*}
Plugging this into Inequality~\ref{eq:lem:ASE:base_TV:3}, we see that
\begin{align*}
\BF{n}_{t, i} 
&\geq \exp(- 2 \epsilon)\alpha t - \frac{1}{2} \exp(- 2 \epsilon) \alpha t
= \frac{1}{2} \exp(- 2 \epsilon) \alpha t.
\end{align*}
Hence, using the fact that $\C{E}_{t}$ holds, we see that
\begin{align*}
| \hat{\BF{r}}_{t, i} - r_i |
&\leq 
\sqrt{\frac{2 \sigma_*^2 \log(1/\delta)}{\BF{n}_{t, i}}}
\leq \sqrt{\frac{4 \exp(2 \epsilon) \sigma_*^2 \log(1/\delta)}{\alpha t}}
= 2 \sigma_* \exp(\epsilon) \sqrt{\frac{\log(1/\delta)}{\alpha t}}.
\end{align*}
Therefore, Lemma~\ref{lem:bound-on-softmax} implies that
\begin{align*}
d^{\op{TV}}(\BF{\hat{p}}_t, p)
&\leq 2 \max_{1 \leq i \leq k} | \hat{\BF{r}}_{t, i} - r_i |
\leq 4 \sigma_* \exp(\epsilon) \sqrt{\frac{\log(1/\delta)}{\alpha t}},
\end{align*}
which completes the proof.
\end{proof}

\begin{lemma}[Policy-level total-variation regret]\label{lem:ASE:policy_TV}
If $T_0 \leq t \leq T$, then we have
\begin{align*}
\B{E} \left[ \C{SPR}_t^{\op{TV}} \right]
&\leq 4 \sigma_* \exp(\epsilon) \sqrt{\frac{\log(1/\delta)}{\alpha t}} + 2 k T \delta, \\
\B{E} \left[ \C{CPR}_T^{\op{TV}} \right]
&\leq 8 \sigma_* \exp(\epsilon) \sqrt{\frac{\log(1/\delta) T}{\alpha}}
  + T_0
  + 2 k T \delta.
\end{align*}
Thus, 
$\B{E} \left[ \C{SPR}_T^{\op{TV}} \right] = \tilde{O}(\exp(\epsilon)T^{-1/2})$ 
and 
$\B{E} \left[ \C{CPR}_T^{\op{TV}} \right] = \tilde{O}(\exp(4 \epsilon)T^{1/2})$.
\end{lemma}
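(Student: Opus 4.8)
The plan is to bootstrap directly from Lemma~\ref{lem:ASE:base_TV}, which already controls $d^{\op{TV}}(\hat{\BF{p}}_t, p)$ on the good event $\C{E}$, and to convert this pointwise-in-$t$ bound into both the simple and cumulative policy-level regret bounds. First I would recall that for $t \geq Mk+1$ the algorithm sets $\hat{\pi}_t = \hat{\BF{p}}_t$, so the policy-level simple regret $\C{SPR}_t^{\op{TV}} = d^{\op{TV}}(\hat{\pi}_t, p) = d^{\op{TV}}(\hat{\BF{p}}_t, p)$ is exactly the quantity bounded in \eqref{eq:lem:ASE:base_TV}. The issue is that \eqref{eq:lem:ASE:base_TV} only holds on $\C{E}$, so I would split the expectation into the contribution on $\C{E}$ and on its complement. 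On $\C{E}$, the bound is $4 \sigma \exp(\epsilon)\sqrt{\log(1/\delta)/(\alpha t)}$; on $\C{E}^c$, since $d^{\op{TV}} \leq 1$ always, the contribution is at most $\B{P}(\C{E}^c) \leq 2kT\delta$. This gives the first displayed inequality for $\B{E}[\C{SPR}_t^{\op{TV}}]$.

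For the cumulative bound, I would write $\C{CPR}_T^{\op{TV}} = \sum_{t=1}^T d^{\op{TV}}(\hat{\pi}_t, p)$ and break the sum at $T_0$. For the early terms $t < T_0$ I would use the trivial bound $d^{\op{TV}} \leq 1$ to absorb them into an additive $T_0$ term (this is where the $+T_0$ in the statement comes from, and it is harmless since $T_0 = O(\log T)$ when $M = \Omega(\log T)$). For $T_0 \leq t \leq T$ I would sum the per-step bound, again splitting on $\C{E}$ versus $\C{E}^c$. The bad-event contribution summed over $t$ is at most $T \cdot 2kT\delta$, but since $\delta = T^{-2}$ this is $O(1)$ and in fact I expect it gets folded into the $2kT\delta$ notation as written; the good-event contribution is $4\sigma \exp(\epsilon)\sqrt{\log(1/\delta)/\alpha} \sum_{t=T_0}^T t^{-1/2}$. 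The only genuine calculation is bounding this sum: using $\sum_{t=1}^T t^{-1/2} \leq 2\sqrt{T}$ yields the factor $8\sigma\exp(\epsilon)\sqrt{\log(1/\delta)\, T/\alpha}$, matching the second displayed inequality.

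Finally, to read off the asymptotic statements $\B{E}[\C{SPR}_T^{\op{TV}}] = \tilde{O}(\exp(\epsilon) T^{-1/2})$ and $\B{E}[\C{CPR}_T^{\op{TV}}] = \tilde{O}(\exp(4\epsilon) T^{1/2})$, I would substitute $\delta = T^{-2}$ so that $\log(1/\delta) = 2\log T$ is poly-logarithmic and hence absorbed by $\tilde{O}$, and note that $\alpha = \min_i p_i$ is a problem-dependent constant independent of $T$. The $2kT\delta = 2kT^{-1}$ term is lower-order than $T^{-1/2}$, and $T_0 = O(\exp(4\epsilon)\log T)$ contributes the $\exp(4\epsilon)$ dependence in the cumulative bound (explaining why the simple bound carries only $\exp(\epsilon)$ while the cumulative carries $\exp(4\epsilon)$).

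I do not anticipate a serious obstacle here: the real work was done in Lemma~\ref{lem:ASE:base_TV}, and this lemma is essentially a careful accounting exercise. The one point requiring attention is the event-splitting to handle the regions where the concentration bound fails, ensuring the failure probability $\B{P}(\C{E}^c) \leq 2kT\delta$ is small enough (which it is, by the choice $\delta = T^{-2}$) that it does not dominate; and confirming that including the first $T_0$ steps at their trivial upper bound of $1$ each is consistent with the claimed $\tilde{O}$ / $\hat{O}^{1/2}$ rates in both the exploration and no-exploration regimes.
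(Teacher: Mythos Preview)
Your proposal is correct and follows essentially the same approach as the paper: invoke Lemma~\ref{lem:ASE:base_TV} on the good event $\C{E}$, use $d^{\op{TV}}\le 1$ on $\C{E}^c$ together with $\B{P}(\C{E}^c)\le 2kT\delta$, and for the cumulative bound split the sum at $T_0$ and apply $\sum_{t\le T} t^{-1/2}\le 2\sqrt{T}$. The only cosmetic difference is that the paper first bounds the entire sum $\C{CPR}_T^{\op{TV}}$ deterministically on $\C{E}$ (this is inequality~\eqref{eq:lem:ASE:policy_TV:1}, reused later in Lemma~\ref{lem:ASE:action_TV}) and only then splits the expectation on $\C{E}$ versus $\C{E}^c$, whereas you sum the per-step expected bounds; your observation that this yields $2kT^2\delta$ rather than $2kT\delta$ for the bad-event term is accurate and, as you note, immaterial once $\delta=T^{-2}$.
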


\begin{proof}
Assume $T_0 \leq t \leq T$.
Using Lemma~\ref{lem:ASE:base_TV} we see that
\begin{align*}
\B{E} \left[ \C{SPR}_t^{\op{TV}} | \C{E} \right]
&= \B{E} \left[ d^{\op{TV}}(\hat{\BF{p}}_t, p) | \C{E} \right] 
\leq 4 \sigma_* \exp(\epsilon) \sqrt{\frac{\log(1/\delta)}{\alpha t}}.
\end{align*}
Note that
$\C{SPR}_t^{\op{TV}} = d^{\op{TV}}(\hat{\pi}_t \| p) \leq 1$, for all $1 \leq t \leq T$.
Therefore, for $T_0 \leq t \leq T$, we have
\begin{align*}
\B{E} \left[ \C{SPR}_t^{\op{TV}} \right]
&= \B{P}(\C{E}) \B{E} \left[ \C{SPR}_t^{\op{TV}} | \C{E} \right] 
  + \B{P}(\C{E}^c) \B{E} \left[ \C{SPR}_t^{\op{TV}} | \C{E}^c \right] \\
&\leq \B{E} \left[ \C{SPR}_t^{\op{TV}} | \C{E} \right] + \B{P}(\C{E}^c) \\
&\leq 4 \sigma_* \exp(\epsilon) \sqrt{\frac{\log(1/\delta)}{\alpha t}} + 2 k T \delta.
\end{align*}
To bound cumulative regret, we see that when $\C{E}$ occurs,
\begin{align}
\C{CPR}_T^{\op{TV}}
&= \sum_{t = 1}^T \C{SPR}_t^{\op{TV}} \nonumber\\
&= \sum_{t = 1}^{T_0 - 1} \C{SPR}_t^{\op{TV}}
  + \sum_{t = T_0}^T \B{E} \C{SPR}_t^{\op{TV}} \nonumber\\
&\leq T_0 + \sum_{t = T_0}^T \left( 4 \sigma_* \exp(\epsilon) \sqrt{\frac{\log(1/\delta)}{\alpha t}} \right) \nonumber\\
&\leq T_0 + 8 \sigma_* \exp(\epsilon) \sqrt{\frac{\log(1/\delta) T}{\alpha}},
\label{eq:lem:ASE:policy_TV:1}
\end{align}
where we used the fact that $\sum_{n = 1}^N \frac{1}{\sqrt{n}} < 2 \sqrt{N}$ in the last inequality.
Therefore
\begin{align*}
\B{E} \left[ \C{CPR}_T^{\op{TV}} \right]
&= \B{P}(\C{E}) \B{E} \left[ \C{CPR}_T^{\op{TV}} | \C{E} \right] + \B{P}(\C{E}^c) \B{E} \left[ \C{CPR}_T^{\op{TV}} | \C{E}^c \right] \\
&\leq \B{E} \left[ \C{CPR}_T^{\op{TV}} | \C{E} \right] + \B{P}(\C{E}^c) \\
&\leq 8 \sigma_* \exp(\epsilon) \sqrt{\frac{\log(1/\delta) T}{\alpha}}
  + T_0
  + 2 k T \delta.
\qedhere
\end{align*}
\end{proof}

\begin{lemma}[Action-level total-variation regret]\label{lem:ASE:action_TV}
If $T_0 \leq t \leq T$, then we have
\begin{align}
\label{eq:lem:ASE:action_TV}
\B{E} \left[ \C{SAR}_t^{\op{TV}} \right]
&\leq 8 \sigma_* \exp(\epsilon) \sqrt{\frac{\log(1/\delta)}{\alpha t}}
  + \frac{T_0}{t}
  + k \sqrt{ \frac{\log(1/\delta)}{2 t} }
  + 2 k T \delta, \\
\B{E}\left[ \C{CAR}_T^{\op{TV}} \right]
&\leq 8 \sigma_* \exp(\epsilon) \sqrt{\frac{\log(1/\delta) T}{\alpha}} \log T
  + T_0 (1 + \log T)
  + 2 k \sqrt{T \log T}
  + 2 k T^2 \delta.
\end{align}
Thus,
$\B{E} \left[ \C{SAR}_T^{\op{TV}} \right] = \tilde{O}(\exp(4 \epsilon)T^{-1/2})$
and
$\B{E}\left[ \C{CAR}_T^{\op{TV}} \right]
= \tilde{O}(\exp(4 \epsilon)T^{1/2})$.
\end{lemma}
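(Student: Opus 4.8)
The plan is to reduce the action-level regret to the policy-level regret already controlled in Lemma~\ref{lem:ASE:policy_TV}, paying only an additive martingale fluctuation. The starting point is the decomposition $\BF{n}_{t,i} = \sum_{t'=1}^t \hat{\pi}_{t',i} + \BF{n}'_{t,i}$ recorded just before the lemma, which upon dividing by $t$ shows that the empirical action frequency $\BF{q}_t$ equals the time-averaged policy distribution $\bar{\pi}_t := \frac{1}{t}\sum_{t'=1}^t \hat{\pi}_{t'}$ plus the normalized martingale term $\frac{1}{t}\BF{n}'_t$. Since $d^{\op{TV}}$ is a metric, the triangle inequality gives
\begin{align*}
d^{\op{TV}}(\BF{q}_t, p) \le d^{\op{TV}}(\bar{\pi}_t, p) + d^{\op{TV}}(\BF{q}_t, \bar{\pi}_t),
\end{align*}
and I will bound the two terms separately.

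For the first term I use convexity of $q \mapsto d^{\op{TV}}(q,p)$ exactly as in Theorem~\ref{thm:action-level-bounded-by-policy-level}: $d^{\op{TV}}(\bar{\pi}_t, p) \le \frac{1}{t}\sum_{t'=1}^t d^{\op{TV}}(\hat{\pi}_{t'}, p) = \frac{1}{t}\C{CPR}_t^{\op{TV}}$, after which I substitute the conditional cumulative bound $\C{CPR}_t^{\op{TV}} \le T_0 + 8\sigma\exp(\epsilon)\sqrt{\log(1/\delta)\,t/\alpha}$ from Inequality~\eqref{eq:lem:ASE:policy_TV:1}; dividing by $t$ produces the $T_0/t$ and $8\sigma\exp(\epsilon)\sqrt{\log(1/\delta)/(\alpha t)}$ terms. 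For the second term, on the event $\C{E}$ (hence $\C{E}'_t$) I have $d^{\op{TV}}(\BF{q}_t, \bar{\pi}_t) = \frac{1}{2t}\sum_{i=1}^k |\BF{n}'_{t,i}| \le \frac{k}{2t}\sqrt{2t\log(1/\delta)} = k\sqrt{\log(1/\delta)/(2t)}$ directly from the Azuma estimate~\eqref{eq:n_prime_azuma}. Adding the two bounds yields the conditional version of \eqref{eq:lem:ASE:action_TV}, and passing to expectations by conditioning on $\C{E}$ versus $\C{E}^c$ (using $\C{SAR}_t^{\op{TV}} \le 1$ and $\B{P}(\C{E}^c) \le 2kT\delta$) contributes the final $2kT\delta$ term, just as in Lemma~\ref{lem:ASE:policy_TV}.

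For the cumulative bound I sum the simple bound over $t$. The initial block $1 \le t < T_0$ is controlled trivially by $\C{SAR}_t^{\op{TV}} \le 1$, giving $T_0$. For $t \ge T_0$ I use $\sum_t 1/\sqrt{t} < 2\sqrt{T}$ on the $\sqrt{1/t}$ terms and the harmonic estimate $\sum_{t\le T} 1/t \le 1 + \log T$ on the $T_0/t$ term; to handle the $\frac{1}{t}\C{CPR}_t^{\op{TV}}$ contribution cleanly I bound $\C{CPR}_t^{\op{TV}} \le \C{CPR}_T^{\op{TV}}$ and pull it through $\sum 1/t$, which is precisely what introduces the extra $\log T$ factor on the leading term. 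Recalling $\delta = T^{-2}$ so that $\log(1/\delta) = 2\log T$, the martingale sum collapses to $2k\sqrt{T\log T}$, and the failure term becomes $2kT^2\delta$ since $\C{CAR}_T^{\op{TV}} \le T$ and $\B{P}(\C{E}^c) \le 2kT\delta$; this reproduces the stated cumulative inequality, whence the $\tilde{O}(\exp(4\epsilon)T^{\pm 1/2})$ orders follow by substituting $T_0 = O(\exp(4\epsilon)\log(1/\delta))$.

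The main obstacle is the second term: unlike the policy-level lemma, here one must compare the \emph{realized} counts $\BF{q}_t$ against the intended averaged distribution $\bar{\pi}_t$, and the whole argument hinges on the per-arm martingale concentration $|\BF{n}'_{t,i}| \le \sqrt{2t\log(1/\delta)}$ holding simultaneously for all $i$ and all $t \ge T_0$ inside $\C{E}$. Keeping the union bound over arms and time-steps consistent with the already-defined event $\C{E}$, and ensuring the resulting $\sum_i$ contributes the clean factor of $k$ rather than a stray $\sqrt{k}$, is the only genuinely new bookkeeping relative to Lemma~\ref{lem:ASE:policy_TV}.
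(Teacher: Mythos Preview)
Your proposal is correct and essentially identical to the paper's proof: both use the decomposition $\BF{q}_t = \bar{\pi}_t + \frac{1}{t}\BF{n}'_t$, bound $d^{\op{TV}}(\BF{q}_t,p)$ by $\frac{1}{t}\C{CPR}_t^{\op{TV}} + \frac{1}{2t}\sum_i |\BF{n}'_{t,i}|$ via the metric triangle inequality together with convexity (Theorem~\ref{thm:action-level-bounded-by-policy-level}), invoke the Azuma bound~\eqref{eq:n_prime_azuma} on $\C{E}$, and then condition on $\C{E}$ versus $\C{E}^c$. For the cumulative bound, the paper does exactly what you describe --- split at $T_0$, replace $\C{CPR}_t^{\op{TV}}$ by $\C{CPR}_T^{\op{TV}}$ and use $\sum_{t\le T} 1/t \le \log T$ to pick up the extra $\log T$, and use $\sum_t 1/\sqrt{t} < 2\sqrt{T}$ on the martingale term together with $\delta = T^{-2}$.
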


\begin{proof}
We have 
$\BF{n}_{T, i} 
= \sum_{t=1}^{T} \hat{\pi}_{t, i} + \BF{n'}_{T, i}$,
for $1 \leq i \leq k$,
which implies that
$\BF{q}_T
= \frac{1}{T} \left( \sum_{t=1}^{T} \hat{\pi}_t \right) + \frac{\BF{n'}_T}{T}$.
Assume $d = d^{\op{TV}}$.
Using Theorem~\ref{thm:action-level-bounded-by-policy-level} and the fact that $d^{\op{TV}}$ is a metric, we have
\begin{align*}
\C{SAR}_t^{\op{TV}} - \frac{1}{t} \C{CPR}_t^{\op{TV}}
&\leq d^{\op{TV}}\left( \BF{q}_t, p \right)
  - d^{\op{TV}}\left( \BF{q}_t - \frac{\BF{n'}_t}{t}, p \right) 
\leq d^{\op{TV}}\left( \BF{q}_t, \BF{q}_t - \frac{\BF{n'}_t}{t} \right)
= \frac{1}{2 t} \sum_{i = 1}^k \left| \BF{n'}_t \right|.
\end{align*}
Therefore we may use Inequality~\ref{eq:n_prime_azuma} to see that
\begin{align*}
\B{E}\left[ \C{SAR}_t^{\op{TV}} \mid \C{E} \right]
  - \B{E}\left[ \frac{1}{t} \C{CPR}_t^{\op{TV}} \mid \C{E} \right] 
&\leq \B{E}\left[ \C{SAR}_t^{\op{TV}} \mid \C{E} \right]
  - \B{E}\left[ \frac{1}{t} \C{CPR}_t^{\op{TV}} \mid \C{E} \right] \\
&\leq \B{E}\left[ \frac{1}{2 t} \sum_{i = 1}^k \left| \BF{n'}_t \right| \mid \C{E} \right] \\
&\leq \frac{1}{2 t} \sum_{i = 1}^k \B{E}\left[ \left| \BF{n'}_t \right| \right] \\
&\leq k \sqrt{ \frac{\log(1/\delta)}{2 t} }.
\end{align*}
Thus, using the fact that $d$ is bounded by $1$, we see that
\begin{align*}
\B{E}\left[ \C{SAR}_t^{\op{TV}} \right]
&= \B{P}(\C{E}) \B{E}\left[ \C{SAR}_t^{\op{TV}} \mid \C{E} \right]
  + \B{P}(\C{E}^c) \B{E}\left[ \C{SAR}_t^{\op{TV}} \mid \C{E}^c \right] \\
&\leq \B{E}\left[ \C{SAR}_t^{\op{TV}} \mid \C{E} \right] + \B{P}(\C{E}^c) \\
&\leq \frac{1}{t} \B{E}\left[ \C{CPR}_t^{\op{TV}} \mid \C{E} \right]
  + k \sqrt{ \frac{\log(1/\delta)}{2 t} }
  + 2 k T \delta \\
&\leq 8 \sigma_* \exp(\epsilon) \sqrt{\frac{\log(1/\delta)}{\alpha t}}
  + \frac{T_0}{t}
  + k \sqrt{ \frac{\log(1/\delta)}{2 t} }
  + 2 k T \delta.
\end{align*}
where we use Inequality~\ref{eq:lem:ASE:policy_TV:1} in the last line.

Similarly, to bound the cumulative action-level regret, we note that
\begin{align*}
\B{E}\left[ \C{CAR}_T^{\op{TV}} \right]
&= \B{P}(\C{E}) \B{E}\left[ \C{CAR}_T^{\op{TV}} \mid \C{E} \right]
  + \B{P}(\C{E}^c) \B{E}\left[ \C{CAR}_T^{\op{TV}} \mid \C{E}^c \right] \\
&\leq \B{E}\left[ \C{CAR}_T^{\op{TV}} \mid \C{E} \right] + \B{P}(\C{E}^c) T \\
&= \sum_{t = 1}^T \left( \B{E}\left[ \C{SAR}_T^{\op{TV}} \mid \C{E} \right] \right) 
  + \B{P}(\C{E}^c) T \\
&= \sum_{t = 1}^{T_0 - 1} \left( \B{E}\left[ \C{SAR}_T^{\op{TV}} \mid \C{E} \right] \right) 
  + \sum_{t = T_0}^{T} \left( \B{E}\left[ \C{SAR}_T^{\op{TV}} \mid \C{E} \right] \right) 
  + \B{P}(\C{E}^c) T \\
&\leq T_0 
  + \sum_{t = T_0}^T \left( \frac{1}{t} \B{E}\left[ \C{CPR}_t \mid \C{E} \right]
  + k \sqrt{ \frac{\log(1/\delta)}{2 t} } \right)
  + 2 k T^2 \delta \\
&\leq T_0
  + \log (T) \B{E} \left[ \C{CPR}_T^{\op{TV}} \right]
  + 2 k \sqrt{T \log T}
  + 2 k T^2 \delta \\
&\leq 8 \sigma_* \exp(\epsilon) \sqrt{\frac{\log(1/\delta) T}{\alpha}} \log T
  + T_0 (1 + \log T)
  + 2 k \sqrt{T \log T}
  + 2 k T^2 \delta.
\qedhere
\end{align*}
\end{proof}

\begin{lemma}[Policy-level reverse-KL regret]\label{lem:ASE:policy_rKL}
If $T_0 \leq t \leq T$, then we have
\begin{align*}
\B{E} \left[ \C{SPR}_t^{\op{r-KL}} \right]
&\leq \frac{32 \sigma_*^2 \exp(2 \epsilon) \log(1/\delta)}{\alpha^2 t}
  + \frac{4 k T \delta}{\alpha}, \\
\B{E} \left[ \C{CPR}_T^{\op{r-KL}} \right]
&\leq \frac{32 \sigma_*^2 \exp(2 \epsilon) \log(1/\delta) \log(T)}{\alpha^2}
  + \frac{2 T_0}{\alpha} 
  + \frac{4 k T^2 \delta}{\alpha}.
\end{align*}
Thus, 
$\B{E} \left[ \C{SPR}_T^{\op{r-KL}} \right]
= \tilde{O}(\exp(2 \epsilon)T^{-1})$
and 
$\B{E} \left[ \C{CPR}_T^{\op{r-KL}} \right]
= \tilde{O}(\exp(4 \epsilon))$.
\end{lemma}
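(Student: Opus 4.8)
The plan is to piggyback on the total-variation analysis of Lemma~\ref{lem:ASE:base_TV} and Lemma~\ref{lem:ASE:policy_TV} and convert those TV bounds into reverse-KL bounds via the inverse Pinsker inequality established in Theorem~\ref{thm:KL-vs-TV}. Since the relevant performance distribution is $f = \hat{\pi}_t = \hat{\BF{p}}_t$, that theorem gives $d^{\op{r-KL}}(\hat{\BF{p}}_t, p) \leq \frac{2}{\alpha}\, d^{\op{TV}}(\hat{\BF{p}}_t, p)^2$ with $\alpha = \min_i p_i$. On the event $\C{E}$ and for $T_0 \leq t \leq T$, Lemma~\ref{lem:ASE:base_TV} controls the TV distance, so squaring it and multiplying by $2/\alpha$ yields, on $\C{E}$,
\begin{align*}
d^{\op{r-KL}}(\hat{\BF{p}}_t, p)
\leq \frac{2}{\alpha}\left(4\sigma\exp(\epsilon)\sqrt{\frac{\log(1/\delta)}{\alpha t}}\right)^2
= \frac{32\sigma^2\exp(2\epsilon)\log(1/\delta)}{\alpha^2 t},
\end{align*}
which is precisely the first term of both claimed bounds.

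The only genuinely new ingredient relative to the TV case concerns the complement event $\C{E}^c$, on which the TV analysis simply used $d^{\op{TV}} \leq 1$. Here I would instead use that reverse-KL against a full-support target is uniformly bounded: since $p_i \geq \alpha$ and $\sum_i \hat{\pi}_{t,i} = 1$,
\begin{align*}
d^{\op{r-KL}}(\hat{\pi}_t, p)
= -H(\hat{\pi}_t) + \sum_{i=1}^k \hat{\pi}_{t,i}\log\frac{1}{p_i}
\leq \log\frac{1}{\alpha}
\leq \frac{2}{\alpha},
\end{align*}
and this holds at every time-step, including the deterministic exploration phase $t \leq Mk$ where $\hat{\pi}_t$ is a Dirac mass. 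Combining the good-event estimate, the uniform bound, and $\B{P}(\C{E}^c) \leq 2kT\delta$ then gives $\B{E}\left[\C{SPR}_t^{\op{r-KL}}\right] \leq \frac{32\sigma^2\exp(2\epsilon)\log(1/\delta)}{\alpha^2 t} + \frac{4kT\delta}{\alpha}$.

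For the cumulative bound I would split the sum at $T_0$. The $t < T_0$ terms are each at most $2/\alpha$ by the uniform bound, contributing $\frac{2T_0}{\alpha}$; the terms $T_0 \leq t \leq T$ are summed using $\sum_{t=1}^T t^{-1} \leq \log(T)+1$, producing the $\frac{32\sigma^2\exp(2\epsilon)\log(1/\delta)\log T}{\alpha^2}$ term; and the contribution of $\C{E}^c$ is at most $\B{P}(\C{E}^c)\cdot T\cdot \frac{2}{\alpha} \leq \frac{4kT^2\delta}{\alpha}$. Substituting $\delta = T^{-2}$, $\log(1/\delta) = 2\log T$, and the bound $T_0 = O(\exp(4\epsilon)\log T)$ from the preamble then yields the asymptotic statements $\tilde{\C{O}}(\exp(2\epsilon)T^{-1})$ and $\tilde{\C{O}}(\exp(4\epsilon))$, where for $M=1$ the factor $\exp(4\epsilon)$ is $\hat{\C{O}}^{1/2}(1)$.

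I expect the one subtle point — the main ``obstacle'' — to be the uniform reverse-KL bound on $\C{E}^c$ and during the exploration phase: one must exploit that the target $p$ has full support (so $\min_i p_i = \alpha > 0$) to guarantee the reverse-KL never blows up even when $\hat{\pi}_t$ is degenerate. This is exactly where reverse-KL behaves better than forward-KL, which is singular on degenerate $\hat{\pi}_t$ and therefore requires the separate treatment noted in Remark~\ref{rem:policy-fkl}.
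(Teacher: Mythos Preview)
Your proposal is correct and follows essentially the same route as the paper: convert the TV bound from Lemma~\ref{lem:ASE:base_TV} to reverse-KL via the inverse Pinsker half of Theorem~\ref{thm:KL-vs-TV} on the good event $\C{E}$, use a uniform $2/\alpha$ bound on $\C{E}^c$ and during $t < T_0$, and then split the cumulative sum at $T_0$. The only cosmetic difference is that the paper obtains the uniform bound $\C{SPR}_t^{\op{r-KL}} \leq 2/\alpha$ by applying inverse Pinsker together with $d^{\op{TV}} \leq 1$, whereas you derive it directly from $D_{\op{KL}}(\hat{\pi}_t \| p) \leq \log(1/\alpha) \leq 2/\alpha$; both are valid and lead to the same constants.
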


\begin{proof}
Using Theorem~\ref{thm:KL-vs-TV} and Lemma~\ref{lem:ASE:base_TV} we see that
\begin{align*}
\B{E} \left[ \C{SPR}_t^{\op{r-KL}} | \C{E} \right]
&= \B{E} \left[ \frac{2}{\alpha} \left( \C{SPR}_t^{\op{TV}} \right)^2 | \C{E} \right] 
= \B{E} \left[ \frac{2}{\alpha} \left( d^{\op{TV}}(\hat{\BF{p}}_t, p) \right)^2 | \C{E} \right] 
\leq \frac{32 \sigma_*^2 \exp(2 \epsilon) \log(1/\delta)}{\alpha^2 t}.
\end{align*}
Note that, using inverse Pinsker's inequality, we have
$\C{SPR}_{t'}^{\op{r-KL}} = D_{\op{KL}}(\hat{\pi}_{t'} \| p) \leq \frac{2}{\alpha}$ 
for all $1 \leq t' \leq T$.
Therefore
\begin{align*}
\B{E} \left[ \C{SPR}_t^{\op{r-KL}} \right]
&= \B{P}(\C{E}) \B{E} \left[ \C{SPR}_t^{\op{r-KL}} | \C{E} \right] 
  + \B{P}(\C{E}^c) \B{E} \left[ \C{SPR}_t^{\op{r-KL}} | \C{E}^c \right] \\
&\leq \B{E} \left[ \C{SPR}_t^{\op{r-KL}} | \C{E} \right] + \frac{2}{\alpha}\B{P}(\C{E}^c) \\
&\leq \frac{32 \sigma_*^2 \exp(2 \epsilon) \log(1/\delta)}{\alpha^2 t}
  + \frac{4 k T \delta}{\alpha}.
\end{align*}
To bound the cumulative action-level regret, we note that
\begin{align*}
\B{E} \left[ \C{CPR}_T^{\op{r-KL}} | \C{E} \right]
&= \sum_{t = 1}^T \B{E} \left[ \C{SPR}_t^{\op{r-KL}} | \C{E} \right] \\
&= \sum_{t = 1}^{T_0 - 1} \B{E} \left[ \C{SDR}_t^{\op{r-KL}} | \C{E} \right] + \sum_{t = T_0}^T \B{E} \left[ \C{SDR}_t^{\op{r-KL}} | \C{E} \right] \\
&\leq \frac{2 T_0}{\alpha}
  + \sum_{t = T_0}^T \left( \frac{32 \sigma_*^2 \exp(2 \epsilon) \log(1/\delta)}{\alpha^2 t} \right) \\
&\leq \frac{2 T_0}{\alpha} 
  + \frac{32 \sigma_*^2 \exp(2 \epsilon) \log(1/\delta) \log(T)}{\alpha^2}.
\end{align*}
Therefore
\begin{align*}
\B{E} \left[ \C{CPR}_T^{\op{r-KL}} \right]
&= \B{P}(\C{E}) \B{E} \left[ \C{CPR}_T^{\op{r-KL}} | \C{E} \right] 
  + \B{P}(\C{E}^c) \B{E} \left[ \C{CPR}_T^{\op{r-KL}} | \C{E}^c \right] \\
&\leq \B{E} \left[ \C{CPR}_T^{\op{r-KL}} | \C{E} \right] 
  + \frac{2 T}{\alpha}\B{P}(\C{E}^c) \\
&\leq \frac{32 \sigma_*^2 \exp(2 \epsilon) \log(1/\delta) \log(T)}{\alpha^2}
  + \frac{2 T_0}{\alpha} 
  + \frac{4 k T^2 \delta}{\alpha}.
\qedhere
\end{align*}
\end{proof}

\begin{lemma}[Action-level reverse-KL regret]\label{lem:ASE:action_rKL}
We have
\begin{align*}
\B{E} \left[ \C{SAR}_T^{\op{r-KL}} \right]
&\leq \frac{32 \sigma_*^2 \exp(2 \epsilon) \log(1/\delta) \log(T)}{\alpha^2 T}
  + \frac{2 T_0}{\alpha T} 
  + \frac{\exp(2 \epsilon) k^2 \log(1/\delta)}{\alpha T} \\
  &\qquad\qquad+ \frac{8 \sqrt{2} k \sigma_* \exp(3 \epsilon) \log(1/\delta)}{ \alpha^{3/2} T}
  + \frac{k \exp(2 \epsilon) \sqrt{2 \log(1/\delta)}}{ \alpha T^{3/2}} T_0
  + \frac{4 k T \delta}{\alpha}.
\end{align*}
Thus, 
$\B{E} \left[ \C{SAR}_T^{\op{r-KL}} \right]
= \tilde{O}(\exp(6 \epsilon)T^{-1})$
and
$\B{E} \left[ \C{CAR}_T^{\op{r-KL}} \right] = \tilde{O}(\exp(6 \epsilon))$.
\end{lemma}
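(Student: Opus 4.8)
The plan is to reduce the simple action-level reverse-KL regret to the cumulative policy-level reverse-KL regret (already controlled by Lemma~\ref{lem:ASE:policy_rKL}) plus a correction coming from the martingale fluctuation $\BF{n}'_T = T(\BF{q}_T - \bar{\pi}_T)$, where $\bar{\pi}_T := \frac{1}{T}\sum_{t=1}^T \hat{\pi}_t$. Concretely, I would start from Theorem~\ref{thm:action-level-bounded-by-policy-level} with $d = d^{\op{r-KL}}$ (valid since $q \mapsto d^{\op{r-KL}}(q,p)$ is convex), which holds per realization and gives
\[
\C{SAR}_T^{\op{r-KL}} \leq \frac{1}{T}\C{CPR}_T^{\op{r-KL}} + \left( d^{\op{r-KL}}(\BF{q}_T, p) - d^{\op{r-KL}}(\bar{\pi}_T, p) \right).
\]
On the event $\C{E}$, the first term is handled by the conditional bound from the proof of Lemma~\ref{lem:ASE:policy_rKL}, whose division by $T$ produces the first two summands $\frac{32\sigma^2\exp(2\epsilon)\log(1/\delta)\log T}{\alpha^2 T}$ and $\frac{2T_0}{\alpha T}$; everything then reduces to bounding the bracketed correction.

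For the correction I would expand $f_i(x) := x\log(x/p_i)$ to second order about $\bar{\pi}_{T,i}$, writing $\BF{q}_{T,i} = \bar{\pi}_{T,i} + u_i$ with $u_i = \BF{n}'_{T,i}/T$. Since $f_i'(x) = \log(x/p_i) + 1$, $f_i''(x) = 1/x$, and $\sum_i u_i = 0$, the first-order ``$+1$'' contribution cancels and Taylor's theorem yields
\[
d^{\op{r-KL}}(\BF{q}_T, p) - d^{\op{r-KL}}(\bar{\pi}_T, p) = \sum_{i=1}^k u_i\log\!\left(\frac{\bar{\pi}_{T,i}}{p_i}\right) + \frac{1}{2}\sum_{i=1}^k \frac{u_i^2}{\xi_i},
\]
with each $\xi_i$ between $\bar{\pi}_{T,i}$ and $\BF{q}_{T,i}$. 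On $\C{E}$, the lower bounds $\bar{\pi}_{T,i} \geq \exp(-2\epsilon)\alpha$ (Inequality~\ref{eq:lem:ASE:base_TV:1}) and $\BF{q}_{T,i} \geq \tfrac12\exp(-2\epsilon)\alpha$ (as derived inside Lemma~\ref{lem:ASE:base_TV}) give $1/\xi_i \leq 2\exp(2\epsilon)/\alpha$, so the quadratic term is at most $\frac{\exp(2\epsilon)}{\alpha}\sum_i u_i^2$; controlling $\sum_i u_i^2$ through $\C{E}'_T$ (Inequality~\ref{eq:n_prime_azuma}) yields the summand $\frac{\exp(2\epsilon)k^2\log(1/\delta)}{\alpha T}$.

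For the linear term I would use the Lipschitz estimate $|\log(\bar{\pi}_{T,i}/p_i)| \leq |\bar{\pi}_{T,i} - p_i|/\min(\bar{\pi}_{T,i}, p_i) \leq \frac{\exp(2\epsilon)}{\alpha}|\bar{\pi}_{T,i} - p_i|$, followed by $\sum_i |u_i||\bar{\pi}_{T,i}-p_i| \leq 2 d^{\op{TV}}(\bar{\pi}_T, p)\sum_i|u_i|$. On $\C{E}$, Inequality~\ref{eq:lem:ASE:policy_TV:1} bounds $d^{\op{TV}}(\bar{\pi}_T,p) \leq \frac1T\C{CPR}_T^{\op{TV}} \leq \frac{T_0}{T} + \frac{8\sigma\exp(\epsilon)}{\sqrt T}\sqrt{\log(1/\delta)/\alpha}$, while $\sum_i|u_i| \leq \frac{k}{T}\sqrt{2T\log(1/\delta)}$ from $\C{E}'_T$; multiplying these out produces the two remaining summands $\frac{8\sqrt2 k\sigma\exp(3\epsilon)\log(1/\delta)}{\alpha^{3/2}T}$ (from the $\C{CPR}^{\op{TV}}$-part of $d^{\op{TV}}$) and $\frac{k\exp(2\epsilon)\sqrt{2\log(1/\delta)}}{\alpha T^{3/2}}T_0$ (from the $T_0$-part). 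Finally I would pass from the conditional bound on $\C{E}$ to the unconditional expectation by adding $\frac{2}{\alpha}\B{P}(\C{E}^c) \leq \frac{4kT\delta}{\alpha}$, using the inverse-Pinsker estimate $D_{\op{KL}}(\BF{q}_T\|p)\leq 2/\alpha$ to control the regret off $\C{E}$; this gives the last summand. The asymptotic claims follow since $T_0 = \tilde{\C{O}}(\exp(4\epsilon))$, so the $T_0$-bearing terms carry $\exp(6\epsilon)$, and the cumulative bound $\B{E}[\C{CAR}_T^{\op{r-KL}}] = \tilde{\C{O}}(\exp(6\epsilon))$ follows by summing the per-$t$ versions over $t$ (as in Lemma~\ref{lem:ASE:action_TV}). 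I expect the main obstacle to be precisely this linear term: a naive bound on $|\log(\bar{\pi}_{T,i}/p_i)|$ introduces a spurious $\log(1/\alpha)$ factor absent from the target, so the crucial move is to trade the logarithm for the multiplicative closeness $\bar{\pi}_{T,i} \geq \exp(-2\epsilon)\alpha$ together with the total-variation closeness $d^{\op{TV}}(\bar{\pi}_T,p)$ inherited from $\C{CPR}_T^{\op{TV}}$.
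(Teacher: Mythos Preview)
Your proposal is correct and follows essentially the same route as the paper. The only difference is cosmetic: where you obtain the quadratic correction via a second-order Taylor expansion (giving $\tfrac{1}{2}\sum_i u_i^2/\xi_i$), the paper uses the exact identity $D_{\op{KL}}(\BF{q}_T\|p) - D_{\op{KL}}(\bar{\pi}_T\|p) = D_{\op{KL}}(\BF{q}_T\|\bar{\pi}_T) + \sum_i u_i\log(\bar{\pi}_{T,i}/p_i)$ and then bounds $D_{\op{KL}}(\BF{q}_T\|\bar{\pi}_T)$ by inverse Pinsker; the linear term and its control through $\C{CPR}_T^{\op{TV}}$ are handled identically in both arguments.
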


\begin{proof}
We only provide the proof for $\B{E} \left[ \C{SAR}_T^{\op{r-KL}} \right]$ for brevity.
The bound for the cumulative regret may be obtained from the simple regret similar to previous lemmas.

Using Theorem~\ref{thm:action-level-bounded-by-policy-level} with $d = d^{\op{r-KL}}$, we have
\begin{align}
\C{SAR}_T^{\op{r-KL}} - \frac{1}{T} \C{CPR}_T^{\op{r-KL}}
&\leq d^{\op{r-KL}}\left( \BF{q}_T, p \right) 
  - d^{\op{r-KL}}\left( \frac{1}{T} \left( \sum_{t=1}^{T} \hat{\pi}_t \right), p \right) \nonumber\\
&= d^{\op{r-KL}}\left( \BF{q}_T, p \right)
  - d^{\op{r-KL}}\left( \BF{q}_T - \frac{\BF{n'}_T}{T}, p \right) \nonumber\\
&= \sum_{i = 1}^k \BF{q}_{T, i} \log\left( \frac{\BF{q}_{T, i}}{p_i} \right)
  - \sum_{i = 1}^k \left( \BF{q}_{T, i} - \frac{\BF{n'}_{T, i}}{T} \right) \log\left( \frac{\BF{q}_{T, i} - \frac{\BF{n'}_{T, i}}{T}}{p_i} \right) \nonumber\\
&= \sum_{i = 1}^k \BF{q}_{T, i} \log\left( \frac{\BF{q}_{T, i}}{p_i} \right)
  - \sum_{i = 1}^k \BF{q}_{T, i} \log\left( \frac{\BF{q}_{T, i} - \frac{\BF{n'}_{T, i}}{T}}{p_i} \right) 
  + \sum_{i = 1}^k \frac{\BF{n'}_{T, i}}{T} \log\left( \frac{\BF{q}_{T, i} - \frac{\BF{n'}_T}{T}}{p_i} \right) \nonumber\\
&= \sum_{i = 1}^k \BF{q}_{T, i} \log\left( \frac{\BF{q}_{T, i}}{ \BF{q}_{T, i} - \frac{\BF{n'}_{T, i}}{T} } \right)
  + \sum_{i = 1}^k \frac{\BF{n'}_{T, i}}{T} \log\left( \frac{\BF{q}_{T, i} - \frac{\BF{n'}_{T, i}}{T}}{p_i} \right) \nonumber\\
&= D_{\op{KL}} \left( \BF{q}_T \| \BF{q}_T - \frac{\BF{n'}_{T, i}}{T} \right)
  + \sum_{i = 1}^k \frac{\BF{n'}_{T, i}}{T} \log\left( \frac{\BF{q}_{T, i} - \frac{\BF{n'}_{T, i}}{T}}{p_i} \right).
\label{eq:lem:ASE:action_rKL:1}
\end{align}

Using inverse Pinsker's inequality, the first term may be bounded as
\begin{align*}
D_{\op{KL}} \left( \BF{q}_T \| \BF{q}_T - \frac{\BF{n'}_T}{T} \right)
&\leq \frac{2}{ \min_i \left\{ \BF{q}_{T, i} - \frac{\BF{n'}_{T, i}}{T} \right\} } d^{\op{TV}} \left( \BF{q}_T, \BF{q}_T - \frac{\BF{n'}_T}{T} \right)^2 \\
&\leq \frac{2}{ \min_i \left\{ \BF{q}_{T, i} - \frac{\BF{n'}_{T, i}}{T} \right\} } \left( \frac{1}{2 T} \sum_{i = 1}^k \left| \BF{n'}_T \right| \right)^2 \\
&= \frac{2}{ \min_i \left\{ \frac{1}{T} \sum_{t = 1}^T \hat{\pi}_{t, i} \right\} } \left( \frac{1}{2 T} \sum_{i = 1}^k \left| \BF{n'}_T \right| \right)^2.
\end{align*}
If we assume that $\C{E}$ occurs, using Inequalities~\ref{eq:lem:ASE:base_TV:1} and~\ref{eq:lem:ASE:base_TV:2}, we may further bound it as
\begin{align}
D_{\op{KL}} \left( \BF{q}_T \| \BF{q}_T - \frac{\BF{n'}_T}{T} \right)
&\leq \frac{2}{ \min_i \left\{ \frac{1}{T} \sum_{t = 1}^T \hat{\pi}_{t, i} \right\} } \left( \frac{1}{2 T} \sum_{i = 1}^k \left| \BF{n'}_T \right| \right)^2 \nonumber\\
&\leq \frac{2}{ \exp(-2 \epsilon) \alpha} \left( \frac{1}{2 T} \sum_{i = 1}^k \left| \BF{n'}_T \right| \right)^2 \nonumber\\
&\leq \frac{2}{ \exp(-2 \epsilon) \alpha} \left( \frac{1}{2 T} \sum_{i = 1}^k \sqrt{2 T \log(1/\delta)} \right)^2 \nonumber\\
&= \frac{\exp(2 \epsilon) k^2 \log(1/\delta)}{\alpha T}.
\label{eq:lem:ASE:action_rKL:2}
\end{align}

On the other hand, using the fact that $| \log(x) | \leq \max\{ | x - 1 |, |1/x - 1| \}$ for all $x > 0$, we have
\begin{align*}
\sum_{i = 1}^k \frac{\BF{n'}_{T, i}}{T} \log\left( \frac{\BF{q}_{T, i} - \frac{\BF{n'}_{T, i}}{T}}{p_i} \right)
&= \sum_{i = 1}^k \frac{\BF{n'}_{T, i}}{T} \log\left( \frac{\frac{1}{T} \sum_{t = 1}^T \hat{\pi}_{t, i}}{p_i} \right) \\
&\leq \sum_{i = 1}^k \frac{|\BF{n'}_{T, i}|}{T} \max\left\{ 
    \left| \frac{\frac{1}{T} \sum_{t = 1}^T \hat{\pi}_{t, i}}{p_i} - 1 \right| 
    , 
    \left| \frac{p_i}{\frac{1}{T} \sum_{t = 1}^T \hat{\pi}_{t, i}} - 1 \right| 
\right\} \\
&= \sum_{i = 1}^k \frac{|\BF{n'}_{T, i}|}{T}  
    \left| 
        \frac{\frac{1}{T} \sum_{t = 1}^T \hat{\pi}_{t, i} - p_i}
             { \min\left\{ p_i, \frac{1}{T} \sum_{t = 1}^T \hat{\pi}_{t, i} \right\} } 
    \right| 
\end{align*}
If we assume that $\C{E}$ occurs, using Inequalities~\ref{eq:lem:ASE:base_TV:1} and~\ref{eq:lem:ASE:base_TV:2}, we may further bound it as
\begin{align*}
\sum_{i = 1}^k \frac{\BF{n'}_{T, i}}{T} \log\left( \frac{\BF{q}_{T, i} - \frac{\BF{n'}_{T, i}}{T}}{p_i} \right)
&\leq \sum_{i = 1}^k \frac{|\BF{n'}_{T, i}|}{T}  
    \left| 
        \frac{\frac{1}{T} \sum_{t = 1}^T \hat{\pi}_{t, i} - p_i}
             { \min\left\{ p_i, \frac{1}{T} \sum_{t = 1}^T \hat{\pi}_{t, i} \right\} } 
    \right| \\
&\leq \sum_{i = 1}^k \frac{|\BF{n'}_{T, i}|}{T}  
    \left| 
        \frac{\frac{1}{T} \sum_{t = 1}^T \hat{\pi}_{t, i} - p_i}
             { \min\left\{ p_i, \exp(-2 \epsilon) \alpha \right\} } 
    \right| \\
&\leq \sum_{i = 1}^k \frac{|\BF{n'}_{T, i}|}{ \exp(-2 \epsilon) \alpha T^2}  
        \sum_{t = 1}^T \left| \hat{\pi}_{t, i} - p_i \right| \\
&\leq \sum_{i = 1}^k \frac{|\BF{n'}_{T, i}|}{ \exp(-2 \epsilon) \alpha T^2}  
        \sum_{t = 1}^T d^{\op{TV}}\left( \hat{\pi}_t, p_i \right) \\
&\leq \sum_{i = 1}^k \frac{|\BF{n'}_{T, i}|}{ \exp(-2 \epsilon) \alpha T^2}  
        \sum_{t = 1}^T d^{\op{TV}}\left( \hat{\pi}_t, p_i \right) \\
&= \sum_{i = 1}^k \frac{|\BF{n'}_{T, i}|}{ \exp(-2 \epsilon) \alpha T^2} \C{CPR}_T^{\op{TV}} \\
&\leq \frac{k \sqrt{2 T \log(1/\delta)}}{ \exp(-2 \epsilon) \alpha T^2}
    \left( T_0 + 8 \sigma_* \exp(\epsilon) \sqrt{\frac{\log(1/\delta) T}{\alpha}} \right),
\end{align*}
where we used Inequalities~\ref{eq:n_prime_azuma} and~\ref{eq:lem:ASE:policy_TV:1} in the last inequality.

Thus, using Inequalities~\ref{eq:lem:ASE:action_rKL:1} and~\ref{eq:lem:ASE:action_rKL:2}, when $\C{E}$ occurs, we have
\begin{align*}
\C{SAR}_T^{\op{r-KL}} - \frac{1}{T} \C{CPR}_T^{\op{r-KL}}
&\leq D_{\op{KL}} \left( \BF{q}_T \| \BF{q}_T - \frac{\BF{n'}_{T, i}}{T} \right)
  + \sum_{i = 1}^k \frac{\BF{n'}_{T, i}}{T} \log\left( \frac{\BF{q}_{T, i} - \frac{\BF{n'}_{T, i}}{T}}{p_i} \right) \\
&\leq \frac{\exp(2 \epsilon) k^2 \log(1/\delta)}{\alpha T}
  + \frac{k \sqrt{2 T \log(1/\delta)}}{ \exp(-2 \epsilon) \alpha T^2}
    \left( 8 \sigma_* \exp(\epsilon) \sqrt{\frac{\log(1/\delta) T}{\alpha}} + T_0 \right) \\
&= \frac{\exp(2 \epsilon) k^2 \log(1/\delta)}{\alpha T}
  + \frac{8 \sqrt{2} k \sigma_* \exp(3 \epsilon) \log(1/\delta)}{ \alpha^{3/2} T}
  + \frac{k \exp(2 \epsilon) \sqrt{2 \log(1/\delta)}}{ \alpha T^{3/2}} T_0.
\end{align*}

Therefore, using Lemma~\ref{lem:ASE:policy_rKL}, we have
\begin{align*}
\B{E} \left[ \C{SAR}_T^{\op{r-KL}} \right]
&= \B{P}(\C{E}) \B{E} \left[ \C{SAR}_T^{\op{r-KL}} | \C{E} \right] + \B{P}(\C{E}^c) \B{E} \left[ \C{SAR}_T^{\op{r-KL}} | \C{E}^c \right] \\
&\leq \B{E} \left[ \C{SAR}_T^{\op{r-KL}} | \C{E} \right] + \frac{2}{\alpha}\B{P}(\C{E}^c) \\
&\leq \B{E} \left[ \C{SAR}_T^{\op{r-KL}} | \C{E} \right] + \frac{4 k T \delta}{\alpha} \\
&\leq \frac{1}{T} \B{E}\left[ \C{CPR}_T^{\op{r-KL}} \mid \C{E} \right]
  + \frac{\exp(2 \epsilon) k^2 \log(1/\delta)}{\alpha T} \\
  &\qquad\qquad+ \frac{8 \sqrt{2} k \sigma_* \exp(3 \epsilon) \log(1/\delta)}{ \alpha^{3/2} T}
  + \frac{k \exp(2 \epsilon) \sqrt{2 \log(1/\delta)}}{ \alpha T^{3/2}} T_0
  + \frac{4kT\delta}{\alpha} \\
&\leq \frac{1}{T} \left( 
    \frac{32 \sigma_*^2 \exp(2 \epsilon) \log(1/\delta) \log(T)}{\alpha^2}
    + \frac{2 T_0}{\alpha} 
  \right)
  + \frac{\exp(2 \epsilon) k^2 \log(1/\delta)}{\alpha T} \\
  &\qquad\qquad+ \frac{8 \sqrt{2} k \sigma_* \exp(3 \epsilon) \log(1/\delta)}{ \alpha^{3/2} T}
  + \frac{k \exp(2 \epsilon) \sqrt{2 \log(1/\delta)}}{ \alpha T^{3/2}} T_0
  + \frac{4kT\delta}{\alpha}.
\qedhere
\end{align*}
\end{proof}

\begin{lemma}[Policy-level forward-KL regret]\label{lem:ASE:policy_fKL}
If we further assume that $\nu \in \C{E}_{\op{B}}^k(\sigma)$, $C = 0$ and $T_0 \leq t \leq T$, then we have
\begin{align*}
\B{E} \left[ \C{SPR}_t^{\op{f-KL}} \right]
&\leq \frac{32 \sigma^2 \exp(4 \epsilon) \log(1/\delta)}{\alpha^2 t}
  + \frac{4 \exp(2 \sigma) k T \delta}{\alpha}, \\
\B{E} \left[ \C{CPR}_T^{\op{f-KL}} \right]
&\leq \frac{32 \sigma^2 \exp(4 \epsilon) \log(1/\delta) \log(T)}{\alpha^2}
  + \frac{2 \exp(2 \sigma) T_0}{\alpha} 
  + \frac{4 \exp(2 \sigma) k T^2 \delta}{\alpha},
\end{align*}
where cumulative regret is defined as the sum from time-step $Mk + 1$, as discussed in Remark~\ref{rem:policy-fkl}.
Thus,
$\B{E} \left[ \C{SPR}_T^{\op{f-KL}} \right] = \tilde{O}(\exp(4 \epsilon) T^{-1})$
and
$\B{E} \left[ \C{CPR}_T^{\op{f-KL}} \right] = \tilde{O}(\exp(4 \epsilon))$.
\end{lemma}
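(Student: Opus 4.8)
The plan is to mirror the proof of Lemma~\ref{lem:ASE:policy_rKL} almost verbatim, replacing the reverse-KL branch of Theorem~\ref{thm:KL-vs-TV} with its forward-KL counterpart, and then dealing with the one genuine complication: for forward-KL the inverse-Pinsker constant $\frac{2}{\alpha'}$ is governed by $\alpha' = \min_i (\hat{\BF{p}}_t)_i$, a \emph{random} quantity, rather than the fixed $\alpha = \min_i p_i$. So the whole proof reduces to controlling $\min_i (\hat{\BF{p}}_t)_i$ from below, both on the good event $\C{E}$ and off it.

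First I would bound $\min_i (\hat{\BF{p}}_t)_i$ on $\C{E}$. Since $\C{E}$ implies $\C{E}_t$ and every arm is sampled at least $M$ times during exploration, we have $| \hat{\BF{r}}_{t,i} - r_i | \leq \epsilon$ for $t \geq Mk$, so Lemma~\ref{lem:bound-on-softmax} gives $(\hat{\BF{p}}_t)_i \geq p_i \exp(-2\epsilon) \geq \alpha \exp(-2\epsilon)$, hence $\alpha' \geq \alpha \exp(-2\epsilon)$. Feeding this into the forward-KL branch of Theorem~\ref{thm:KL-vs-TV} and then applying Lemma~\ref{lem:ASE:base_TV} gives, on $\C{E}$, the chain $\C{SPR}_t^{\op{f-KL}} \leq \frac{2 \exp(2\epsilon)}{\alpha} d^{\op{TV}}(\hat{\BF{p}}_t, p)^2 \leq \frac{32 \sigma^2 \exp(4\epsilon) \log(1/\delta)}{\alpha^2 t}$, which is exactly the target conditional bound. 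The extra factor $\exp(2\epsilon)$ coming from $1/\alpha'$ is precisely what turns the $\exp(2\epsilon)$ of the reverse-KL lemma into the $\exp(4\epsilon)$ here.

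The main obstacle, and the reason for the added hypothesis $\nu \in \C{E}_{\op{B}}^k(\sigma)$, is producing a \emph{finite} bound off $\C{E}$. In the reverse-KL proof the bound $D_{\op{KL}}(\hat{\pi}_t \| p) \leq \frac{2}{\alpha}$ holds unconditionally because $\alpha$ is a fixed feature of the problem; but $D_{\op{KL}}(p \| \hat{\BF{p}}_t)$ diverges if any coordinate of $\hat{\BF{p}}_t$ is too small, which is the pathology flagged in Remark~\ref{rem:policy-fkl}. Bounded noise resolves this deterministically: each $\hat{\BF{r}}_{t,i}$ is an average of observations lying within $\sigma$ of $r_i$, so $| \hat{\BF{r}}_{t,i} - r_i | \leq \sigma$ holds for \emph{every} realization, and Lemma~\ref{lem:bound-on-softmax} then gives $(\hat{\BF{p}}_t)_i \geq \alpha \exp(-2\sigma)$ always. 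With $d^{\op{TV}} \leq 1$ and inverse Pinsker this yields the universal bound $\C{SPR}_t^{\op{f-KL}} \leq \frac{2 \exp(2\sigma)}{\alpha}$, which plays the role that $\frac{2}{\alpha}$ plays in the reverse-KL argument.

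Finally I would assemble the pieces as in Lemma~\ref{lem:ASE:policy_rKL}. Decomposing $\B{E}[\C{SPR}_t^{\op{f-KL}}]$ over $\C{E}$ and $\C{E}^c$, using the conditional bound on $\C{E}$ and the universal constant $\frac{2 \exp(2\sigma)}{\alpha}$ weighted by $\B{P}(\C{E}^c) \leq 2kT\delta$, gives the stated simple-regret bound $\frac{32 \sigma^2 \exp(4\epsilon) \log(1/\delta)}{\alpha^2 t} + \frac{4 \exp(2\sigma) k T \delta}{\alpha}$. For the cumulative regret, which by Remark~\ref{rem:policy-fkl} is summed from $Mk+1$, I would split the sum at $T_0$: the at-most-$T_0$ early terms are each bounded by $\frac{2 \exp(2\sigma)}{\alpha}$, while on the tail $\sum_{t=T_0}^T \frac{1}{t} \leq \log T$, and the $\C{E}^c$ contribution is weighted by $T$. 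This produces the claimed cumulative bound. The asymptotic statements $\tilde{\C{O}}(\exp(4\epsilon) T^{-1})$ and $\tilde{\C{O}}(\exp(4\epsilon))$ then follow since $\delta = T^{-2}$ renders all $\delta$-terms lower order and $\log(1/\delta) = 2\log T$ only contributes a polylogarithmic factor.
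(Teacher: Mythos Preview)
Your proposal is correct and follows essentially the same approach as the paper's proof: both bound $\min_i \hat{\BF{p}}_{t,i} \geq \alpha \exp(-2\epsilon)$ on $\C{E}$ via Lemma~\ref{lem:bound-on-softmax}, apply the forward-KL branch of Theorem~\ref{thm:KL-vs-TV} together with Lemma~\ref{lem:ASE:base_TV}, use the bounded-noise assumption to obtain the deterministic fallback $\C{SPR}_t^{\op{f-KL}} \leq \frac{2\exp(2\sigma)}{\alpha}$ off $\C{E}$, and then assemble the simple and cumulative bounds exactly as in Lemma~\ref{lem:ASE:policy_rKL}. Your explanation of why the extra $\exp(2\epsilon)$ appears and why bounded noise is needed is spot on.
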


\begin{proof}
Recall that $\C{E}_{\op{B}}^k(\sigma) \subseteq \C{E}_{\op{SG}}^k(\sigma^2)$ by Hoeffding's lemma and therefore all the previous lemmas hold.
Using Theorem~\ref{thm:KL-vs-TV} and Lemma~\ref{lem:ASE:base_TV} we see that
\begin{align*}
\B{E} \left[ \C{SPR}_t^{\op{f-KL}} | \C{E} \right]
&= \B{E} \left[ \frac{2}{\min_i \hat{\BF{p}}_{t, i}} \left( \C{SPR}_t^{\op{TV}} \right)^2 | \C{E} \right] \\
&= \B{E} \left[ \frac{2}{\min_i \hat{\BF{p}}_{t, i}} \left( d^{\op{TV}}(\hat{\BF{p}}_t, p) \right)^2 | \C{E} \right] \\
&\leq \B{E} \left[ \frac{2}{\exp(-2 \epsilon) \alpha} \left( d^{\op{TV}}(\hat{\BF{p}}_t, p) \right)^2 | \C{E} \right] \\
&\leq \frac{32 \sigma^2 \exp(4 \epsilon) \log(1/\delta)}{\alpha^2 t}.
\end{align*}
Since $\nu \in \C{E}_{\op{B}}^k(\sigma)$, all observation are within $\sigma$ of the true reward for all arms.
Thus, since $C = 0$, we have
\begin{align*}
| \hat{\BF{r}}_{t, i} - r_i |
= \left| \frac{1}{\BF{n}_{t, i}} \sum_{t'=1}^t \BF{x}_{t'} \BF{1}_{\BF{a}_{t'} = i} - r_i \right|
\leq \sigma.
\end{align*}
Hence we may use Lemma~\ref{lem:bound-on-softmax} to see that $\hat{p}_{t', i} \geq p_i \exp(-2\sigma)$.
Therefore we have
\begin{align*}
\C{SPR}_{t'}^{\op{f-KL}} 
= D_{\op{KL}}(p \| \hat{\pi}_{t'}) 
\leq \frac{2}{\min_i \hat{\pi}_{t', i}} 
= \frac{2}{\min_i \hat{\BF{p}}_{t', i}} 
\leq \frac{2}{\exp(-2 \sigma) \alpha},
\end{align*}
for all $Mk + 1 \leq t' \leq T$.
Thus
\begin{align*}
\B{E} \left[ \C{SPR}_t^{\op{f-KL}} \right]
&= \B{P}(\C{E}) \B{E} \left[ \C{SPR}_t^{\op{f-KL}} | \C{E} \right] 
  + \B{P}(\C{E}^c) \B{E} \left[ \C{SPR}_t^{\op{f-KL}} | \C{E}^c \right] \\
&\leq \B{E} \left[ \C{SPR}_t^{\op{f-KL}} | \C{E} \right] + \frac{2 \exp(2 \sigma)}{\alpha}\B{P}(\C{E}^c) \\
&\leq \frac{32 \sigma^2 \exp(4 \epsilon) \log(1/\delta)}{\alpha^2 t}
  + \frac{4 \exp(2 \sigma) k T \delta}{\alpha}.
\end{align*}
To bound the cumulative action-level regret, we note that
\begin{align*}
\B{E} \left[ \C{CPR}_T^{\op{f-KL}} | \C{E} \right]
&= \sum_{t = Mk + 1}^T \B{E} \left[ \C{SPR}_t^{\op{f-KL}} | \C{E} \right] \\
&= \sum_{t = Mk + 1}^{T_0 - 1} \B{E} \left[ \C{SDR}_t^{\op{f-KL}} | \C{E} \right] + \sum_{t = T_0}^T \B{E} \left[ \C{SDR}_t^{\op{f-KL}} | \C{E} \right] \\
&\leq \frac{2 \exp(2 \sigma) T_0}{\alpha}
  + \sum_{t = T_0}^T \left( \frac{32 \sigma^2 \exp(4 \epsilon) \log(1/\delta)}{\alpha^2 t} \right) \\
&\leq \frac{2 \exp(2 \sigma) T_0}{\alpha} 
  + \frac{32 \sigma^2 \exp(4 \epsilon) \log(1/\delta) \log(T)}{\alpha^2}.
\end{align*}
Therefore
\begin{align*}
\B{E} \left[ \C{CPR}_T^{\op{f-KL}} \right]
&= \B{P}(\C{E}) \B{E} \left[ \C{CPR}_T^{\op{f-KL}} | \C{E} \right] 
  + \B{P}(\C{E}^c) \B{E} \left[ \C{CPR}_T^{\op{f-KL}} | \C{E}^c \right] \\
&\leq \B{E} \left[ \C{CPR}_T^{\op{f-KL}} | \C{E} \right] 
  + \frac{2 \exp(2 \sigma) T}{\alpha}\B{P}(\C{E}^c) \\
&\leq \frac{32 \sigma^2 \exp(4 \epsilon) \log(1/\delta) \log(T)}{\alpha^2}
  + \frac{2 \exp(2 \sigma) T_0}{\alpha} 
  + \frac{4 \exp(2 \sigma) k T^2 \delta}{\alpha}.
\qedhere
\end{align*}
\end{proof}

\begin{lemma}[Action-level forward-KL regret]\label{lem:ASE:action_fKL}
Under the assumptions of Lemma~\ref{lem:ASE:policy_fKL}, we have
\begin{align*}
\B{E} \left[ \C{SAR}_T^{\op{f-KL}} \right]
&= \tilde{O}(\exp(6 \epsilon) T^{-1}), 
\quad\t{ and }\quad
\B{E} \left[ \C{CAR}_T^{\op{f-KL}} \right]
= \tilde{O}(\exp(6 \epsilon)).
\end{align*}
\end{lemma}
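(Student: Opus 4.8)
The plan is to follow the structure of the proof of Lemma~\ref{lem:ASE:action_rKL}, substituting the forward-KL decomposition for the reverse-KL one. Abbreviate $\bar{\pi}_T := \frac{1}{T} \sum_{t=1}^{T} \hat{\pi}_t = \BF{q}_T - \frac{\BF{n'}_T}{T}$, and note that each $\bar{\pi}_{T,i} > 0$, so $d^{\op{f-KL}}(\bar{\pi}_T, p)$ is finite. Starting from the exact identity
\[
d^{\op{f-KL}}(\BF{q}_T, p) = d^{\op{f-KL}}(\bar{\pi}_T, p) - \sum_{i=1}^{k} p_i \log\!\left( 1 + \frac{\BF{n'}_{T,i}/T}{\bar{\pi}_{T,i}} \right),
\]
the task reduces to bounding the two terms on the right. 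I would deliberately avoid applying Theorem~\ref{thm:action-level-bounded-by-policy-level} in the form used for reverse-KL, because the full cumulative policy-level forward-KL regret is infinite over the exploration phase (the $\hat{\pi}_t$ are Dirac there); the identity above circumvents this, since $\bar{\pi}_T$ is a genuine average with full support.

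\textbf{First term.} Writing $\bar{\pi}_T = \frac{Mk}{T} u + \frac{T - Mk}{T} \bar{\pi}_T^{\op{post}}$, with $u$ uniform and $\bar{\pi}_T^{\op{post}}$ the average of the post-exploration policies, convexity of $q \mapsto D_{\op{KL}}(p \| q)$ gives $d^{\op{f-KL}}(\bar{\pi}_T, p) \leq \frac{Mk}{T} \log k + \frac{1}{T} \C{CPR}_T^{\op{f-KL}}$. The first summand is $\tilde{O}(T^{-1})$ and, by Lemma~\ref{lem:ASE:policy_fKL}, the second is $\tilde{O}(\exp(4\epsilon) T^{-1})$.

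\textbf{Second term (the crux).} On the event $\C{E}$, Inequalities~\ref{eq:lem:ASE:base_TV:1}--\ref{eq:lem:ASE:base_TV:2} give $\bar{\pi}_{T,i} \geq \exp(-2\epsilon)\alpha$ and Inequality~\ref{eq:n_prime_azuma} gives $|\BF{n'}_{T,i}| \leq \sqrt{2T \log(1/\delta)}$, so each $w_i := \frac{\BF{n'}_{T,i}/T}{\bar{\pi}_{T,i}}$ satisfies $|w_i| = \tilde{O}(\exp(2\epsilon) T^{-1/2})$. The naive estimate $|\log(1+w_i)| \leq 2|w_i|$ only yields $\tilde{O}(T^{-1/2})$, which is too weak. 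The sharp $T^{-1}$ rate comes from the exact cancellation
\[
\sum_{i=1}^{k} \BF{n'}_{T,i} = \sum_{i=1}^{k} \BF{n}_{T,i} - \sum_{t=1}^{T} \sum_{i=1}^{k} \hat{\pi}_{t,i} = T - T = 0.
\]
Expanding $\log(1 + w_i) = w_i - O(w_i^2)$ and substituting $\frac{p_i}{\bar{\pi}_{T,i}} = 1 + \frac{p_i - \bar{\pi}_{T,i}}{\bar{\pi}_{T,i}}$, the first-order part becomes $-\frac{1}{T} \sum_i \BF{n'}_{T,i} - \frac{1}{T} \sum_i \frac{p_i - \bar{\pi}_{T,i}}{\bar{\pi}_{T,i}} \BF{n'}_{T,i}$; the first sum vanishes, and the second is controlled using $|p_i - \bar{\pi}_{T,i}| \leq \frac{Mk}{T} + \frac{2}{T} \C{CPR}_T^{\op{TV}} = \tilde{O}(\exp(4\epsilon) T^{-1/2})$ (Lemma~\ref{lem:ASE:policy_TV}) together with $|\BF{n'}_{T,i}| = \tilde{O}(\sqrt{T})$, yielding $\tilde{O}(\exp(6\epsilon) T^{-1})$. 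The quadratic remainder is $\tilde{O}(\exp(4\epsilon) T^{-1})$. This is precisely where the $\exp(6\epsilon)$ factor enters, matching the reverse-KL bound.

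\textbf{Assembly.} As in the previous lemmas, I would combine these on $\C{E}$, and on $\C{E}^c$ use the deterministic bound $d^{\op{f-KL}}(\BF{q}_t, p) \leq \log t$, valid for $t \geq k$ because the first $k$ exploration steps sample every arm so $\BF{q}_{t,i} \geq 1/t$; its contribution $\B{P}(\C{E}^c) \log T = \tilde{O}(T^{-1})$ is negligible since $\delta = T^{-2}$. This gives $\B{E}[\C{SAR}_T^{\op{f-KL}}] = \tilde{O}(\exp(6\epsilon) T^{-1})$, and summing $\tilde{O}(\exp(6\epsilon) t^{-1})$ over $t$ (with the early steps $t < T_0$ handled by the $\log t$ bound) produces the harmonic-sum factor and hence $\B{E}[\C{CAR}_T^{\op{f-KL}}] = \tilde{O}(\exp(6\epsilon))$. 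The main obstacle is the first-order cancellation: obtaining the $T^{-1}$ rate rather than $T^{-1/2}$ hinges entirely on recognizing $\sum_i \BF{n'}_{T,i} = 0$ and separating the first- and second-order contributions of the logarithm cleanly.
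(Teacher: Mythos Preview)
Your proposal is correct and supplies precisely the details the paper omits: its own proof consists solely of the sentence ``The proof is similar to the previous cases.'' Your identity $d^{\op{f-KL}}(\BF{q}_T,p)=d^{\op{f-KL}}(\bar{\pi}_T,p)-\sum_i p_i\log(1+w_i)$, the convexity splitting over the exploration phase (consistent with Remark~\ref{rem:policy-fkl}), the use of $\sum_i \BF{n}'_{T,i}=0$ to kill the first-order part and recover the $T^{-1}$ rate, and the deterministic $\log t$ bound on $\C{E}^c$ are all sound and faithfully mirror the template of Lemma~\ref{lem:ASE:action_rKL}.
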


\begin{proof}
The proof is similar to the previous cases.
\end{proof}

\section{PROOF OF THEOREM~\ref{thm:MAS_to_MAB}}\label{apx:MAS_to_MAB}

\begin{proof}
For $0 < \beta < \infty$, let $Z^\beta = \sum_{j = 1}^k e^{\beta r_j}$.
It follows that 
$p^\beta = \op{softmax}(\beta r) = \frac{e^{\beta r_i}}{Z^\beta}$.
Therefore, if we define $r_* = \max_{1 \leq i \leq k} r_i$ and $p^\beta_* = \max_{1 \leq i \leq k} p^\beta_i$, then we have
$\log(p^\beta_*/p^\beta_i) = \beta (r_* - r_i)$
for $1 \leq i \leq k$.
Hence
\begin{align*}
d^{\op{r-KL}}(\BF{q}_T, p^\beta)
&= D_{\op{KL}}(\BF{q}_T \| p^\beta) 
= \sum_{i = 1}^k \BF{q}_{T, i} \log\left( \frac{ \BF{q}_{T, i} }{ p^\beta_i } \right) \\
&= \sum_{i = 1}^k \BF{q}_{T, i} \log\left( \frac{p^\beta_*}{p^\beta_i} \right) 
  - \log\left( p^\beta_* \right)
  + \sum_{i = 1}^k \BF{q}_{T, i} \log\left( \BF{q}_{T, i} \right) \\
&= \frac{\beta}{T} \sum_{i = 1}^k \BF{n}_{T, i} (r_* - r_i) - \log(p^\beta_*) - H(\BF{q}_T) \\
&= \frac{\beta}{T} \C{R}_T^{\op{MAB}} - \log(p^\beta_*) - H(\BF{q}_T).
\end{align*}
Similarly
\begin{align*}
d^{\op{r-KL}}(\hat{\pi}_t, p^\beta)
&= D_{\op{KL}}(\hat{\pi}_t \| p^\beta) 
= \sum_{i = 1}^k \hat{\pi}_{t, i} \log\left( \frac{ \hat{\pi}_{t, i} }{ p^\beta_i } \right) \\
&= \sum_{i = 1}^k \hat{\pi}_{t, i} \log\left( \frac{p^\beta_*}{p^\beta_i} \right) 
  - \log\left( p^\beta_* \right) 
  + \sum_{i = 1}^k \hat{\pi}_{t, i} \log\left( \hat{\pi}_{t, i} \right) \\
&= \beta \sum_{i = 1}^k \hat{\pi}_{t, i} (r_* - r_i) - \log(p^\beta_*) - H(\hat{\pi}_t) \\
&= \beta \B{E} [ \C{SR}_t^{\op{MAB}} | \BF{h}_t ] - \log(p^\beta_*) - H(\hat{\pi}_t).
\end{align*}
Thus
\begin{align*}
\B{E} [ d^{\op{r-KL}}(\hat{\pi}_t, p^\beta) ]
&= \B{E} [ \beta \C{SR}_t^{\op{MAB}} - H(\hat{\pi}_t) ] - \log(p^\beta_*),
\end{align*}
which implies that
\begin{align*}
\B{E} \left[ \sum_{t = 1}^T d^{\op{r-KL}}(\hat{\pi}_t, p^\beta) \right]
&= \sum_{t = 1}^T \B{E} \left[ \beta \C{SR}_t^{\op{MAB}} - H(\hat{\pi}_t) \right] - T \log(p^\beta_*).
\end{align*}

The entropy terms $H(\BF{q}_T)$ and $H(\hat{\pi}_T)$ are always bounded by $0$ and $\log(k)$ and $\log(p^\beta_*) = \max_i \log(p^\beta_i) \geq \log(1/k)$ is bounded between $-\log(k)$ and $0$.
It follows that 
\begin{align*}
\lim_{\beta \to \infty} \frac{T}{\beta} d^{\op{r-KL}}(\BF{q}_T, p^\beta)
&= \C{R}_T^{\op{MAB}}, \\
\lim_{\beta \to \infty} \frac{1}{\beta} \B{E}[ d^{\op{r-KL}}(\hat{\pi}_t, p^\beta) ]
&= \B{E}[ \C{SR}_t^{\op{MAB}} ], \\
\lim_{\beta \to \infty} \frac{1}{\beta} \sum_{t = 1}^T \B{E}[ d^{\op{r-KL}}(\hat{\pi}_t, p^\beta) ]
&= \B{E}[ \C{R}_T^{\op{MAB}} ].
\qedhere
\end{align*}
\end{proof}

\section{MINIMIZING \texorpdfstring{$\beta$}{BETA}-REGRET}\label{apx:alg-beta}

If $\nu$ is an environment and $c \in \B{R}$, we define $c \nu$ to be the environment where each observation and reward is $c$ times the corresponding value in $\nu$.
On the other hand, for any policy $\pi$, we define $\pi^c$ to be the policy that first multiplies any received observation by $c$ and then applies $\pi$.

\begin{theorem}\label{thm:beta-regret}
For any policy $\pi$, environment $\nu$, and $0 < \beta < \infty$, we have
\begin{align*}
\C{R}_T^\beta(\pi^\beta, \nu) 
= \frac{1}{\beta} \C{R}_T^1(\pi, \beta \nu).
\end{align*}
\end{theorem}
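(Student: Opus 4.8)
The plan is to exhibit an exact coupling between running the rescaled policy $\pi^\beta$ against $\nu$ and running $\pi$ against the rescaled environment $\beta\nu$, under which the entire action--observation trajectory is identical; the factor $1/\beta$ will then come purely from the definition of $\beta$-regret.

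First I would fix the policy randomness $\omega^\pi$ and a realization $\omega^\nu \in \B{R}^{T \times k}$ of $\nu$, and pair it with the realization $\beta \omega^\nu$ of $\beta\nu$. This is the natural coupling, since by definition $\beta\nu$ multiplies every observation by $\beta$, so its realizations are exactly the pushforward of those of $\nu$ under $x \mapsto \beta x$. I would then prove by induction on $t$ that the two runs produce the same action sequence and feed identical inputs to the underlying policy $\pi$. Indeed, if the histories agree up to time $t-1$, then $\pi$ --- using the same $\omega^\pi$ --- selects the same action $\BF{a}_t = \BF{a}'_t$; the input that $\pi^\beta$ passes to $\pi$ in the first run is $\beta (\omega^\nu)_{t, \BF{a}_t}$, while in the second run $\pi$ directly observes $(\beta \omega^\nu)_{t, \BF{a}'_t} = \beta (\omega^\nu)_{t, \BF{a}_t}$, so the inputs coincide and the induction closes. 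Consequently $\BF{n}_{T, i}$, $\BF{q}_T$, and each $\hat{\pi}_t$ --- being deterministic functions of the shared history --- are identical across the two runs.

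Next I would match the two target distributions and track the explicit prefactor. The reward of $\beta\nu$ is $\B{E}[\beta P_i] = \beta r_i$, so the target used by the $1$-regret against $\beta\nu$ is $\op{softmax}(1 \cdot \beta r) = \op{softmax}(\beta r) = p^\beta$, which is exactly the target used by the $\beta$-regret against $\nu$. Since every $\beta$-regret in Equation~\ref{eqn:beta-regret} has the form $\frac{(\,\cdot\,)}{\beta}\, d^{\op{r-KL}}(f, p^\beta)$ for the appropriate performance distribution $f \in \{\BF{q}_T, \hat{\pi}_t\}$, whereas the corresponding $1$-regret against $\beta\nu$ has the same shape with $\beta$ replaced by $1$, the same $f$ (by the coupling) and the same target $p^\beta$, the two expressions differ exactly by the scalar factor $1/\beta$. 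This gives $\C{R}_T^\beta(\pi^\beta, \nu) = \frac{1}{\beta} \C{R}_T^1(\pi, \beta\nu)$ pathwise under the coupling, and taking expectations (the coupling is measure-preserving) yields the same identity in expectation.

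The main obstacle is not any hard estimate but the bookkeeping of the coupling: one must verify carefully that scaling the observations inside the policy is \emph{exactly} undone by scaling the environment, so that the only surviving discrepancy between the two problems is the inverse-temperature that enters the target $p^\beta$ and the explicit $1/\beta$ normalization. Once the induction establishing identical trajectories is in place, matching targets and prefactors is routine.
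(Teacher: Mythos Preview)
Your proposal is correct and follows the same approach as the paper: the paper's proof simply states that ``the game played between $\pi^\beta$ and $\nu$ is identical to one played between $\pi$ and $\beta\nu$'' and then accounts for the explicit $1/\beta$ in the definition of $\beta$-regret. Your coupling-and-induction argument is precisely a rigorous version of that sentence, and your matching of targets ($\op{softmax}(\beta r) = p^\beta$ in both problems) and prefactors is exactly the remaining bookkeeping the paper alludes to.
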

To see why this holds, we simply note that the game played between $\pi^\beta$ and $\nu$ is identical to one played between $\pi$ and $\beta \nu$.
We only need to divide by $\beta$ to account for the effect of division by $\beta$ in the definition of $\beta$-regret.

Using this theorem, we see that the regret bounds in Theorem~\ref{thm:AS} could be used to obtain $\beta$-regret bounds for $\pi^\beta$ for all $0 < \beta < \infty$.
Thus demonstrating that $\pi^\beta$ is optimal for all such $\beta$.

At this point, we may see a contradiction by putting together the fact that (1) sampling does not require exploration and (2)
sampling from the zero temperature limit of a distribution corresponds to optimization; therefore one should be able to optimize without exploration, a conclusion that contradicts 
the fact that exploration is necessary for multi-armed bandits. 
This apparent contradiction is resolved by noting that optimality is measured only in terms of time and other constants are ignored.
When the temperature goes to zero, i.e., $\beta$ goes to infinity, the role of these constants slowly grows and in the limit, they dominate. 
Thus resulting in linear regret of Algorithm~\ref{alg:AS} in the bandit setting.

To make this claim more precise, first, we note that $\pi^\infty$ is effectively Explore-Then-Commit; see \cref{sec:related:MAB}.
The reason for this claim is that $\pi^\beta$ is equivalent to running $\pi$, but replacing the line $\hat{p}_{t+1} \gets \op{softmax}(\hat{r}_t)$ with $\hat{p}_{t+1} \gets \op{softmax}(\beta \hat{r}_t)$.
In the limit of $\beta \to \infty$, the expression $\op{softmax}(\beta \hat{r}_t)$ tends to the distribution that has equal weights on arms with $\hat{r}_{t, i} = \op{max}_j \hat{r}_{t, j}$ and zero weight elsewhere.
Thus $\pi^\beta$, in some sense, tends to Explore-Then-Commit.
\footnote{
In ETC algorithm, the estimates for arms are not updated after the exploration phase.
This is not the case in this limiting algorithm, where the estimated means will be updated at every time-step.
Thus, it would be more accurate to say that the limiting algorithm is a small variant of the classical ETC algorithm.
However, this variation does not have a significant effect on the performance of the algorithm.
Specifically, it is trivial to see that just like the classical ETC, without significant exploration, this algorithm still suffers a linear regret.
}
Now, if we take the limit of both sides in Theorem~\ref{thm:beta-regret}, we see that 
\begin{align*}
\C{R}^{\op{MAB}}_T(\op{ETC}, \nu)
= \lim_{\beta \to \infty} \C{R}^\beta_T(\pi^\beta, \nu) 
= \lim_{\beta \to \infty} \frac{1}{\beta} \C{R}^1_T(\pi, \beta \nu).
\end{align*}
Note that this is not a rigorous mathematical statement as we have not formally defined the notion of limit for algorithms.
The expression of regret bound in Lemma~\ref{lem:ASE:action_rKL} contains two terms that are affected by replacing $\nu$ with $\beta \nu$.
The variance $\sigma^2$ gets multiplied by $\beta^2$ and the term $\alpha = \min_i p_i^\beta$ goes to zero exponentially fast with $\beta$.
Thus we see that, even though the dependence of the regret bound on time is small, its dependence on $\beta$ results in vacuous regret bound in the multi-armed bandit limit.

\section{EXPERIMENTS}\label{apx:experiments}

In Figure~\ref{fig:apx:action}, we plot action-level regret of GAS, GASW (with logarithmic exploration), DAISEE, and KL-UCB.
In the first row we plot simple action-level regret multiplied by time-step to make the plots easier to see.
The second row demonstrates the cumulative action-level regret.
From left to right, the distances are reverse-KL, forward-KL and total variation.
The initial increase in regret in the orange lines corresponds to the exploration phase (and its effect on action history).

In Figure~\ref{fig:apx:policy}, we plot policy-level regret of the same algorithms.
Similar to the previous figure, the first row we plot simple policy-level regret multiplied by time-step to make the plots easier to see.
Here the effect of the exploration phase is more pronounced.
Thus, for simple regret that does not rely on history and only depends on the current time-step, we assign zero to the GASW algorithm during the exploration phase.
We include the effect of the exploration phase in the cumulative regret.

Following the discussion in Remark~\ref{rem:policy-fkl}, we have assigned zero regret to the exploration phase for forward-KL policy-level regret, corresponding to the second column in Figure~\ref{fig:apx:policy}.
This hides the effect of the exploration phase for this choice of statistical distance.

In these experiments, there are 10 arms, the environments have unit normal noise and the reward are sampled from $[0, 1]$.
The average over 10 runs are shown and one standard error is highlighted.

\begin{figure*}
\centering
\hbox{
\includegraphics[width=\textwidth/3]{plots/simple_action-level_reverse-KL_regret.pdf}

\includegraphics[width=\textwidth/3]{plots/simple_action-level_forward-KL_regret.pdf}

\includegraphics[width=\textwidth/3]{plots/simple_action-level_total-variation_regret.pdf}
}
\hbox{
\includegraphics[width=\textwidth/3]{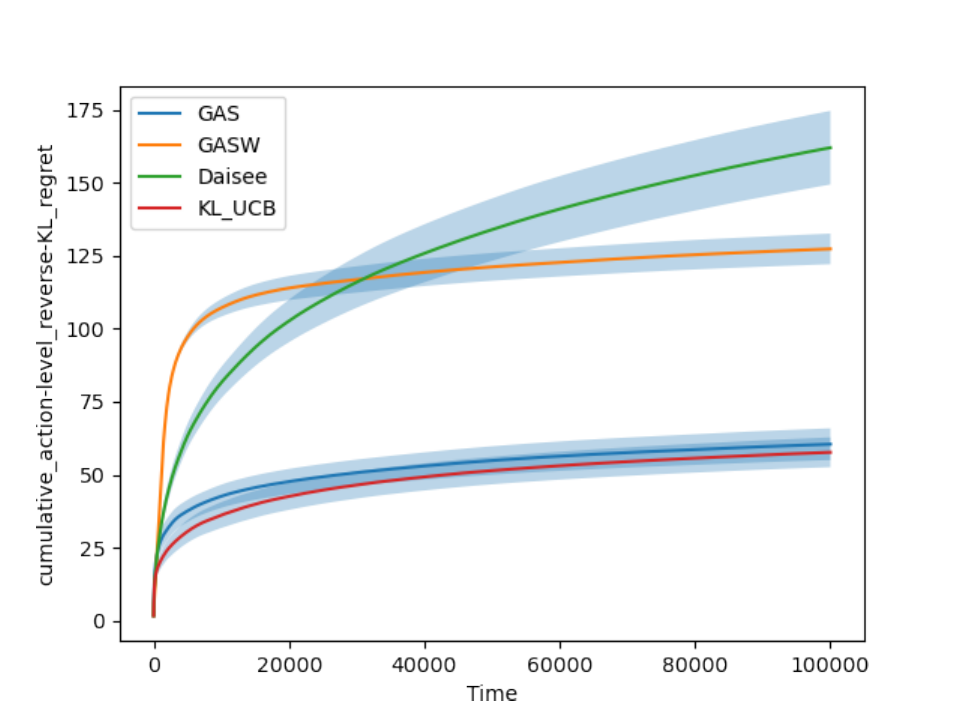}

\includegraphics[width=\textwidth/3]{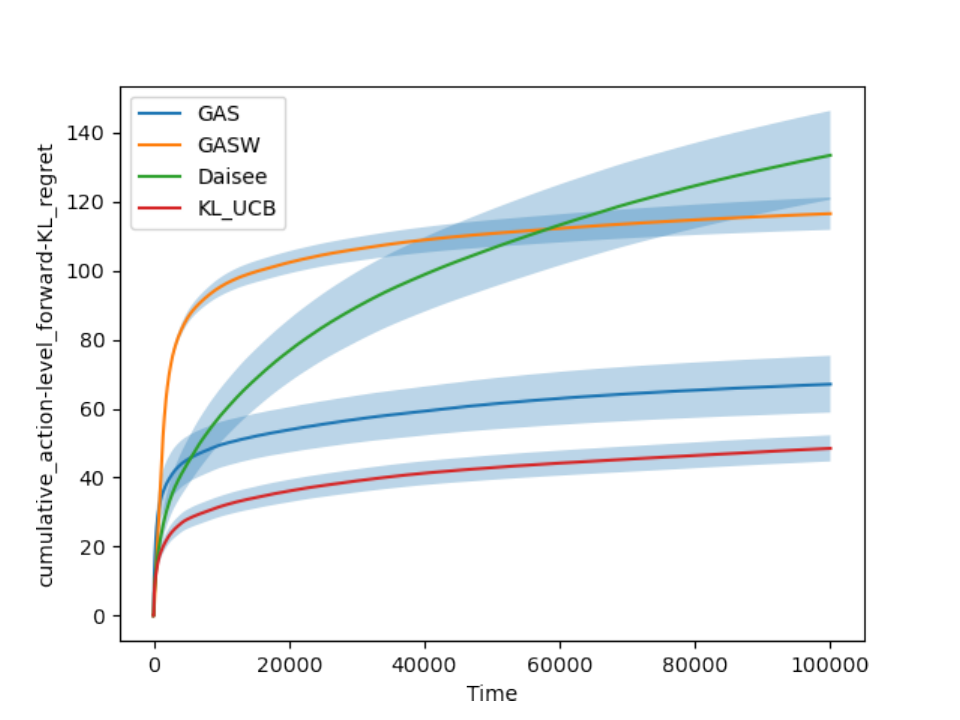}

\includegraphics[width=\textwidth/3]{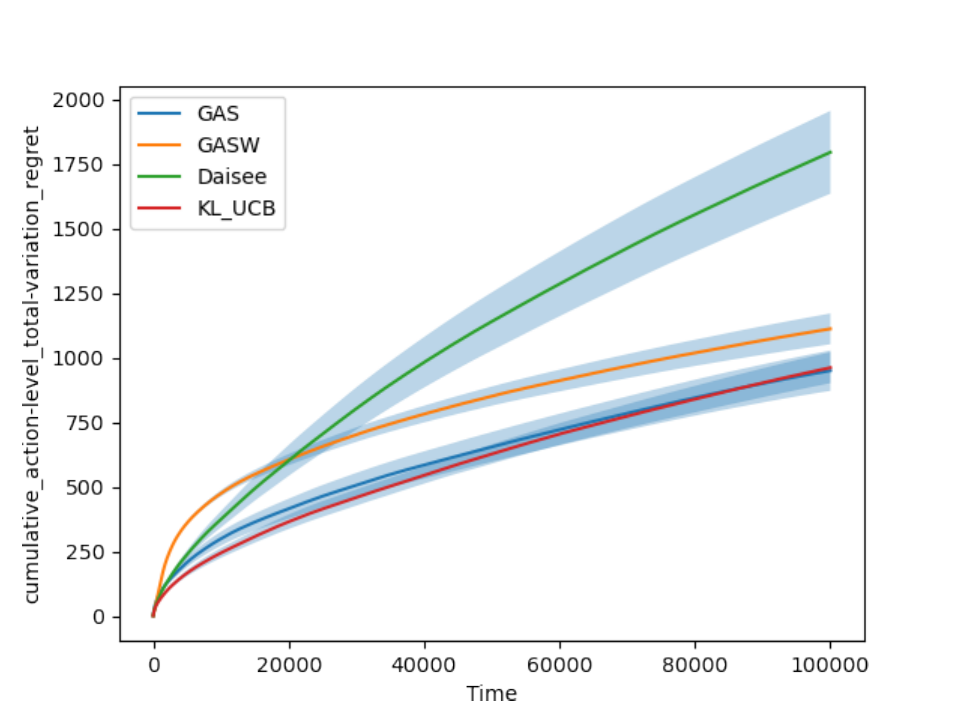}
}
\caption{
Action-level regret
}
\label{fig:apx:action}
\vspace{-0.5cm}
\end{figure*}

\begin{figure*}
\centering
\hbox{
\includegraphics[width=\textwidth/3]{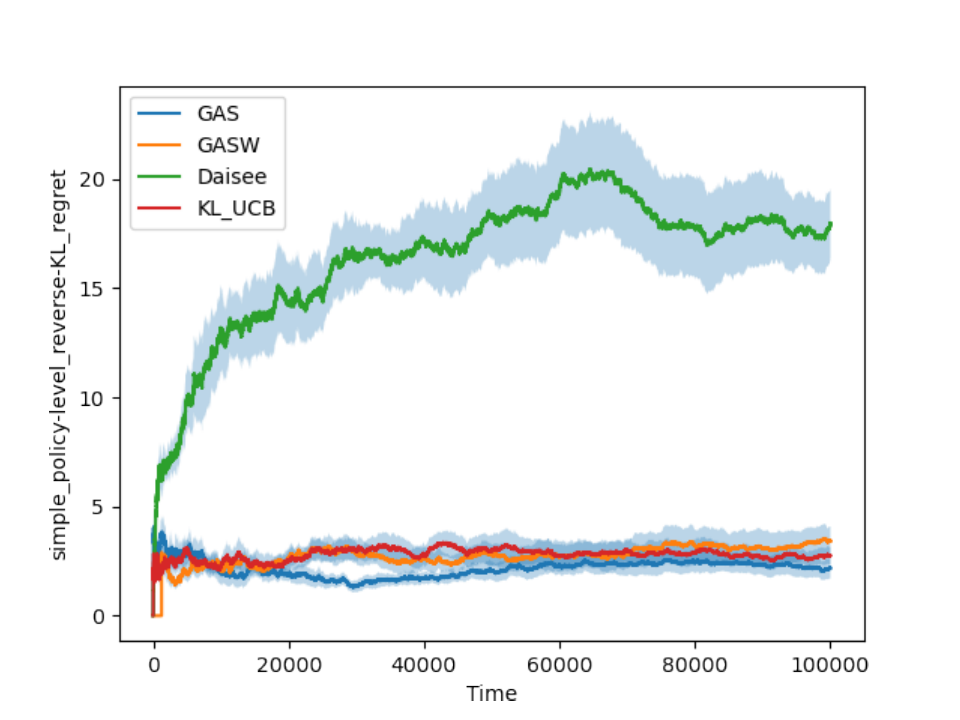}

\includegraphics[width=\textwidth/3]{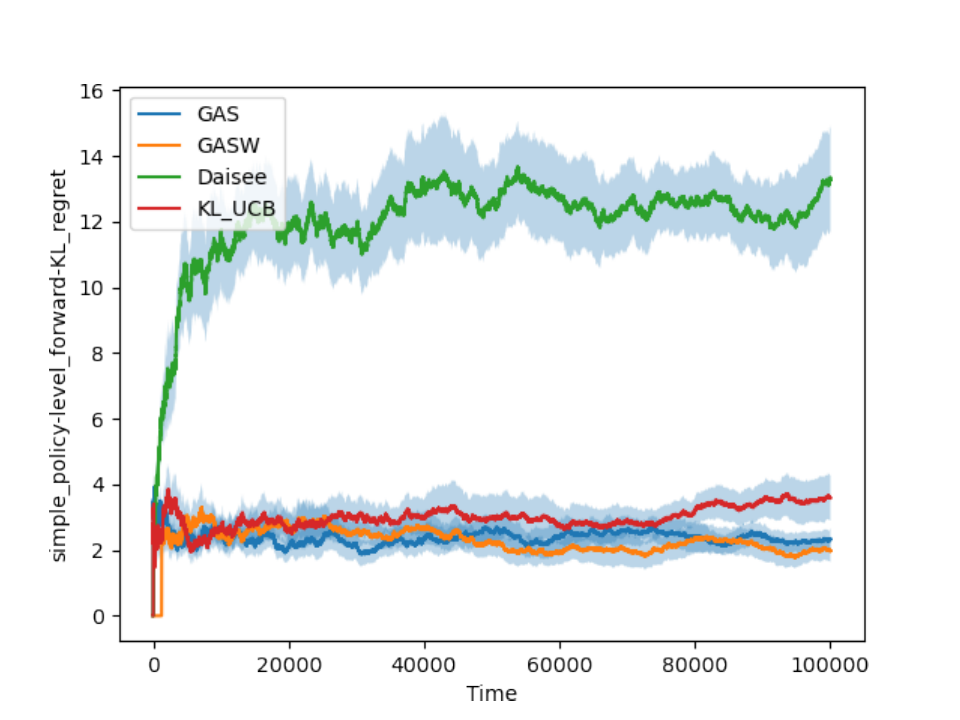}

\includegraphics[width=\textwidth/3]{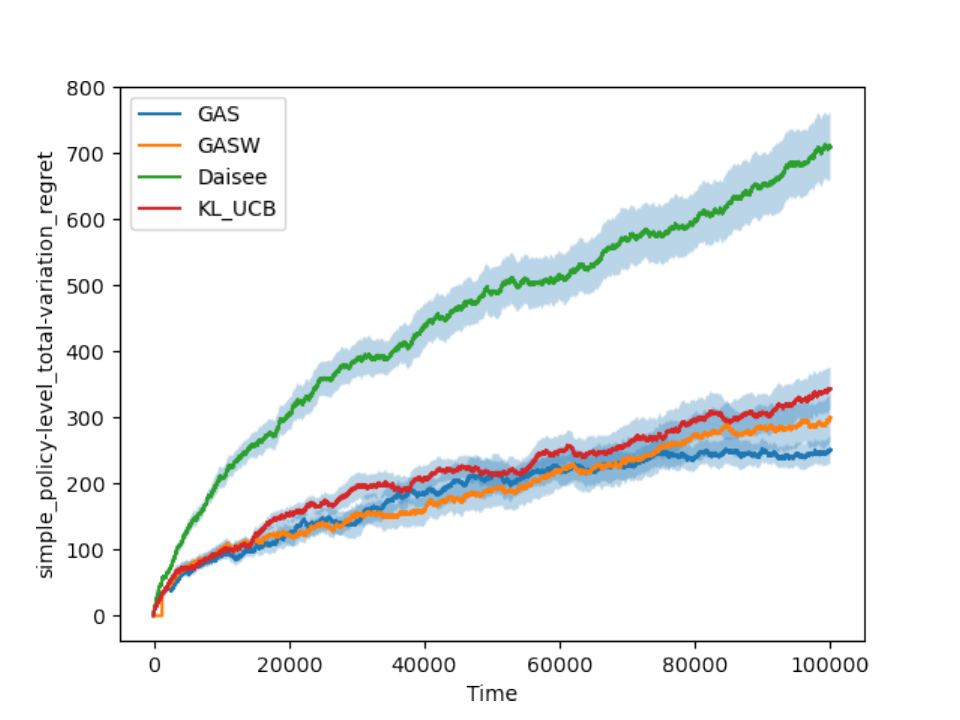}
}
\hbox{
\includegraphics[width=\textwidth/3]{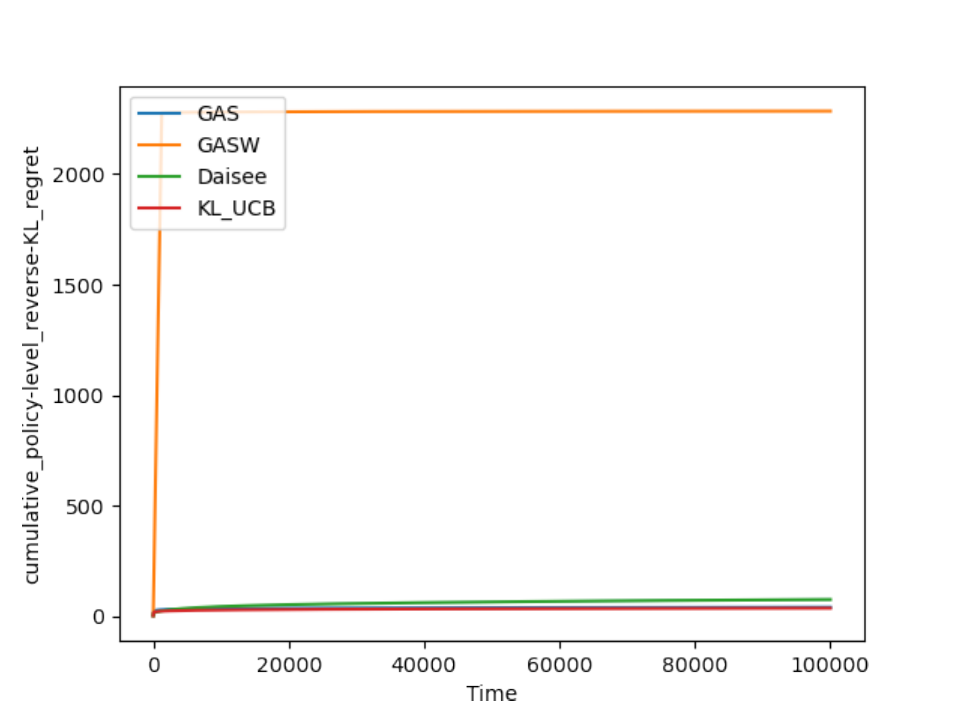}

\includegraphics[width=\textwidth/3]{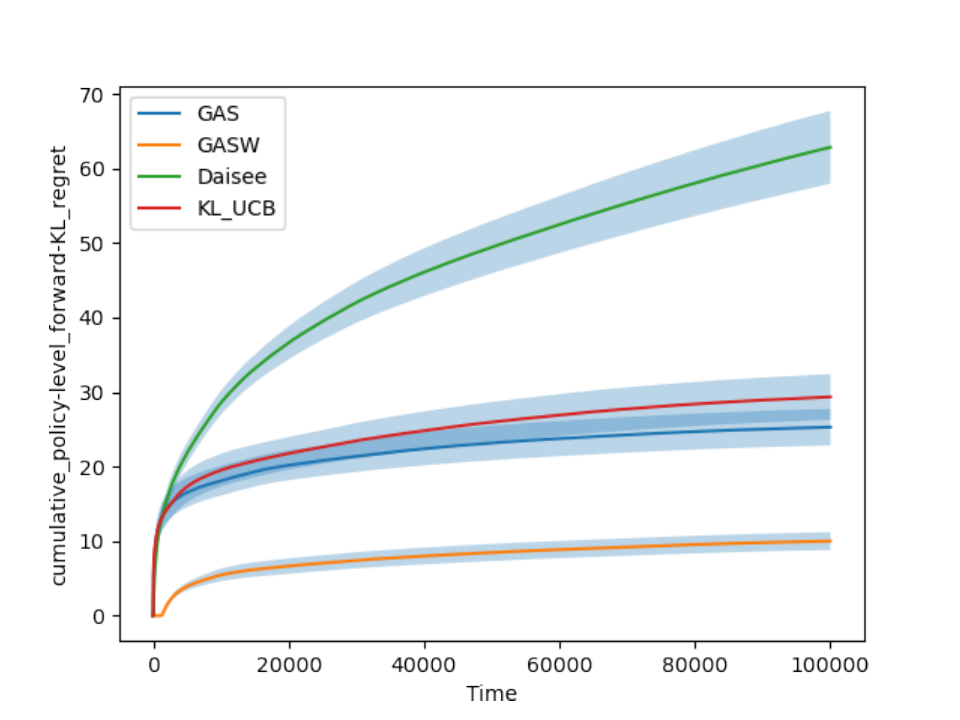}

\includegraphics[width=\textwidth/3]{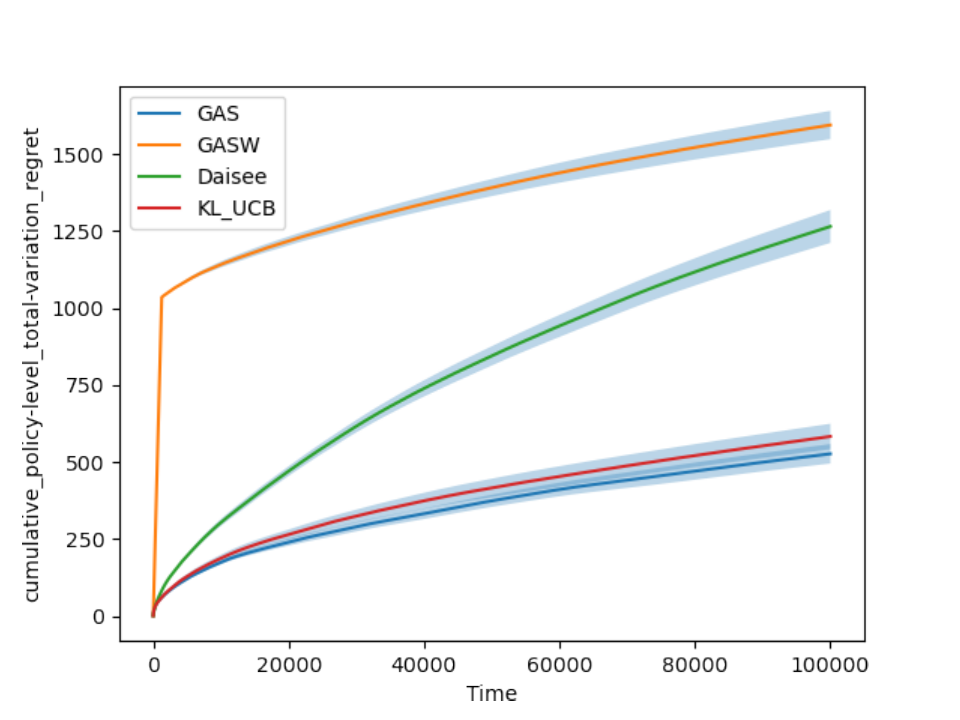}
}
\caption{
Policy-level regret
}
\label{fig:apx:policy}
\vspace{-0.5cm}
\end{figure*}

\end{document}